\documentclass[lettersize,journal]{IEEEtran}
\usepackage{amsmath,amsfonts}
\usepackage{algorithmic}
\usepackage{algorithm}
\usepackage{array}
\usepackage{textcomp}
\usepackage{stfloats}
\usepackage{url}
\usepackage{verbatim}
\usepackage{graphicx}
\usepackage{cite}
\usepackage{subfigure}
\usepackage{dsfont}
\usepackage{wrapfig}
 \usepackage{booktabs}
 \usepackage{amssymb}
\newtheorem{lemma}{Lemma}
\newtheorem{theorem}{Theorem}
\newtheorem{proof}{Proof}

\begin{document}

\title{Advantage-based Temporal Attack in Reinforcement Learning}

\author{Shenghong He,~\IEEEmembership{Student Member, IEEE}, Chao Yu,~\IEEEmembership{Member, IEEE,} Danying Mo, Yucong Zhang, Yinqi Wei 
\thanks{ Shenghong He, Chao Yu, Danying Mo and Yucong Zhang are with the School of Computer
 Science and Engineering, Sun Yat-sen University, Guangzhou 510006, China
 (e-mail: yuchao3@mail.sysu.edu.cn)}
\thanks{ Yinqi Wei is with the Nanyang Technological University, Singapore.}}

\markboth{Journal of \LaTeX\ Class Files,~Vol.~14, No.~8, August~2021}%
{Shell \MakeLowercase{\textit{et al.}}: A Sample Article Using IEEEtran.cls for IEEE Journals}


\maketitle

\begin{abstract}
Extensive research demonstrates that Deep Reinforcement Learning (DRL) models are susceptible to adversarially constructed inputs (i.e., adversarial examples), which can mislead the agent to take suboptimal or unsafe actions.
Recent methods improve attack effectiveness by leveraging future rewards to guide adversarial perturbation generation over sequential time steps (i.e., reward-based attacks).
However, these methods are unable to capture dependencies between different time steps in the perturbation generation process, resulting in a weak temporal correlation between the current perturbation and previous perturbations.
In this paper, we propose a novel method called Advantage-based Adversarial Transformer (AAT), which can generate adversarial examples with stronger temporal correlations (i.e., time-correlated adversarial examples) to improve the attack performance.
AAT employs a multi-scale causal self-attention (MSCSA) mechanism to dynamically capture dependencies between historical information from different time periods and the current state, thus enhancing the correlation between the current perturbation and the previous perturbation.
Moreover, AAT introduces a weighted advantage mechanism, which quantifies the effectiveness of a perturbation in a given state and guides the generation process toward high-performance adversarial examples by sampling high-advantage regions.
Extensive experiments demonstrate that the performance of AAT matches or surpasses mainstream adversarial attack baselines on Atari, DeepMind Control Suite and Google football tasks.
\end{abstract}

\begin{IEEEkeywords}
Adversarial Attacks,  Deep Reinforcement Learning, Sequence Modeling, Model Security, Black-box Attack.
\end{IEEEkeywords}

\section{Introduction}

As Deep Reinforcement Learning (DRL) technology advances in various domains, including computer games~\cite{DBLP:HoHLLWW23, DBLP:LinLSYF022}, large-scale interactive models (e.g., ChatGPT)~\cite{DBLP:Ouyang0JAWMZASR22, DBLP:LiuYFJHN23}, and security analysis~\cite{DBLP:ChuG24, DBLP:FuLWYFWDWX25}, the safety of DRL models has become a critical concern~\cite{wali2025explainable,DBLP:HuangCF23}.
A prominent threat in this concern is adversarial attacks~\cite{lv2024safe,DBLP:QiaobenYZSZZ24}, which manipulate the actions of the agent by injecting imperceptible perturbations into input data, thus leading to instability of the DRL model.

Currently, adversarial attack methods in DRL can be categorized into gradient-based and reward-based attack methods according to the generation technique of adversarial examples.
Gradient-based attack methods~\cite{DBLP:HuangPGDA17, DBLP:LinHLS0S17} employ gradient optimization techniques (e.g., FGSM~\cite{DBLP:GoodfellowSS14, DBLP:SzegedyZSBEGF13}) to generate perturbations at each time step, and thus mislead the decision-making of the agent.
However, since these methods focus solely on the current time step, they fail to account for the long-term objective, limiting their ability to undermine the objective of the agent.

To address this issue, recent works~\cite{DBLP:OikarinenZMDW21,DBLP:abs-2305-17342,schott2024robust, DBLP:SunZX0ZCL20, DBLP:YuS22} use reinforcement learning methods to generate adversarial examples by leveraging current state and future rewards (i.e., reward-based attack methods).
However, existing reward-based methods cannot capture the dependence between the current state and past information, which weakens the correlation between the current perturbation and the previous perturbation, and ultimately fails to effectively reduce the cumulative reward of the agent.

Sequence modeling~\cite{DBLP:DT,DBLP:ZhengZG22} can be employed to capture temporal dependencies and leverage future reward sums to regulate adversarial examples.
However, this approach exhibits two major limitations in adversarial attack scenarios.
The first limitation is that while sequence modeling captures global temporal dependencies, it fails to distinguish the varying dependency structures that exist across different time scales (i.e., short-term and long-term dependencies), which are both crucial for generating adversarial perturbations in sequential decision-making. 
Specifically, short-term dependencies enable the attack strategy to generate perturbations based on local state changes to effectively disrupt the immediate decision-making of the agent.
Conversely, long-term dependencies ensure that these perturbations coherently serve the attack's overarching objective: minimizing the cumulative reward of the agent.

The second limitation is that the sequence modeling approach relies on the cumulative reward of the expert trajectories (i.e., examples with high attack performance) to iteratively refine perturbations across time steps.
In scenarios where expert trajectories are scarce, this approach cannot learn the optimal perturbations from different suboptimal trajectories (i.e., non-expert trajectories) and utilize these perturbations to generate adversarial examples with high attack performance.
The lack of high-performance adversarial examples leads to an inability to effectively reveal the vulnerability of decision boundaries that cannot be detected under conventional tests, which may cause irreparable damage to real-world applications (e.g., causing traffic accidents).


In this paper, we propose a novel method called Advantage-based adversarial transformer (AAT) to generate time-correlated adversarial examples with high attack performance.
AAT comprises two key mechanisms: a multi-scale causal self-attention (MSCSA) mechanism and a weighted advantage mechanism.
The MSCSA mechanism models historical fragments of different lengths (i.e., short-term and long-term perturbation trends) separately, and conducts historical information aggregations between each scale to capture multi-grained timing characteristics, which are then used to generate adversarial examples.
By doing so, MSCSA enables the generated perturbations to precisely disrupt the single-step decision of the agent and effectively reduce the cumulative rewards of the agent, thereby enhancing the overall stability and effectiveness of the attack.
To reduce reliance on expert data, the weighted advantage mechanism quantifies the effectiveness of a perturbation within a specific state (i.e., perturbation advantage), which then serves as a guiding signal to prioritize perturbations with higher advantage values while suppressing less effective ones.
This process is analogous to the greedy selection of optimal actions in dynamic programming~\cite{DBLP:KostrikovNL22,DBLP:GaoWCKZ024}, ensuring that AAT generates the optimal adversarial perturbation in each state based on the maximum feasible advantage.
Crucially, this advantage can be reused in similar states across different trajectories, which enables AAT to learn optimal perturbations even in suboptimal data.
Additionally, the weighted advantage mechanism can constrain the estimated advantage of unseen states within a bounded range, thus mitigating the problem of attack performance degradation caused by the overestimation of advantage values~\cite{DBLP:KostrikovNL22}.
The contributions of this work are as follows:

\begin{itemize}
    \item We explore adversarial example generation from a sequence modeling perspective and propose the MSCSA mechanism to capture long-term dependencies while focusing on short-term dynamic characteristics in the generation process.
    
    \item We propose a weighted advantage mechanism to explore adversarial perturbations in high-advantage regions during adversarial example generation, and theoretically demonstrate the effectiveness of the weighted advantage in improving the performance of AAT attacks.
    
    \item Experimental results confirm that AAT significantly undermines the cumulative rewards of target policies in both white-box and black-box scenarios, outperforming the existing reward-based attack baseline by approximately 3$\%$. Moreover, AAT significantly improves the efficiency of adversarial example generation by producing effective perturbations via a single forward pass.

\end{itemize}


\begin{table}[]
\centering
\caption{Symbol explanation.}
\begin{tabular}{ll}
\hline
notation                                      & Explanation                                                                                                                   \\ \hline
$\mathbf{M}$                                             & Markov Decision Process                                                                                                       \\
$S,\mathbb{A},\mathcal{R},\mathcal{P},\gamma$ & \begin{tabular}[c]{@{}l@{}}State space, action space, reward function, \\ transition probaility, discount factor\end{tabular} \\
$\pi$, $\tau$, $P_\pi(\tau)$                  & \begin{tabular}[c]{@{}l@{}}Policy, trajectory, trajectory distribution under\\ policy $\pi$\end{tabular}                      \\
$J(\pi)$                                      & Expected cumulative reward                                                                                                    \\
$\delta_t$,                                   & Adversarial perturbation at time $t$                                                                                            \\
$s'_t $                                       & Adversarial state, $s'_t = s_t + \delta_t$                                                                                    \\
$\varepsilon$                                    & Perturbation threshold                                                                                                        \\
$\pi_v$, $R_t$                                & Target policy, Sum of future returns                                                                                          \\
$Q_\theta$, $V_\psi$                          & State-action value function, state value function                                                                             \\
$A_t$, $\tilde{A}_t$                          & Advantage, weighted advantage                                                                                                 \\
$\lambda$                                     & Hyperparameter for weighted advantage correction                                                                              \\
MSCSA                                         & Multi-scale Causal Self-attention                                                                                             \\
EMB                                           & Embedding network                                                                                                             \\
$\hat{s}$, $\mathcal{F}_{\theta}$                        & Encoded state vector, Advantage network                                                                                       \\
$o^m_t$                                       & m-th patch of state $s_t$                                                                                                     \\
$L_a$, $L_{norm}$, $L$                         & \begin{tabular}[c]{@{}l@{}}Action loss, Norm loss for perturbation magnitude,\\ total training loss\end{tabular}              \\
$L_\theta$, $L_\psi$                          & Loss for $Q_\theta$ and $V_\psi$                                                                                              \\
$\sigma$                                      & Hyperparameter in expected regression                                                                                         \\
$\Lambda$                                     & Weighted advantage function                                                                                                   \\
$\mu(\delta|s)$                               & \begin{tabular}[c]{@{}l@{}}Advantage distribution of perturbation $\delta$ \\ given state s\end{tabular}                      \\
$\mathcal{H}$, $\mathcal{L}_{reg}$, $\mathcal{L}_\phi$                       & \begin{tabular}[c]{@{}l@{}}Entropy, advantage regression loss, objective \\ for advantage network\end{tabular}                \\
$k$                                           & Hyperparameter balancing entropy and regression                                                                               \\
$L_k$                                         & Length of causal window at scale $k$                                                                                            \\
$Q^{(k)}, K^{(k)}, V^{(k)}$                   & Queries, keys, values at scale $k$                                                                                              \\
$M^{(k)}$                                     & Causal mask at scale $k$                                                                                                        \\
$O^{(k)}$                                     & Attention output at scale $k$                                                                                                  \\
$z_t$                                         & Fused multi-scale representation                                                                                              \\
$H^{(0)}_{t,k}$               & Initial sub-sequence at scale $k$                                                                                               \\
$W_g$                                          & Gated fusion parameter                                                                                                        \\
$W^Q, W^K, W^V$                               & Projection matrices for attention                                                                                             \\
$\epsilon$                                    & Mean error bound for advantage estimate                                                                                       \\ 
$\mathbb{X}$                                            & Number of patches per state \\
\hline
\end{tabular}
\label{tab:notion}
\end{table}

\section{Preliminaries}
\subsection{Adversarial attack}
Adversarial attack~\cite{DBLP:HeWLYJLZ23,DBLP:TianSWXZLL24} is an attack method against machine learning models, particularly prevalent in the field of deep learning. 
Attackers introduce meticulously crafted subtle perturbations into the input data, which are often undetectable to the human eye, leading the model to generate incorrect outputs.
Typically, input examples that can mislead machine learning models into making incorrect decisions are called adversarial examples, which can be expressed as $x'=x+\delta$, where $x$ is a clean example (i.e., original input), and $\delta$ denotes adversarial perturbations added to the clean example.
For a model $F:X \rightarrow Y$ mapping inputs $x \in X$ to labels $y \in Y$, the attacker aims to find an adversarial perturbation $\delta$ within a given budget $\epsilon$ that maximizes the loss $\mathbb{J}$ of model $F$, thereby causing a misclassification.
To ensure stealthiness, $\delta$ is constrained under an $L_p$-norm, where $p \in (1,2,\cdots,\infty)$.
This optimization problem under the constraint can be expressed as follows: 
\begin{equation}
    \delta^* = \arg\max_{\delta: \|\delta\|_p \leq \varepsilon} \mathbb{J}(F(x+\delta), y),
\end{equation}
where $ \mathbb{J} $ is the cross-entropy loss and $\varepsilon$ is the perturbation threshold.

\subsection{DRL adversarial attack}
Deep reinforcement learning (DRL) is a decision-making process that can be modeled by a Markov Decision Process (MDP) $\mathbf{M}=(S,\mathbb{A},\mathcal{R},\mathcal{P},\gamma)$, which includes state space $S$, action space $\mathbb{A}$, reward function $\mathcal{R}$, transition probability $\mathcal{P}$ and discount factor $\gamma$.
In DRL, the agent learns a policy that defines an action distribution conditioned on the state $\pi(a_t|s_t)$ at time step $t$, where $a_t \in \mathbb{A}$, $s_t \in S$.
Building on this definition, the trajectory $\tau=(s_0,a_0,\cdots,s_T,a_T)$ is generated by the interaction of the policy and the environment, which can be described as a distribution $P_\pi(\tau)= \prod^T_{t=0} \pi(a_t|s_t)\mathcal{P}(s_{t+1}|s_t,a_t)$, where $T$ is the length of the trajectory. 
Thus, the ultimate goal of DRL is to maximize the expected cumulative reward over trajectories, formalized as $J(\pi)=\mathbb{E}_{\tau \sim P_\pi(\tau)}[\sum^T_{t=0}\gamma^t \mathcal{R}(s_t,a_t)]$.

Similar to adversarial attack methods in image classification~\cite{DBLP:MaXJZS23,DBLP:YangLLZFZWLS24,DBLP:FangS24}, adversarial attack methods in DRL strategically perturb the decision-making process to minimize its cumulative rewards.
At time step $t$ of the interaction between the agent and the environment, the attacker introduces perturbations into the current state to form adversarial examples. 
Then, the adversarial examples are input into the agent, misleading it to take actions with the lowest rewards.
The goal of the attacker is to minimize the expected cumulative reward of the target policy in the adversarial states $\mathbb{E}_{a_t \sim \pi_V(s'_t)}[\sum^T_{t=0}\gamma^t \mathcal{R}(s'_t,a_t)]$, where $s'_t$ is adversarial example (i.e., $s'_t=s_t+\delta_t)$), and $\pi_V$ is the target policy. In addition, to improve the readability of the paper, Table~\ref{tab:notion} provides a centralized definition and explanation of the main symbols used in this work.


\subsection{Threat Model}
Our work focuses on untargeted temporal state perturbation attacks, which aim to minimize the cumulative reward of the agent by adding small perturbations to the state.
To formalize this approach, we define our threat model based on the attacker's objective, knowledge, and capabilities.

$\quad$
\\
\textbf{Attacker objective}.
The attacker's objective is to minimize the expected cumulative reward of the target policy $\pi_V$ by crafting adversarial perturbations $\delta_t$ that are added to the clean states $s_t$ to form adversarial states $s'_t = s_t + \delta_t$. 
The perturbation $\delta_t$ is constrained by an $L_2$-norm bound $\varepsilon$ to ensure imperceptibility, i.e., $|\delta_t|_2 \leq \varepsilon$. 
The specific values of $\varepsilon$ used in our experiments are provided in Appendix~\ref{sup:struct}.

$\quad$
\\
\noindent\textbf{Attacker knowledge}.
Our method considers white-box and black-box scenarios.
In white-box scenarios, the attacker has complete knowledge of the target policy, including its model structure, parameters, training algorithms, and environment information.
In contrast, black-box scenarios cannot access the information of the target policy, but can still interact with the unattacked environment (i.e., clean environment), which has been widely used in previous work~\cite{DBLP:LinHLS0S17,DBLP:YuS22}.

$\quad$
\\
\textbf{Attacker capabilities}.
The attacker can perturb the state observations $s_t$ that are input to the target policy. 
In white-box scenarios, the attacker has full access to the target policy and can thus collect data generated during the attack.
For example, the attacker can apply random or inverse gradient perturbations (e.g., FGSM) to modify states of the target policy,  and collect the resulting perturbed states, actions and rewards to form a dataset $D$, which can be used for tuning attack strategies.
This setup is common and easily satisfied in the real world, because an attacker can typically access state information and observe the actions of the target policy and the corresponding environment rewards when launching an attack~\cite{DBLP:SunZLH22,DBLP:YuS22}.
In black-box scenarios, the attacker cannot access the internal information of the target policy and thus constructs a substitution policy to approximate the behavior of the target policy.
For instance, in a CartPole environment where the target is an unknown PPO model, the attacker might train a DQN agent as the substitute policy. 
By interacting with the substitution policy, the attacker can collect interaction data to form Data $D$.

\begin{figure*}[t]
    \centering
    \includegraphics[width=0.98\linewidth]{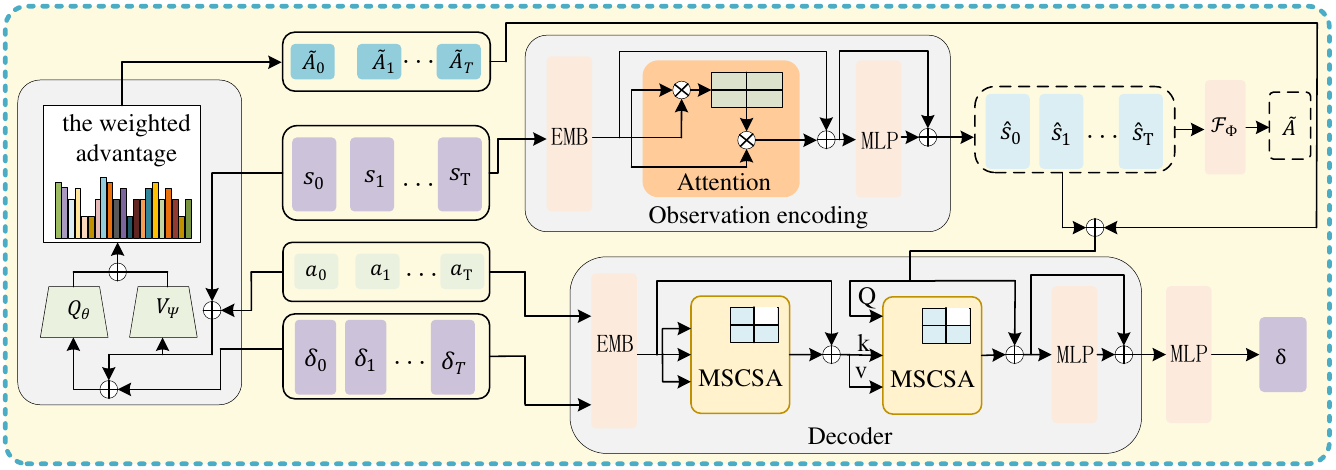}
    \caption{The AAT training structure. $\oplus$ denotes the concatenation of vectors, $\otimes$ denotes the element-wise multiplication of vectors and MLP is a multilayer perception machine. During the training phase, AAT generates adversarial perturbations using the weighted advantage calculated from the $Q_\theta$ and $V_\psi$ value functions.}
    \label{fig:AAT_strcut}
\end{figure*}

\section{AAT method}


\subsection{Design Rationale}
The core design concept of AAT is to generate adversarial examples with temporal correlation and high attack performance. 
Specifically, each perturbation should be designed with respect to preceding perturbation sequences so as to form a coherent strategy aimed at minimizing the cumulative reward of the agent.
Achieving this objective requires addressing two complementary challenges.
First, adversarial perturbations must reflect both short-term fluctuations that alter immediate action choices and long-term trends that steer the agent toward persistently suboptimal trajectories. 
Second, the generation of adversarial examples should prioritize perturbations that have a higher downstream impact on rewards rather than simply learning a mapping from states to adversarial perturbations.

Our design addresses these challenges through two complementary mechanisms: MSCSA and weighted advantage guidance.
MSCSA enhances temporal correlation by providing multi-granular context, where separate causal attention windows capture short-, medium-, and long-term temporal dependencies in parallel, while a gated fusion integrates these multi-scale signals into a unified, temporally-aware latent representation.
Specifically, the short-scale attention enables the model to generate perturbations that immediately flip critical action decisions. 
In contrast, the long-scale attention preserves coherence across extended sequences, thereby allowing perturbations to accumulate into a trajectory-level effect.

The weighted advantage mechanism complements MSCSA by serving as an auxiliary optimization signal that directs the generative process toward high-impact perturbations. 
Specifically, by quantifying the expected marginal effect of each perturbation on the future return, the weighted advantage functions both as a conditioning signal for the decoder and as a sampling factor during evaluation. 
This design yields two practical benefits: it tends to reproduce historical perturbation patterns that have been empirically effective (i.e., those with high advantage), and it dynamically guides the generation process by leveraging the maximum advantage in the current state, thereby promoting local optimality at each step.

MSCSA and the weighted advantage mechanism are distinct but complementary components that operate in coordination.
As can be seen in Fig.~\ref{fig:AAT_strcut}, AAT first computes the weighted advantage ($A$) from training data by training the Q-network ($Q_\theta$) and V-network ($V_\psi$). 
Subsequently, in order to predict advantage based only on the state information during the attack phase, the weighted advantage and embedding vectors $\hat{s}$ (encoded from the state $s$ via an embedding network (EMB) and an attention mechanism) are used to learn an advantage network ($\mathcal{F}_\phi$).
Additionally, this weighted advantage is combined with the state encoding vector as a generative conditional input to the decoder.
Finally, the decoder takes the state embedding vector, weighting advantage, action embedding vector and perturbation embedding vector as inputs to generate adversarial perturbations using MSCSA so as to capture dependencies in different time periods.
The existence of weighted advantages means that the generation process is no longer simply mimicking historical data, but is instead directly optimized towards high-performance attacks.

\subsection{Technical Description}

$\quad$
\\
\textbf{The MSCSA mechanism}.
MSCSA extends standard causal self‑attention by (i) operating concurrently at multiple temporal resolutions and (ii) enforcing strict causality via masking.
More precisely, given an input trajectory of states $\mathcal{S}_t = \bigl\{s_{t-L_K+1},\,s_{t-L_K+2},\,\dots,\,s_t\bigr\}$, each state is first projected into a $d$‑dimensional feature space and a learnable positional encoding is added: $h_i^{(0)} \;=\; \mathrm{LayerNorm}\bigl(W_s\,s_i + p_i\bigr), \quad i = t-L_K+1,\dots,t$, where $W_s \in \mathbb{R}^{d\times\dim(s)}$ and $p_i\in\mathbb{R}^d$ is the positional embedding for time step $i$.
To capture dependencies at different temporal extents, we define $K$ causal windows of increasing lengths $L_1 < L_2 < \dots < L_K$, and extract for each scale $k$ the sub‑sequence $H_{t,k}^{(0)} = \bigl[h_{t-L_k+1}^{(0)},\,h_{t-L_k+2}^{(0)},\,\dots,\,h_t^{(0)}\bigr]\in\mathbb{R}^{L_k\times d}$.
Each $H_{t,k}^{(0)}$ serves as input to an independent causal self‑attention block.
For scale $k$, queries $Q^{(k)}$, keys $K^{(k)}$, and values $V^{(k)}$ are computed via linear projections of $H_{t,k}^{(0)}$: 
\begin{equation}
        Q^{(k)} = H_{t,k}^{(0)}W^Q,\quad 
        K^{(k)} = H_{t,k}^{(0)}W^K,\quad 
        V^{(k)} = H_{t,k}^{(0)}W^V,
\end{equation}
where $W^Q,W^K, \text{and}, W^V\in\mathbb{R}^{d\times d}$.
We then apply a scaled dot‑product attention with a causal mask $M^{(k)}\in{0,-\infty}^{L_k\times L_k}$ to enforce that position $i$ attends only to ${1,\dots,i}$, where: 
\begin{equation}
    M^{(k)}_{i,j} = 
\begin{cases}
0, & j \le i,\\
-\infty, & j > i.
\end{cases}
\end{equation}
The attention output at each position is:
\begin{equation}
    O^{(k)} = \mathrm{Softmax}\!\Bigl(\tfrac{Q^{(k)}(K^{(k)})^\top}{\sqrt{d}} + M^{(k)}\Bigr)\,V^{(k)}.
\end{equation}
Having obtained per‑scale representations $O^{(1)},\dots,O^{(K)}$, they are fused into a single vector $z_t\in\mathbb{R}^d$.  
Let $o^{(k)}=O^{(k)}_{L_k}$ denote the final token of scale $k$. 
In particular, we use a gated fusion operator:
\begin{equation}
    z_t = \sum_{k=1}^K  \varrho \bigl(W_g [\,o^{(k)};h_t^{(0)}]\bigr) \odot o^{(k)},
\end{equation}
where $W_g\in\mathbb{R}^{d\times 2d}$ is the control parameter and $\varrho$ is the sigmoid activation. 

$\quad$
\\
\textbf{Trajectory representation.}
Inspired by the DT method~\cite{DBLP:DT}, which uses returns-to-go to learn the optimal action of the policy, we use the sum of future returns as the reward model for adversarial perturbations: $\hat{R}_t=\sum^T_{t=0} log_{\frac{1}{2}}r_t$, where $r_t$ is the target policy reward and $r \in (0,1]$.
This logarithmic transformation quantifies the inverse relationship between the cumulative rewards of the attack policy and the target policy. We then obtain a trajectory representation $\tau=(\hat{R}_0,s_0,\delta_0,a_0,\cdots,\hat{R}_T,s_T,\delta_T,a_T)$ for autoregressive training and generation.
However, high-dimensional inputs (i.e., images) often contain elements that are irrelevant to the task (e.g., background, luminance), which reduces the generalization ability of the model.
To address this issue, we adopt the Vision Transformer (ViT) architecture~\cite{DBLP:DosovitskiyB0WZ21} and divide each observation image $s_t$ into $\mathbb{X}$ non-overlapping patches, where each patch corresponds to a 14×14-pixel region.
By the self-attention mechanism, this patch-based representation dynamically weights the importance of each patch, filtering out irrelevant features and emphasizing task-critical patterns.
Thus, $\tau$ can be rewritten as $ \tau=(\hat{R}_0,o^1_0,\cdots,o^{\mathbb{X}}_0,\delta_0,a_0,\cdots,\hat{R}_T,o^1_T,\cdots,o^{\mathbb{X}}_T,\delta_T,a_T)$.

$\quad$
\\
\textbf{The learning phase}.
The prediction head associated with the input samples is trained to predict adversarial perturbations based on the mean squared error of actions, which can be expressed as:
\begin{equation}
    L_{a}=\mathbb{E}_{\tau \sim D}[\sum_{t=0}^T (\pi_V(s_t+AAT(\tau_{<t},\hat{R}_t,s_t))-a_t)^2],
\end{equation}
where $\tau_{<t}$ represents the historical trajectory before time step $t$ (i.e., $\tau_{<t}=(\hat{R}_1,s_1,\delta_1,a_1,\cdots,\hat{R}_{t-1},s_{t-1},\delta_{t-1},a_{t-1})$), and $a_t$ represents the action that the attacker desires the target policy to take at time step $t$.
To ensure the perturbations remain imperceptible, we simultaneously minimize the $L_2$ norm of the perturbation magnitude:
\begin{equation}
\label{eq:l2}
    L_{norm}=\mathbb{E}_{s_t}[||s'_t-s_t||_2],
\end{equation}
where $s'_t=s_t+AAT(\tau_{<t},s_t)$ is the adversarial example, and $s_t$ is the original state.
The overall training loss is expressed as:
\begin{equation}
\label{eq:total_loss}
    L=L_a+L_{norm}.
\end{equation}
This autoregressive learning method using a supervised learning paradigm can thus quickly learn adversarial perturbations from training data.

$\quad$
\\
\textbf{Policy improvement}.
AAT can generate time-correlated adversarial perturbations based on historical information to minimize the cumulative reward of the target policy.
However, the effectiveness of autoregressive learning depends on the expert dataset.
In the case where the dataset includes a large number of suboptimal trajectories, naive AAT fails to learn effective adversarial perturbations that minimize the cumulative reward of the target policy.
To address this limitation, an advantage function is then used to  boost the performance of adversarial perturbations by extracting effective perturbation patterns from high-quality trajectory segments across different trajectories.
Specifically, AAT first uses expectation regression~\cite{DBLP:KostrikovNL22,DBLP:GaoWCKZ024} to improve the estimation of the state-action value function $Q$ and the state value function $V$, expressed as: 
\begin{equation}
\label{eq:qv}
    \begin{split}
        L_\theta=\mathbb{E}_{(s_t,a_t,r_t,\delta_t,s_{t+1}, a_{t+1})}[(r_t&+\gamma V_\psi(s_{t+1},a_{t+1})\\ &-Q_\theta(s_t,a_t,\delta_t))^2],
    \end{split}
\end{equation}
\begin{equation}
    L_\psi=\mathbb{E}_{(s_t,a_t,\delta_t)}[L_{\sigma}(Q_{\hat{\theta}}(s_t,a_t,\delta_t)-V_\psi(s_t,a_t))],
\end{equation}
where $\theta$ and $\psi$ denote the parameters of the state-action value function and the state value function, respectively, $\sigma$ is a hyperparameter and $L_{\sigma}(\nu)=|\sigma - \mathds{1}(\nu<0)| \nu^2$ is the expected regression loss.
In the case of infinite examples and $\sigma \rightarrow 1$, $L_{\sigma}$ produces an approximation of the optimal value function within the sample.
By Leveraging $Q$ and $V$ functions, the adversarial advantage $A$ can be calculated to quantify the impact of perturbations on policy degradation as follows:
\begin{equation}
\label{eq:advantage}
    A_t=Q_\theta(s_t,a_t,\delta_t)-V_\psi(s_t,a_t).
\end{equation}

However, directly applying the advantage estimation of Eq.~(\ref{eq:advantage}) may suffer from overestimation errors when evaluating unseen states and actions, which can cause AAT to sample from a suboptimal advantage distribution.
To address this problem, we propose a weighted advantage mechanism to adjust the weights of the estimated advantage $A$ as follows:
\begin{equation}
    \tilde{A_t} = \Lambda(s_t,a_t,\delta_t)= \frac{Q_\theta(s_t,a_t,\delta_t)-V_\psi(s_t,a_t)}{1 + \lambda |Q_\theta(s_t,a_t,\delta_t)-V_\psi(s_t,a_t)|},
\end{equation}
where $\lambda$ serves as a hyperparameter that modulates the correction intensity for overestimated values, and $|\cdot|$ denotes the absolute value of advantage.
The weighted advantage function can constrain the evaluated advantages within a continuous and smooth interval to avoid the occurrence of extreme values.
In particular, when the absolute value of $A$ is large (i.e., $|A| \to \infty$), the weighted advantage $\tilde{A}$ is approximately: $\tilde{A} \approx \frac{\text{sign}(A)}{\lambda}$, which implies that extreme estimates are substantially bounded within a constrained interval, mitigating the adverse effects on performance estimation.
When the absolute value $|A|$ is small (i.e., $|A| \ll 1/\lambda$), we have $\tilde{A} \approx A$, which retains sensitivity to small advantage values.
Replacing $\tilde{A}$ with $\hat{R}$, we obtain an advantage-augmented dataset $D=\{\tau_i\}_{i=0}^N$, where $\tau=(s_0,a_0,\delta_0,\tilde{A}_0,\cdots,s_T,a_T,\delta_T,\tilde{A}_T)$.

AAT then accepts sequences of states, actions, perturbations, and advantages as inputs to generate adversarial perturbations.
The perturbation generation process can be decomposed into $P(\delta_t | \tau_{<t}, s_t, \tilde{A}) \propto P(\delta_t | \tau_{<t},s_t) e^{\omega \tilde{A}}$, where $P(\delta_t | \tau_{<t},s_t)$ is a basic generative model that learns the distribution of perturbation from training data to ensure the rationality of generated perturbation, $e^{\omega \tilde{A}}$ adjusts weights based on advantage values to prioritize perturbations aligned with target advantages, and $\propto$ denotes the normalization that turns a weighted distribution into a legitimate probability distribution.
By integrating the target advantage value into the generation process, AAT can dynamically optimize perturbations to generate high-performance adversarial examples without dependence on complete expert examples.

During the attack phase, adversarial examples are generated by conditioning on the attack advantage to ensure that perturbations are optimal at each time step.
To this end, we construct an advantage prediction function based on maximum entropy regularization, which learns a more dispersed policy distribution by maximizing the entropy of the advantage distribution, and thus enables better coverage of high-advantage regions in the sample space.
Assume that the advantage distribution of perturbation $\delta$ at a given state $s$ is defined as $\mu(\delta | s) = \mathbb{E}_{a \sim \pi(\cdot|s)} [\frac{\exp(\Lambda(s,a, \delta))}{\int_{\delta' \in \Delta} \exp(\Lambda(s, a, \delta')) d\delta'} ]$.
Based on this advantage distribution, we can obtain the entropy term and the advantage regression term as follows:
\begin{equation}
    \mathcal{H}(\mu(\delta \mid s))
= -\int_{\Delta} \mu(\delta \mid s)\,\log\bigl(\mu(\delta \mid s)\bigr)\,d\delta,
\end{equation}
\begin{equation}
    \mathcal{L}_{\text{reg}} = \mathbb{E}_{(s, \delta) \sim D} \left[ \left( \mathcal{F}_\phi(s) - \max_\delta \mu(\delta | s) \cdot \Lambda(s,a, \delta) \right)^2 \right],
\end{equation}
where $\mathcal{F}_\phi(s)$ is the in-sample advantage predicted by the model.
The final objective function is:
\begin{equation}
\label{eq:adv_f}
    \mathcal{L}_\phi = \mathbb{E}_{(s, \delta) \sim D} \left[ \mathcal{L}_{\text{reg}} - \kappa \mathcal{H}(\mu(\delta | s)) \right],
\end{equation}
where $\kappa$ is a hyperparameter that balances the entropy regularization term and the advantage regression term.
By weighting the high-value parts of $\mu(a|s)$, $\mathcal{F}_\phi$ can effectively approximate the in-sample maximum advantage, which enables AAT to generate superior adversarial perturbations.

$\quad$
\\
\textbf{Performance analysis}. We now give a theoretical analysis on the performance of AAT.


\begin{lemma}
\label{lem:ad}(\textbf{The Advantage Performance Difference Lemma}~\cite{NIPS2016_cc7e2b87,kakade2002approximately})
     For any attack policies $\pi$ and $\pi'$, we have $V^\pi(s)-V^{\pi'}(s)=\sum_{t=0}^\infty \gamma^t \mathbb{E}_{\tau \sim \pi}\hat{A}^{\pi'}(s,\delta)$,  where $\hat{A}^{\pi'}$ is the advantage function under policy $\pi'$, and $\tau$ is the trajectory of the policy.
\end{lemma}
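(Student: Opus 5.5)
The plan is to prove Lemma~\ref{lem:ad} by the standard telescoping argument of Kakade and Langford, applied to the attacker's MDP. In that MDP the ``action'' at time $t$ is the perturbation $\delta_t$. The target policy $\pi_V$ and the environment dynamics $\mathcal{P}$ are fixed, so together they form the transition kernel $s_{t+1}\sim \mathcal{P}(\cdot\mid s_t,a_t)$ with $a_t\sim\pi_V(\cdot\mid s_t+\delta_t)$. The attacker's reward is $r_t$. With this reading, $V^{\pi}$, $Q^{\pi'}(s,\delta)$ and $\hat{A}^{\pi'}(s,\delta)=Q^{\pi'}(s,\delta)-V^{\pi'}(s)$ are the ordinary value, action-value and advantage functions of a stationary MDP. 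The statement should then be understood as
\[
V^\pi(s)-V^{\pi'}(s)=\mathbb{E}_{\tau\sim\pi}\Bigl[\sum_{t=0}^\infty\gamma^t\hat{A}^{\pi'}(s_t,\delta_t)\Bigm| s_0=s\Bigr].
\]
I assume bounded rewards and $\gamma<1$, so every series converges absolutely.

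\textbf{Step 1.} By the Bellman equation for $\pi'$,
\[
Q^{\pi'}(s,\delta)=\mathbb{E}\bigl[r(s,\delta)+\gamma V^{\pi'}(s_{t+1})\mid s_t=s,\delta_t=\delta\bigr],
\]
where the expectation is over the target action and the next state. Hence along any trajectory generated by $\pi$,
\[
\mathbb{E}\bigl[\hat{A}^{\pi'}(s_t,\delta_t)\mid s_t,\delta_t\bigr]=\mathbb{E}\bigl[r_t+\gamma V^{\pi'}(s_{t+1})-V^{\pi'}(s_t)\mid s_t,\delta_t\bigr].
\]

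\textbf{Step 2.} Multiply by $\gamma^t$, take the expectation over $\tau\sim\pi$ from $s_0=s$ using the tower property, and sum over $t$. The potential terms telescope:
\[
\sum_t\gamma^t\bigl(\gamma V^{\pi'}(s_{t+1})-V^{\pi'}(s_t)\bigr)=-V^{\pi'}(s_0)+\lim_{T\to\infty}\gamma^{T}V^{\pi'}(s_T).
\]
The limit vanishes because $V^{\pi'}$ is bounded. The remaining sum $\mathbb{E}_{\tau\sim\pi}\sum_t\gamma^t r_t$ is exactly $V^\pi(s)$, which gives the identity.

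\textbf{Main obstacle.} The main difficulty is not computational but definitional. The lemma is stated loosely, with a sum outside the expectation and the advantage written without a time index. The paper's $Q_\theta(s,a,\delta)$ and $V_\psi(s,a)$ also condition on $a$, which is not how the attacker MDP above is set up. I would fix the convention as described: marginalize $a$ into the transition kernel, and interpret $\sum_t\gamma^t\mathbb{E}_{\tau\sim\pi}\hat{A}^{\pi'}(s,\delta)$ as $\sum_t\gamma^t\mathbb{E}_{\tau\sim\pi}\hat{A}^{\pi'}(s_t,\delta_t)$.

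Justifying the interchange of sum and expectation then needs only dominated convergence, since $|\hat{A}^{\pi'}|\le 2R_{\max}/(1-\gamma)$. For finite-horizon episodes, absorbing terminal states with zero value give the same result. If the attacker's objective is expressed through the transformed reward $\log_{1/2} r_t$, the argument is unchanged, provided $r_t$ is bounded away from $0$ so that the transformed reward stays bounded.
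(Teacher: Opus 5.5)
Your proposal is correct, and it is essentially the standard telescoping proof from Kakade and Langford, which is all the paper relies on. The paper states this lemma as a cited result, gives no proof of its own, and uses it only to write down the lower bounds $\Delta_C$ and $\Delta_C^{\text{new}}$ in the proof of Theorem~1. Your repairs of the loosely written statement are the right ones: the advantage is the true $Q^{\pi'}-V^{\pi'}$ rather than the estimate $Q_\theta-V_\psi$, and it is time-indexed inside the expectation with $s_0=s$. The same argument also goes through unchanged for a history-dependent $\pi$ such as AAT's sequence model, since only $\pi'$ needs to be Markov for the Bellman identity you use in Step~1.
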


Assuming the policy $\pi$ and the behavior policy $ \beta$ satisfy the above lemma, the original advantage estimate $ \hat{A}^\beta(s,a)=Q_\theta(s,a)-V_\psi(s)$ satisfies a uniform bound on the mean error $\epsilon$, where $\max_{s,a}|\hat{A}^\beta(s,a)-A^\beta(s,a)|\le\epsilon$.
Similarly, the error of the weighted advantage satisfies $\max_{s,a}|\tilde{A}^\beta(s,a)-A^\beta(s,a)|\le\epsilon_{\text{new}}$.

\begin{theorem}
    (\textbf{The Weighted Advantage Improvement Theorem}) When the following conditions are satisfied: 1) $\epsilon_{\text{new}}\le \epsilon$; and 2) there exists a constant $\delta\ge0 $ such that \\ $\mathbb{E}_{\tau\sim\pi}\left[\sum_{t=0}^\infty\gamma^t\left(\hat{A}^\beta(s_t,a_t)-\tilde{A}^\beta(s_t,a_t)\right)\right]\le \delta$ and $\delta\le \frac{\epsilon-\epsilon_{\text{new}}}{1-\gamma}$, the new performance lower bound satisfies $\Delta_C^{\text{new}} \ge \Delta_C$.
\end{theorem}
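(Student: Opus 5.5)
We first fix what the two lower bounds $\Delta_C$ and $\Delta_C^{\text{new}}$ are, since the statement leaves them implicit. Applying Lemma~\ref{lem:ad} with $\pi'=\beta$ gives
\[
V^\pi(s)-V^\beta(s)=\mathbb{E}_{\tau\sim\pi}\Bigl[\sum_{t=0}^\infty\gamma^t A^\beta(s_t,a_t)\Bigr].
\]
The true advantage $A^\beta$ is not available, so we replace it by an estimate and control the error with the uniform bound. Using $A^\beta\ge \hat{A}^\beta-\epsilon$ pointwise and summing the geometric series, we get $V^\pi(s)-V^\beta(s)\ge \mathbb{E}_{\tau\sim\pi}[\sum_t\gamma^t\hat{A}^\beta(s_t,a_t)]-\frac{\epsilon}{1-\gamma}=:\Delta_C$. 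In the same way, $A^\beta\ge\tilde{A}^\beta-\epsilon_{\text{new}}$ gives $V^\pi(s)-V^\beta(s)\ge \mathbb{E}_{\tau\sim\pi}[\sum_t\gamma^t\tilde{A}^\beta(s_t,a_t)]-\frac{\epsilon_{\text{new}}}{1-\gamma}=:\Delta_C^{\text{new}}$. I will state these two definitions explicitly at the start of the proof, as the performance lower bounds induced by the original and weighted estimators.

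With these definitions, the theorem reduces to comparing the two expressions. Subtracting them gives
\[
\Delta_C^{\text{new}}-\Delta_C=\frac{\epsilon-\epsilon_{\text{new}}}{1-\gamma}-\mathbb{E}_{\tau\sim\pi}\Bigl[\sum_{t=0}^\infty\gamma^t\bigl(\hat{A}^\beta-\tilde{A}^\beta\bigr)(s_t,a_t)\Bigr].
\]
Condition 2 bounds the expectation by $\delta$, and $\delta\le\frac{\epsilon-\epsilon_{\text{new}}}{1-\gamma}$. Hence the difference is at least $\frac{\epsilon-\epsilon_{\text{new}}}{1-\gamma}-\delta\ge 0$. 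Condition 1 ensures $\epsilon-\epsilon_{\text{new}}\ge 0$, which is needed for a nonnegative $\delta$ to exist at all.

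The main obstacle is not the algebra but making the argument well posed, for two reasons.
\begin{itemize}
\item The lemma is stated with $\hat{A}^{\pi'}(s,\delta)$, while the bounds use $A^\beta(s,a)$. I will read the lemma with the true advantage of $\beta$, with the perturbation $\delta$ absorbed into the action argument, so that the attacker's action is the pair consisting of the perturbation and the induced agent action.
\item Interchanging the expectation with the infinite sum needs justification. Since $|\tilde{A}|\le 1/\lambda$ and the $\epsilon$-error bound makes $\hat{A}$ bounded whenever $A^\beta$ is bounded (bounded rewards), all the series converge absolutely. The interchange then follows by dominated convergence, which also justifies the bound $\sum_t\gamma^t\epsilon=\epsilon/(1-\gamma)$.
\end{itemize}
I will also note that the symbol $\delta$ in the theorem is a scalar constant, distinct from the perturbation $\delta_t$.
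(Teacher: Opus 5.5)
Your proof is correct and takes the same route as the paper's appendix. The paper writes $\Delta_C$ and $\Delta_C^{\text{new}}$ as the estimated-advantage discounted sum minus $\epsilon/(1-\gamma)$ (resp.\ $\epsilon_{\text{new}}/(1-\gamma)$), subtracts them, and bounds the cross term using condition 2 together with $\delta\le(\epsilon-\epsilon_{\text{new}})/(1-\gamma)$. Your differences are that you derive the two bounds from Lemma~1 and the uniform error bounds rather than positing them, and that you skip the paper's detour through the pointwise sign of $C=\hat{A}^\beta-\tilde{A}^\beta$. That detour is unnecessary, and it rests on a misstated identity for $\Lambda$, since condition 2 already bounds the expectation of $\hat{A}^\beta-\tilde{A}^\beta$ directly.
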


\begin{proof}
   See Appendix~\ref{sup:a}.
\end{proof}

Theorem 1 shows that the performance lower bound $\Delta_C^{\text{new}}$ obtained using the weighted advantage function $ \tilde{A}^\beta(s,a)=\frac{Q_\theta(s,a)-V_\psi(s)}{1+\lambda|Q_\theta(s,a)-V_\psi(s)|}$ not only reduces the estimation error ($\epsilon_{\text{new}}\le\epsilon$) but also ensures a higher performance bound $\Delta_C^{\text{new}} \ge \Delta_c$.
Collectively, these results demonstrate that the weighted advantage function provides a theoretically stronger lower bound for performance improvement compared to standard advantage formulations.



$\quad$
\\
\textbf{Summary}
Algorithm~\ref{alg:aat} outlines the overall workflow of AAT, which consists of a training phase and an attack phase.
During the training phase, AAT first obtains an estimate of the weighted advantage from the offline dataset (Lines 4$\sim$9).
Subsequently, AAT uses the observation encoder to transform each state into an encoded vector $\hat{s}$ (Line 14).
Using the encoded vectors and the estimated weighted advantages, the advantage network is trained to predict the advantage of the adversarial perturbation (Line 15).
Moreover, the decoder is employed to learn perturbation generation by leveraging four elements: the encoded states, actions, perturbation sequences, and estimated weighted advantages (Lines 16 $\sim 17$).
In the attack phase, AAT first uses $\mathcal{F}_\phi$ to predict the maximum advantage based on the encoding state (Lines 30$\sim$31).
Then, AAT utilizes the maximum advantage, state, action, and perturbation sequence to predict the perturbation at time $t$ (Line 32).
Finally, the perturbation ($\delta_{t}$) is added to the original state ($s_{t}$) to form an adversarial example ($s'_{t}$) that is fed into the target policy (Line 33).
Based on $s'_t$, the target policy selects an action, interacts with the environment, and stores the resulting transition (state, action, reward) in the historical sequence buffer (Line 22 and 33).
This attack process is repeated until the episode terminates.

\begin{algorithm}[t]
\caption{AAT}
\label{alg:aat}
\begin{algorithmic}[1]
\STATE \textbf{Input:} Dataset $D$, target policy $\pi$, max steps $T$, state buffer $D_s$, action buffer $D_a$, perturbation buffer $D_\delta$
\STATE \textbf{Initialize:} Observation encoder, Decoder, Q-network $Q_\theta$, V-network $V_\psi$, Advantage network $\mathcal{F}_\phi$

\STATE \textbf{// Stage 1: Weighted Advantage Estimation}
\FOR{each batch from $D$}
    \STATE $y = r_t + \gamma V_\psi(s_{t+1}, a_{t+1})$
    \STATE Update $Q_\theta$ by minimizing $\left(Q_\theta(s_t,a_t,\delta_t)-y\right)^2$
    \STATE Update $V_\psi$ via expectile regression
    \STATE $A_t = Q_\theta(s_t,a_t,\delta_t)-V_\psi(s_t,a_t)$ \textit{// weighted advantage }
    \STATE $\tilde{A}_t = \frac{A_t}{1 + \lambda |A_t|}$ 
\ENDFOR

\STATE \textbf{// Stage 2: Autoregressive Perturbation and Weighted Advantage Learning}
\STATE Sample a trajectory $\tau=(s_0,a_0,\delta_0,\tilde{A}_0,\cdots, \tilde{A}_T)$ from $D$ \textit{ // $\tau$ can be divided into state sequence $\tau^s$, action sequence $\tau^a$ and perturbation sequence $\tau^\delta$}
\FOR{t = 0 to T}
    \STATE Encode state: $\tau^{\hat{s}}_{\leq t} = \text{Observation encoder} (\tau^s_{\leq t})$ \textit{// $\tau^{\hat{s}}_{\leq t}$ includes the state sequence from $\hat{s}_0$ to $\hat{s}_t$}
    \STATE Using $\tilde{A}$ and $\hat{s}$ train advantage network $\mathcal{F}_\phi$ 
    \STATE Predict perturbation: $\delta_t = \text{Decoder}(\tau^{\hat{s}}_{\leq t}, \tau^a_{<t},\tau^\delta_{<t}, \tilde{A}_t)$
    \STATE Train generator: $\mathcal{L} = (\pi(s_t + \delta_t)-a_t)^2 + \|\delta_t\|_2$
\ENDFOR

\STATE \textbf{// Stage 3: Attack}
\FOR{t = 0 to T}
    \STATE Observe $s_t$ 
    \STATE $D_s \leftarrow s_t$ \textit{// $s_t$ is stored in $D_s$}
    \STATE $\tau^s_{\leq t}\leftarrow D_s$ \textit{// Get sequences ($s_{_0}$, $\cdots$, $s_{t}$)}
    \IF{t = 0}
        \STATE $\tau^a_{< t}\leftarrow 0$, $\tau^\delta_{< t}\leftarrow 0 $
    \ELSE
       \STATE $\tau^a_{< t}\leftarrow D_a$ \textit{// Get sequences ($a_{_0}$, $\cdots$, $a_{t-1}$)}
       \STATE $\delta_{< t}\leftarrow D_\delta$ \textit{// Get sequences ($\delta_{_0}$, $\cdots$, $\delta_{t-1}$)}
    \ENDIF
    \STATE $\tau^{\hat{s}}_{\leq t} = \text{Observation encoder} (\tau^s_{\leq t})$  \textit{//Get the state encoding vector through the encoder}
    \STATE $\tilde{A}_t = \mathcal{F}_\phi(\hat{s}_t)$ \textit{// predicted max advantageous }
    \STATE $\delta_t = \text{Decoder}(\tau^{\hat{s}}_{\leq t}, \tau^a_{<t},\tau^\delta_{<t}, \tilde{A}_t)$
    \STATE $s'_t = s_t + \delta_t$, \quad $a_t = \pi(s'_t)$
    \STATE $D_a \leftarrow a_t $, $D_\delta \leftarrow \delta_t$ \textit{// add action and perturbation}
    \STATE Step environment
\ENDFOR

\end{algorithmic}
\end{algorithm}

\section{Experiments}
To evaluate the effectiveness of the proposed AAT attack on DRL agents, we conduct experiments on six Atari 2600 games: Breakout, Pong, Chopper Command, Sequest, Qbert and Space Invaders, utilizing the OpenAI Gym environment\footnote{https://github.com/openai/gym}.
In addition, we further validate AAT in environments with continuous action spaces (DeepMind Control Suite) and high-dimensional state spaces (Google Football). Refer to Appendix~\ref{sup:env} for more detailed environment specifications.

\subsection{Experimental setup}
We use DQN~\cite{DBLP:MnihKSGAWR13}, A3C~\cite{DBLP:SharifBBR16}, TRPO~\cite{DBLP:SchulmanLAJM15}, DDPG~\cite{DBLP:abs-1801-00690}, and PPO~\cite{DBLP:SchulmanWDRK17} as the target policies.
Since the goal of AAT is to generate time-correlated adversarial perturbations, we compare AAT with the reward-based attack baselines \textbf{PA-AD}~\cite{DBLP:SunZLH22}, \textbf{AdvRL-GAN}~\cite{DBLP:YuS22}, \textbf{TSGE}~\cite{DBLP:QiaobenYZSZZ24} and \textbf{PIA}~\cite{barto1988neuronlike}, which construct attack policies via RL to generate optimal adversarial perturbations.
Note that some recent trajectory-level attack methods, such as MBRL-Attack~\cite{chen2023dynamics} and CAAD~\cite{yamabe2024robust}, are not included in our baseline comparisons due to the unavailability of their open-source implementations.
We also compare ATT with mainstream gradient-based attack methods, including FGSM~\cite{DBLP:HuangPGDA17}, SKip~\cite{DBLP:KosS17}, S-T~\cite{DBLP:LinHLS0S17} and EDGE~\cite{DBLP:GuoWKX21}.
Refer to Appendix~\ref{sec:train_data} and~\ref{sup:struct} for hyperparameter settings and network structure for AAT.
Moreover, the structure and input settings of the target network are shown in Appendix~\ref{sup:target_policy}.

\begin{table*}[t]
\centering
\caption{Comparisons between the ATT attack and other methods in the Breakout game. $*$ denotes the cumulative reward of the target policy under no attack. The bold indicates the best attack performance. Values represent the average cumulative reward (in game points) of the target policy under attack, calculated over 10 independent evaluation runs. Lower values indicate better attack performance. }
\label{tab:breakout_w}
\begin{tabular}{lcccccccc}
\hline
\multicolumn{9}{c}{Breakout}                                                                                                                                                                                                                                                                                                                                                                                                                                                                                                                                                                                                                                                                                                                                                                                                                                                                                                                                                                                                                                                                                                                                                                                                                                                                                                                                                                                                                                                                                                                                                     \\ \hline
\multicolumn{1}{l|}{}                                                                                                            & \multicolumn{4}{c|}{White-box}                                                                                                                                                                                                                                                                                                                                                                                                                                                                                                                                                                                                                                                                                           & \multicolumn{4}{c}{Black-box}                                                                                                                                                                                                                                                                                                                                                                                                                                                                                                                                                                                                                                                                                      \\
\multicolumn{1}{l|}{Method}                                                                                                      & DQN                                                                                                                                                                    & A3C                                                                                                                                                                    & TRPO                                                                                                                                                                    & \multicolumn{1}{c|}{PPO}                                                                                                                                                                     & DQN                                                                                                                                                                       & A3C                                                                                                                                                                        & TRPO                                                                                                                                                                         & PPO                                                                                                                                                                        \\
\multicolumn{1}{l|}{\begin{tabular}[c]{@{}l@{}}*\\ FGSM\\ Skip\\ S-T\\ EDGE\\ PA-AD\\ AdvRL-GAN\\ TSGE\\ PIA\\ AAT\end{tabular}} & \begin{tabular}[c]{@{}c@{}}355.83$\pm$0.89\\ 34.30$\pm$3.78\\ 31.45$\pm$4.15\\ 57.68$\pm$5.61\\ 43.26$\pm$3.51\\ 46.35$\pm$4.31\\ 32.44$\pm$5.42\\ \textbf{6.42}$\pm$\textbf{4.21}\\ 35.17$\pm$4.44\\ 6.56$\pm$2.54\end{tabular} & \begin{tabular}[c]{@{}c@{}}384.52$\pm$1.45\\ 35.45$\pm$4.05\\ 32.46$\pm$4.33\\ 55.44$\pm$2.78\\ 46.32$\pm$3.49\\ 53.87$\pm$3.27\\ 9.11$\pm$2.56\\ 15.29$\pm$3.42\\ 27.84$\pm$4.23\\ \textbf{8.21}$\pm$\textbf{3.51}\end{tabular} & \begin{tabular}[c]{@{}c@{}}415.64$\pm$2.73\\ 34.58$\pm$4.54\\ 38.79$\pm$3.98\\ 58.47$\pm$3.12\\ 44.89$\pm$4.27\\ 47.96$\pm$4.32\\ 33.98$\pm$3.75\\ 25.46$\pm$4.21\\ 40.51$\pm$3.41\\ \textbf{5.98}$\pm$\textbf{3.81}\end{tabular} & \multicolumn{1}{c|}{\begin{tabular}[c]{@{}c@{}}426.58$\pm$2.14\\ 40.56$\pm$3.82\\ 38.42$\pm$4.08\\ 44.36$\pm$3.84\\ 50.33$\pm$4.35\\ 65.73$\pm$3.91\\ 38.21$\pm$2.74\\ 34.32$\pm$3.27\\ 53.47$\pm$2.83\\ \textbf{6.88}$\pm$\textbf{2.56}\end{tabular}} & \begin{tabular}[c]{@{}c@{}}355.83$\pm$0.89\\ 118.32$\pm$8.32\\ 88.46$\pm$7.74\\ 68.32$\pm$5.24\\ 56.88$\pm$6.13\\ 98.48$\pm$7.88\\ 40.60$\pm$9.56\\ 43.73$\pm$6.02\\ 53.45$\pm$7.23\\ \textbf{28.32}$\pm$\textbf{6.21}\end{tabular} & \begin{tabular}[c]{@{}c@{}}384.52$\pm$1.45\\ 120.42$\pm$9.84\\ 90.34$\pm$9.47\\ 68.21$\pm$8.54\\ 60.36$\pm$7.43\\ 123.63$\pm$6.83\\ 45.28$\pm$7.47\\ 53.46$\pm$8.23\\ 48.72$\pm$9.01\\ \textbf{20.36}$\pm$\textbf{7.14}\end{tabular} & \begin{tabular}[c]{@{}c@{}}412.64$\pm$2.73\\ 128.24$\pm$4.32\\ 92.34$\pm$10.56\\ 70.24$\pm$11.05\\ 59.86$\pm$9.35\\ 113.74$\pm$8.46\\ 46.68$\pm$9.37\\ 52.61$\pm$7.83\\ 63.47$\pm$8.42\\ \textbf{22.22}$\pm$\textbf{8.73}\end{tabular} & \begin{tabular}[c]{@{}c@{}}426.58$\pm$2.14\\ 130.26$\pm$7.68\\ 95.42$\pm$8.75\\ 69.21$\pm$9.42\\ 60.84$\pm$8.35\\ 99.87$\pm$9.84\\ 48.23$\pm$7.81\\ 89.36$\pm$8.43\\ 100.47$\pm$9.41\\ \textbf{20.21}$\pm$\textbf{7.21}\end{tabular} \\ \hline
\end{tabular}
\end{table*}

\subsection{Main results}
\noindent \textbf{In discrete action environments}. 
Table~\ref{tab:breakout_w} compares the attack performance of AAT with other approaches in the Breakout game in terms of cumulative rewards under attack obtained using target policies (i.e., DQN, A3C, TRPO, and PPO).
The results of other games are provided in Appendix~\ref{sup:discrete_reult}.
All experimental results are averaged over 10 independent trials.

In the white-box scenario, all methods effectively diminish the cumulative rewards of the target policy.
For example, FGSM reduces the cumulative rewards of the target policy across different games by exploiting its gradient information. 
Since the localization of critical states is challenging and directly impacts the attack effect, the attack effects of Skip, S-T, and EDGE are not as good as those of FGSM.
Although PA-AD finds the optimal attack direction through policy perturbation, it only focuses on the current state that causes the generated perturbation to be independent of the previous perturbation, which reduces the attack effect.
Unlike previous methods for generating adversarial perturbations, AAT employs sequence modeling techniques to effectively capture the relationship between historical information and existing perturbations to generate time-correlated adversarial examples.
Moreover, AAT can dynamically adjust adversarial perturbations through the weighted advantage mechanism, which further enhances the attack performance.

Compared to white-box experiments, black-box experiments are more challenging because the attacker cannot access any information about the target policy.
In order to ensure the fairness of the experiment, all attack methods (e.g, FGSM, Skip and AdvRL-GAN) use a substitution policy to generate examples, which are then used to attack the target policy. 
As can be seen from Table~\ref{tab:breakout_w}, all methods can reduce the attack effectiveness compared to the white-box scenario.
Specifically, the FGSM, Skip, S-T, and EDGE methods exhibit the most significant decline in attack effectiveness.
Two primary reasons account for this phenomenon: 1) the adversarial perturbations computed via substitution policy gradients fail to align with decision-making patterns of the black-box policy, and 2) adversarial examples tend to overfit the substitution policy~\cite{DBLP:RiceWK20}, which results in poor transferability to the black-box policy.
AdvRL-GAN is effective in black-box scenarios by training a generative adversarial network using TRPO to generate adversarial examples.
Unlike the aforementioned methods, by utilizing the MSCSA mechanism and the weighted advantage mechanism, AAT can generate effective adversarial perturbations for sequential decision-making in black-box scenarios.

Moreover, the experimental results show that the standard deviations of various attack methods are larger in the black-box scenario, indicating more pronounced performance instability. This phenomenon may be attributed to the discrepancies between the substitution policy and the target policy. Despite this underlying cause of instability, AAT overall remains more stable than the benchmark methods in black-box scenarios.

$\quad$
\\
\noindent \textbf{In continuous action environments}.
To assess the effectiveness of attacks in continuous action environments, we conduct comparative experiments in the DeepMind Control Suite.
Fig.~\ref{fig:app_cons_a3c} displays the results of various attack methods on the target policy D4PG in both white-box and black-box scenarios (see Appendix~\ref{sup:discrete_reult} for additional results).
All results are the average of 10 experiments.

Experimental results show that AAT exhibits superior attack performance in both white-box and black-box scenarios, while the attack performance of other methods has large fluctuations in different environments or attack scenarios.
For example, FGSM generates adversarial perturbations based on gradient information, which results in relatively good attack performance in white-box scenarios across all environments.
However, FGSM has significant performance degradation due to attack overfitting in black-box scenarios, especially in the Humanoid, Pendulum, and Walker environments.
Skip and S-T perform worse than other methods, primarily because they struggle to identify critical frames and establish suitable attack thresholds.
EDGE uses the C$\&$W attack method, which exhibits high stability in black-box scenarios.
By generating hidden space perturbations, AdvRL-GAN can achieve better attack results in most environments, but still show instability in black-box Humanoid environment.
Although TSGE generates perturbations using a deception policy, these perturbations fail to effectively induce the target (victim) policy to take suboptimal actions because of the structural discrepancy between the two policies. 
PIA introduces adversarial strategies through a zero-sum game to disrupt the decision-making of the policy, but the constraint of minimizing the distribution distance between attack observations and unattacked transitions strictly limits the actions of the adversarial strategy, leading to suboptimal attack performance. 
Compared to the above methods, AAT outperforms all methods, consistently demonstrating superior attack performance and unmatched stability in the white-box scenario.

In the black-box scenario, the estimation of the substitution model often contains bias. 
Consequently, attack methods that rely on such models to craft adversarial examples suffer from performance degradation.
However, the weighted advantage mechanism constrains the advantage estimation range for unobserved states, thereby mitigating overestimation and preventing AAT from being misled by erroneous peaks in the surrogate model. 
Beyond this corrective function, the weighting mechanism proactively identifies effective perturbation segments across trajectories and generalizes these perturbations to similar states. 
Consequently, even with a biased substitution model, AAT can achieve stronger attack performance in the black-box scenario by leveraging this weighted sampling to capture locally valid attack signals.

\begin{figure}[]
  \centering
  \subfigure{\includegraphics[width=0.44\textwidth]{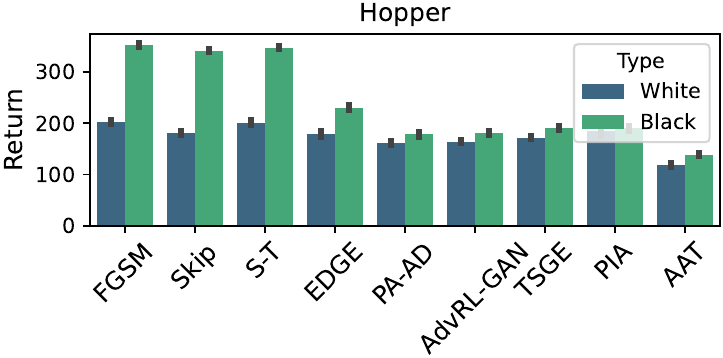}}
  \subfigure{\includegraphics[width=0.44\textwidth]{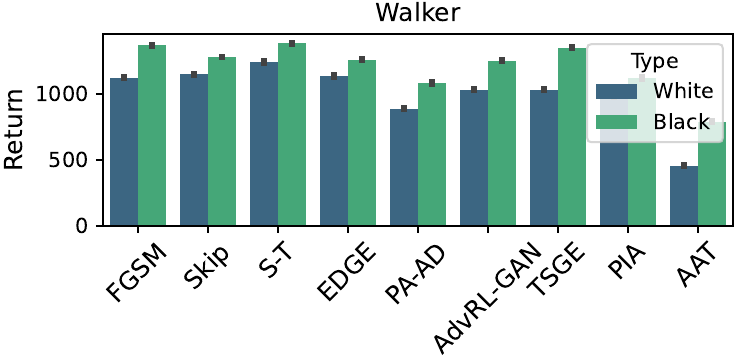}}
  \caption{All experimental results are the average of 10 experiments. In the black-box experimental results, we generate adversarial examples using A3C and D4PG as substitution strategies, respectively.}
  \label{fig:app_cons_a3c}
\end{figure}

\begin{table*}[t]
\centering
\caption{Comparisons between the ATT attack and other methods in the Pong game. $*$ indicates the cumulative reward when the target strategy adopts a defensive strategy but is not attacked.}
\resizebox{0.98\linewidth}{!}{
\setlength{\tabcolsep}{0.5mm} 
\begin{tabular}{lcccccccccccc}
\hline
\multicolumn{13}{c}{Pong}                                                                                                                                                                                                                                                                                                                                                                                                                                                                                                                                                                                                                                                                                                                                                                                                                                                                                                                                                                                                                                                                                                                                                                                                                                                                                                                                                                                                                                                                                                                                                                                                                                                                                                                                                                                                                                                                                                                                                                                                                                                                                                                                                                                                                                                        \\ \hline
                                                                                                                    & \multicolumn{4}{c|}{Adversarial training}                                                                                                                                                                                                                                                                                                                                                                                                                                                                                                                                                                                                                                                                                  & \multicolumn{4}{c|}{SA-MDP}                                                                                                                                                                                                                                                                                                                                                                                                                                                                                                                                                                                                                                                                                             & \multicolumn{4}{c}{POTBSG}                                                                                                                                                                                                                                                                                                                                                                                                                                                                                                                                                                                                                                                                          \\ \hline
\multicolumn{1}{l|}{Method}                                                                                         & DQN                                                                                                                                                                      & A3C                                                                                                                                                                    & TRPO                                                                                                                                                                    & \multicolumn{1}{c|}{PPO}                                                                                                                                                                     & DQN                                                                                                                                                                     & A3C                                                                                                                                                                    & TRPO                                                                                                                                                                   & \multicolumn{1}{c|}{PPO}                                                                                                                                                                    & DQN                                                                                                                                                                     & A3C                                                                                                                                                                    & TRPO                                                                                                                                                                    & PPO                                                                                                                                                                    \\
\multicolumn{1}{l|}{\begin{tabular}[c]{@{}l@{}}*\\ FGSM\\ Skip\\ S-T\\ EDGE\\ PA-AD\\ AdvRL-GAN\\ AAT\end{tabular}} & \begin{tabular}[c]{@{}c@{}}17.88$\pm$0.33\\ 16.08$\pm$0.32\\ 16.11$\pm$1.45\\ 15.12$\pm$0.86\\ 10.12$\pm$1.33\\ -4.67$\pm$1.37\\ -3.56$\pm$1.54\\ -\textbf{10.21}$\pm$\textbf{0.67}\end{tabular} & \begin{tabular}[c]{@{}c@{}}20.10$\pm$0.40\\ 19.19$\pm$0.58\\ 18.11$\pm$0.64\\ 19.98$\pm$1.47\\ 5.34$\pm$1.57\\ -6.78$\pm$1.37\\ -2.41$\pm$1.43\\ -\textbf{8.15}$\pm$\textbf{1.52}\end{tabular} & \begin{tabular}[c]{@{}c@{}}19.92$\pm$0.35\\ 18.78$\pm$0.64\\ 18.13$\pm$0.37\\ 18.68$\pm$0.62\\ 3.73$\pm$1.35\\ -7.36$\pm$1.05\\ -5.76$\pm$0.94\\ -\textbf{12.06}$\pm$\textbf{1.34}\end{tabular} & \multicolumn{1}{c|}{\begin{tabular}[c]{@{}c@{}}20.96$\pm$0.05\\ 18.86$\pm$0.74\\ 19.38$\pm$1.01\\ 20.11$\pm$0.32\\ -2.98$\pm$1.84\\ -4.62$\pm$2.01\\ -5.46$\pm$1.51\\ -\textbf{7.98}$\pm$\textbf{1.18}\end{tabular}} & \begin{tabular}[c]{@{}c@{}}18.88$\pm$1.03\\ 17.38$\pm$0.24\\ 16.11$\pm$1.36\\ 17.32$\pm$0.51\\ 13.42$\pm$1.41\\ -1.41$\pm$1.74\\ -2.27$\pm$1.35\\ -\textbf{8.21}$\pm$\textbf{1.34}\end{tabular} & \begin{tabular}[c]{@{}c@{}}19.12$\pm$0.35\\ 19.07$\pm$0.91\\ 18.74$\pm$1.07\\ 17.18$\pm$0.67\\ 9.74$\pm$1.35\\ -\textbf{3.41}$\pm$\textbf{2.17}\\ -1.21$\pm$2.36\\ -1.48$\pm$2.41\end{tabular} & \begin{tabular}[c]{@{}c@{}}20.02$\pm$0.41\\ 18.43$\pm$0.5\\ 19.13$\pm$1.32\\ 17.76$\pm$1.54\\ 10.54$\pm$2.07\\ -8.13$\pm$1.87\\ -6.26$\pm$1.97\\ -\textbf{9.47}$\pm$\textbf{1.84}\end{tabular} & \multicolumn{1}{c|}{\begin{tabular}[c]{@{}c@{}}20.16$\pm$0.51\\ 19.99$\pm$0.84\\ 18.31$\pm$1.54\\ 18.21$\pm$2.07\\ 13.78$\pm$1.74\\ 1.32$\pm$1.98\\ -\textbf{3.46}$\pm$\textbf{2.13}\\ -3.24$\pm$2.07\end{tabular}} & \begin{tabular}[c]{@{}c@{}}17.96$\pm$1.52\\ 16.63$\pm$1.69\\ 16.41$\pm$2.04\\ 14.12$\pm$1.87\\ 12.42$\pm$1.64\\ -6.21$\pm$2.07\\ -9.56$\pm$1.73\\ -\textbf{9.74}$\pm$\textbf{2.13}\end{tabular} & \begin{tabular}[c]{@{}c@{}}18.74$\pm$0.25\\ 16.04$\pm$0.74\\ 17.41$\pm$1.86\\ 16.13$\pm$1.62\\ 8.44$\pm$1.73\\ -\textbf{7.37}$\pm$\textbf{1.61}\\ -6.19$\pm$1.87\\ -4.15$\pm$1.92\end{tabular} & \begin{tabular}[c]{@{}c@{}}19.12$\pm$0.53\\ 17.01$\pm$1.02\\ 17.19$\pm$0.98\\ 18.18$\pm$1.41\\ 18.41$\pm$1.23\\ -5.46$\pm$1.17\\ -7.21$\pm$1.32\\ -\textbf{9.67}$\pm$\textbf{1.12}\end{tabular} & \begin{tabular}[c]{@{}c@{}}19.46$\pm$0.34\\ 17.43$\pm$2.4\\ 18.74$\pm$1.52\\ 18.22$\pm$2.14\\ -4.33$\pm$1.52\\ -6.74$\pm$1.74\\ -\textbf{7.88}$\pm$\textbf{1.32}\\ -7.32$\pm$2.07\end{tabular} \\ \hline
\end{tabular}
}
\label{tab:defen}
\end{table*}

$\quad$
\\
\textbf{Defense evaluation}.
We conduct comparative experiments to investigate the attack performance of AAT against target policies with defensive capabilities.
Specifically, we implement three strategies to improve policy robustness: (1) all target policies adopt adversarial training~\cite{DBLP:Wu0WX21} to enhance target policy resistance to adversarial examples; (2) All target policies implement policy regularization via the SA-MDP~\cite{DBLP:0001CX0LBH20} theoretical framework, thereby enhancing policy robustness; (3) All target policies achieve robustness defense through a random Stackelberg game and by solving the equilibrium of a partially observable turn-based stochastic game (i.e., POTBSG~\cite{DBLP:McMahanW0X24}).
Table~\ref{tab:defen} presents the average results of the AAT attack conducted 10 times under 10 different random seeds.
The results show that AAT can effectively reduce the performance of the target policies in the setting of defensive adversarial training.
FGSM, Skip, and S-T fail to reduce the performance significantly, exhibiting minimal impact on adversarially trained target policies. 
Although EDGE and AdvRL-GAN reduce the performance of target policies, they remain suboptimal compared to AAT.
For the SA-MDP and POTBSG defenses, AAT can still effectively reduce the cumulative reward of the target policy overall.
Overall, while existing defense methods can resist FGSM, Skip, S-T and EDGE attacks, and partially mitigate PA-AD and AdvRL-GAN attacks, they fail to effectively resist AAT attacks.

To further verify the defensive capability of existing methods against AAT, we adopt two classical defense strategies: (1) applying spatial smoothing (i.e., median filtering, MF) to blur input states and suppress image perturbations, and (2) reconstructing input states with MimicDiffusion~\cite{DBLP:SongLP024} to remove adversarial noise.
The experimental results (see Appendix~\ref{sup:defense_eval}) indicate that although MF and MimicDiffusion provide better defense than adversarial training, AAT can still effectively reduce the performance of the target policy.

\begin{figure*}[t]
    \centering
    \subfigure[\tiny White-box (DQN)]{\includegraphics[width=0.24\textwidth]{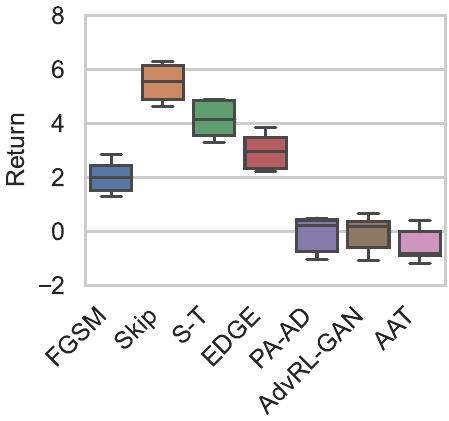}}
    \subfigure[\tiny Black-box (DQN)]{\includegraphics[width=0.24\textwidth]{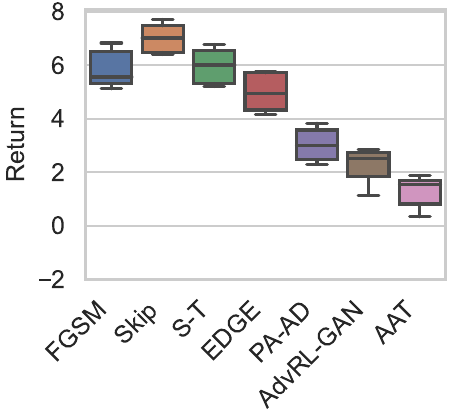}}
    \subfigure[\tiny White-box (PPO)]{\includegraphics[width=0.24\textwidth]{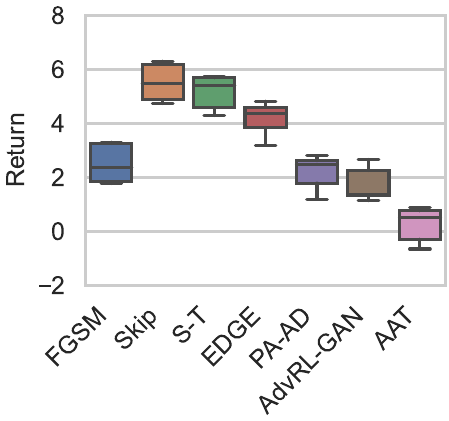}}
    \subfigure[\tiny Black-box (PPO)]{\includegraphics[width=0.24\textwidth]{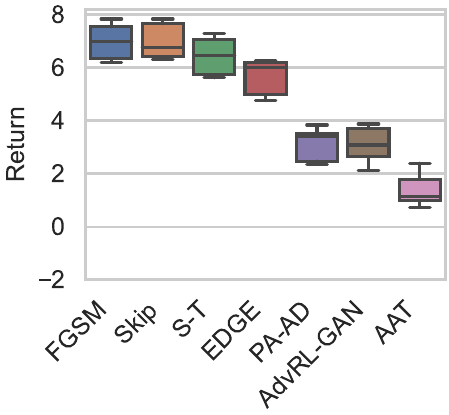}}
    \caption{The performance of different attack methods in Gfootball. The expected cumulative rewards achieved by DQN and PPO target policies are 7.12 and 7.88 respectively. }
    \label{fig:gfootball_box}
\end{figure*}

\begin{figure}[t]
    \centering
    \subfigure{\includegraphics[width=0.48\textwidth]{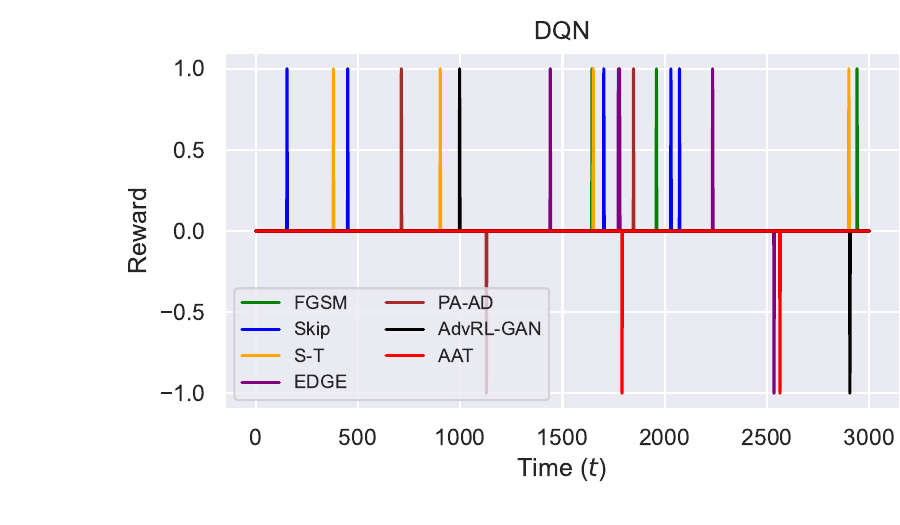}}
    \subfigure{\includegraphics[width=0.48\textwidth]{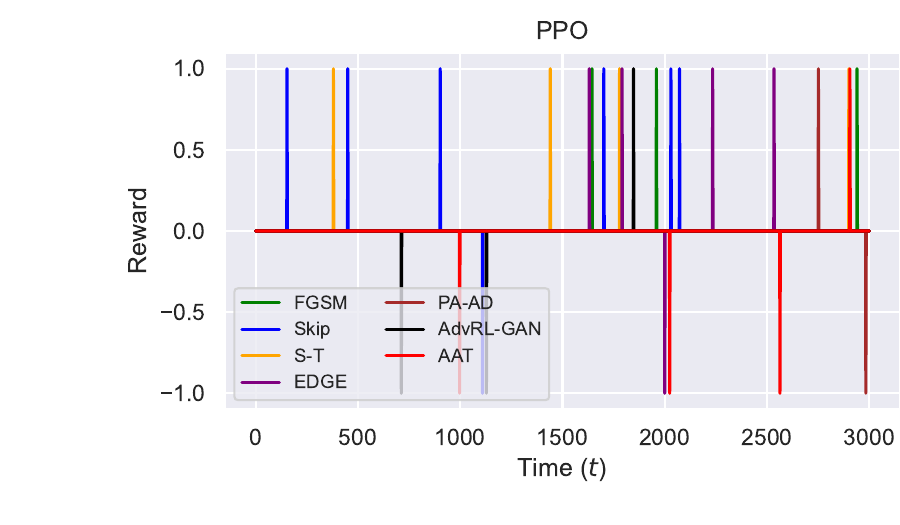}}
  \caption{Visualization of immediate  rewards for target policy. The x-axis represents the moments from the start to the end of the target policy's execution in the Gfootball environment, and the y-axis is the reward value obtained at each time step.}
  \label{fig:reward_image}
\end{figure}

$\quad$
\\
\textbf{Time-correlated evaluation}. 
To explore the benefits of the time-correlated adversarial perturbations of AAT, we conduct comparative experiments in the Gfootball environment.
As shown in Fig.~\ref{fig:gfootball_box},
Skip and S-T methods exhibit lower attack performance because their overemphasis on attack frequency leads to a failure in effectively reducing the cumulative reward of the target policy.
FGSM exhibits superior attack efficacy in white-box scenarios, but not in black-box scenarios where information about the target policy is unavailable.
Although both PA-AD and ADV employ RL to learn an attack policy for identifying optimal perturbations, they fail to achieve optimal attack performance due to overlooking the influence of historical information.
Compared with these methods, AAT generates adversarial perturbations based on both the current state and historical data to mislead the target policy while adaptively fine-tuning perturbations based on the advantage of current inputs, thereby achieving superior attack performance in both white-box and black-box scenarios.

Furthermore, during the process of attacking the target policy, we observe that gradient-based attack methods (e.g., FGSM, Skip, and S-T) often cause the ball handler to lose control of the ball.
In contrast, the AAT method tends to force ball handlers into passing to opponents, accelerating scoring opportunities for the opposing team.
Fig.~\ref{fig:reward_image} illustrates the immediate rewards received by the target policy at each time step under different attack methods.
The adversarial perturbations generated by FGSM, Skip, S-T, and EDGE methods reduce the immediate rewards of the target policy to zero.
However, the target policy achieves positive rewards at most time steps due to its robustness.
While AdvRL-GAN can mislead the target into receiving negative rewards, it fails to achieve optimal attacks due to neglecting the impact of historical information on current perturbations.
In contrast, adversarial perturbations generated by AAT result in more negative rewards for the target policy, and consequently reducing the cumulative returns of the target policy.

$\quad$
\\
\noindent \textbf{Generalizability evaluation}.
To verify the generalization capability of AAT, we conduct experiments in cooperative multi-agent environments based on the experimental setup described in FGSMLW~\cite{DBLP:LinDZLP20}.       
Our evaluation employs QMIX as the target policy and uses the 2s3z map in the StarCraft Multi-Agent Challenge (SMAC).
In this scenario, each cooperative multi-agent team comprises five units: two Stalkers and three Zealots.
Within this framework, the attacker targets a single Stalker as the victim agent, perturbing its observations with the objective of degrading the team's overall cooperative performance.
The experiment results (see Appendix~\ref{sup:genera_multi}) show that AAT significantly reduces both the team reward and win rate more effectively than a range of baseline methods, including EDGE, PA-AD, AdvRL-GAN, TSGE, and PIA.
Furthermore, in terms of reward reduction, AAT even surpasses the FGSMLW method, which specializes in attacking c-MARL algorithms.
More multi-agent settings (e.g., competitive environments) will be the focus of our subsequent research.

\subsection{Adversarial perturbations concealment}
\noindent \textbf{Adversarial example visualization}.
Adversarial examples aim to mislead the policy into making incorrect decisions by introducing imperceptible perturbations that minimally alter the original input to evade detection.
To analyze these perturbations, we visually contrast adversarial examples with their original examples and quantify the perturbation intensity using mean squared error (MSE) and structural similarity index metric (SSIM)~\cite{DBLP:WangBSS04}. 
The results visualized in Appendix~\ref{sup:adver_noise_conceal} show that AAT generates adversarial examples with minimal perturbations, while other attack methods generate adversarial examples that exhibit significant differences from the original states.

$\quad$
\\
\noindent \textbf{Adversarial example detection}.
To further validate the stealthiness of the adversarial examples generated by AAT, we employ two detection methods: the Expected Perturbation Score-based adversarial detection (EPS-AD)~\cite{DBLP:ZhangLYYL0T23} and Layer Regression (LR)~\cite{DBLP:MumcuY25}. 
EPS-AD quantifies the difference between test samples and natural samples using the maximum mean discrepancy of the expected perturbation score.
LR identifies adversarial examples by analyzing how attacks affect different DNN layers to varying degrees.
In our experiments, EPS-AD and LR detect 1,000 adversarial states generated by each attack method. 
We use the detection rate, defined as $r = \frac{n}{m}$, where $n$ is the number of successfully detected adversarial examples and $m$ is the total number tested.
A low detection rate shows that the attack has strong evasion capability, indicating that the generated adversarial examples have better stealthiness.

Table~\ref{tab:sup_detection_results} in Appendix~\ref{sup:adver_noise_conceal} presents the detection performance of two detection methods (EPS-AD and LR) on states generated by various adversarial attack models across three Atari environments (Pong, Seaquest, and Qbert).
This result indicates that AAT can effectively evade adversarial detection mechanisms.
More detailed analysis of experimental results is provided in Appendix~\ref{sup:adver_noise_conceal}.

\subsection{Efficiency exploration}
\label{sup:effic_explor}
We explore the time efficiency of generating adversarial examples for FGSM, Skip, S-T, EDGE, AdvRL-GAN, and AAT.
Table~\ref{tab:time_sample} provides the time required for these methods to generate an adversarial example in the Breakout, Pong, Seaquest, Humanoid, Hopper, and Walker environments. 
This time is calculated as the total generation time divided by the number of adversarial examples.
We can see that FGSM calculates adversarial perturbations based on gradients multiple steps, hence it is comparatively time-consuming.
Skip and S-T methods are slower because of critical frame calculations and threshold comparisons. 
EDGE efficiently provides key trajectories through linear Gaussian methods, but the C$\&$W method is time-consuming due to multiple optimization processes. 
AdvRL-GAN generates examples via a RL method but takes a long time due to the need to rebuild examples using a GAN.
Compared to the above methods, AAT directly outputs adversarial perturbations based on current input information without a complex computational process, thus it can quickly generate adversarial examples.

\begin{table}[]
\caption{The average time to generate an adversarial examples. Note that the time unit is milliseconds.}
\label{tab:time_sample}
\centering
\setlength{\tabcolsep}{0.8mm} 
\begin{tabular}{lccccll}
\hline
Method & Pong & Breakout & Seaquest & Humanoid & Hopper & Walker \\ \hline
\begin{tabular}[c]{@{}l@{}}FGSM\\ Skip\\ S-T\\ EDGE\\ AdvRL-GAN\\ AAT\end{tabular} & 
\begin{tabular}[c]{@{}c@{}}5.3\\ 8.5\\ 13.4\\ 12.8\\ 14.2\\ \textbf{1.56}\end{tabular} & 
\begin{tabular}[c]{@{}c@{}}8.4\\ 13.7\\ 9.8\\ 16.6\\ 18.8\\ \textbf{2.2}\end{tabular} & 
\begin{tabular}[c]{@{}c@{}}4.3\\ 7.2\\ 5.5\\ 9.4\\ 9.83\\ \textbf{2.4}\end{tabular} & 
\begin{tabular}[c]{@{}c@{}}6.3\\ 8.4\\ 8.3\\ 4.1\\ 10.58\\ \textbf{1.5}\end{tabular} & 
\begin{tabular}[c]{@{}l@{}}9.6\\ 13.2\\ 14.1\\ 13.5\\ 15.4\\ \textbf{2.1}\end{tabular} & 
\begin{tabular}[c]{@{}l@{}}7.8\\ 12.9\\ 15.6\\ 16.4\\ 16.8\\ \textbf{2.6}\end{tabular} \\ \hline
\end{tabular}
\end{table}

\subsection{Training data analysis}
\noindent\textbf{Training data quality}.
In this section, we investigate the impact of training data quality on AAT performance by analyzing the cumulative rewards of the target policy in Seaquest and Qbert.
The training data is divided into expert data, medium data, and random data, with each category containing 20,000 trajectories.
Specifically, expert data comprises trajectories where the cumulative reward is reduced by more than 80$\%$ with the attack of FGSM, medium data includes trajectories where the cumulative reward reduction falls between 50$\%$ and 80$\%$, and random data consists of trajectories obtained by randomly attacking the target strategy state, with reward reductions ranging from 0$\%$ to 50$\%$.

\begin{table}[t]
\centering
\caption{The impact of different data quality on AAT performance. The table displays the mean values obtained from 10 attack experiments.}
\setlength{\tabcolsep}{1.5mm}
\begin{tabular}{l|l|cccc}
\hline
Environments & Data type                                                                       & DQN                                                               & A3C                                                               & TRPO                                                              & PPO                                                               \\ \hline
Seaquest     & \begin{tabular}[c]{@{}l@{}}Expert data\\ Medium data\\ Random data\end{tabular} & \begin{tabular}[c]{@{}c@{}}\textbf{359.82}\\ 367.41\\ 2325.65\end{tabular} & \begin{tabular}[c]{@{}c@{}}\textbf{353.64}\\ 373.56\\ 2354.29\end{tabular} & \begin{tabular}[c]{@{}c@{}}360.23\\ \textbf{359.89}\\ 2332.79\end{tabular} & \begin{tabular}[c]{@{}c@{}}359.67\\ 393.45\\ 2467.83\end{tabular} \\ \hline
Qbert        & \begin{tabular}[c]{@{}l@{}}Expert data\\ Medium data\\ Random data\end{tabular} & \begin{tabular}[c]{@{}c@{}}\textbf{236.89}\\ 379.46\\ 5020.57\end{tabular} & \begin{tabular}[c]{@{}c@{}}\textbf{183.92}\\ 245.67\\ 5596.35\end{tabular} & \begin{tabular}[c]{@{}c@{}}\textbf{203.89}\\ 299.67\\ 4478.36\end{tabular} & \begin{tabular}[c]{@{}c@{}}\textbf{211.67}\\ 374.56\\ 6340.27\end{tabular} \\ \hline
\end{tabular}
\label{tab:data_qulity}
\end{table}

Table~\ref{tab:data_qulity} presents the adversarial attack results of AAT on the target policies. 
In the clean state of Seaquest, the cumulative rewards achieved by DQN, A3C, TRPO, and PPO are 2385.32, 2856.89, 1856.89, and 2108.38, respectively. 
Similarly, the cumulative rewards achieved by DQN, A3C, TRPO, and PPO in clean state Qbert are 17890.93, 21840.35, 22840.56, and 24583.68, respectively.
As can be seen, in the Seaquest game with medium data, AAT can still achieve efficient attacks under the guidance of the advantage function, even surpassing the performance of expert data when attacking TRPO.

\begin{figure}
    \centering
    \includegraphics[width=0.95\linewidth]{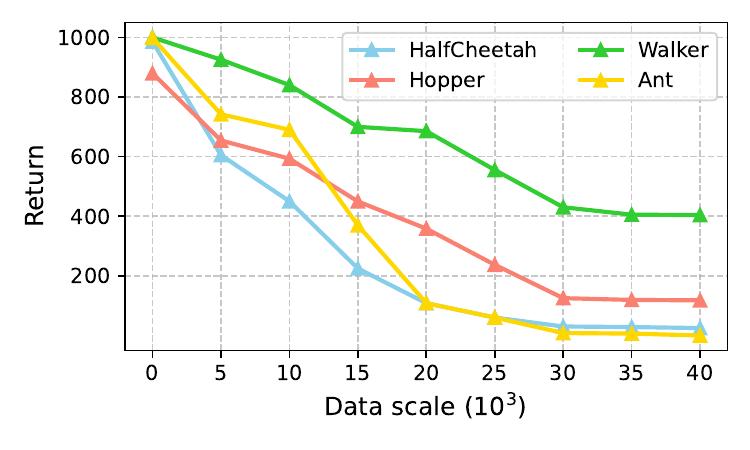}
    \caption{The impact of data scale on AAT. The x-axis represents the scale of the data. For example, 5 represents 5,000 historical trajectories.}
    \label{fig:data_size_show}
\end{figure}

$\quad$
\\
\noindent\textbf{Training data size}.
The size of the dataset is also an important factor that affects the performance of AAT.
To this end, we test the sensitivity of AAT to dataset size across the HalfCheetah, Hopper, Humanoid, Pendulum, and Walker environments.
AAT is trained on datasets of varying sizes until losses converge, after which it undergoes 10 evaluations in the respective environment.
Fig.~\ref{fig:data_size_show} shows the cumulative reward of the target policy D4PG when attacked by AAT trained on datasets of different sizes.
Experimental results indicate that the attack performance of AAT gradually improves with the expansion of dataset size, particularly before 30,000 samples (i.e., trajectory).
However, when the dataset size exceeds 35,000 samples, the improvement in AAT's attack performance gradually becomes slower and even fluctuates.
Furthermore, Appendix~\ref{sup:training_data} provides experiments demonstrating the impact of \textit{training data diversity} and different \textit{data collection methods} on the performance of AAT, as well as \textit{the advantages of AAT}.
We also conduct experiments on the impact of different substitution policies on AAT performance in a black-box scenario.
Experimental results in Appendix~\ref{sup:substitu_policy} indicate that the performance of AAT in black-box scenarios is largely unaffected by the specific choice of substitution policies.

\begin{figure*}[t]
  \centering
  \subfigure{\includegraphics[width=0.24\textwidth]{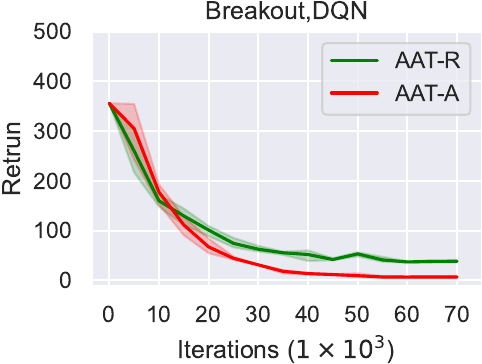}}
  \subfigure{\includegraphics[width=0.24\textwidth]{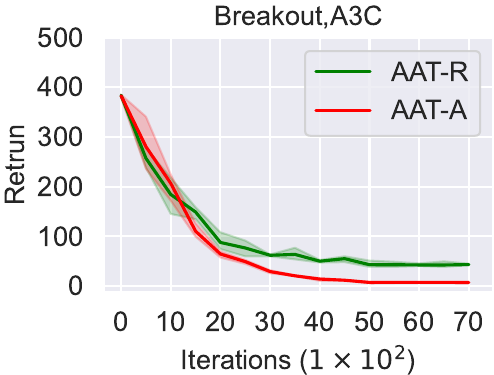}}
  \subfigure{\includegraphics[width=0.24\textwidth]{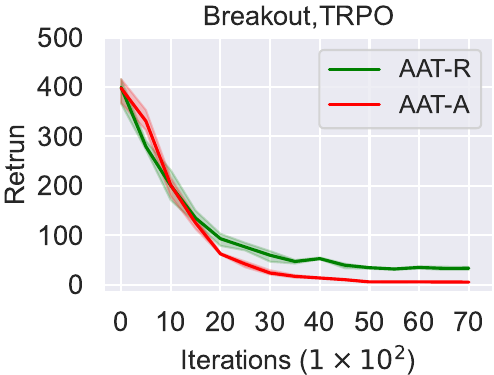}}
  \subfigure{\includegraphics[width=0.24\textwidth]{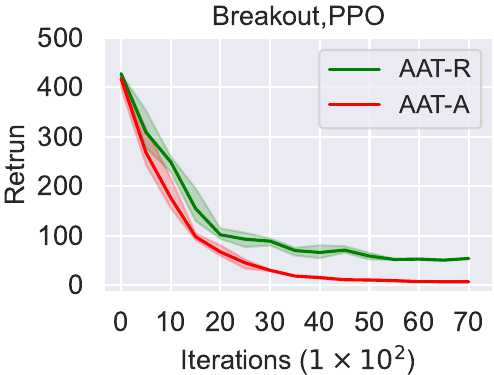}}
  \caption{The impact of weighted advantages on AAT performance.}
\label{fig:advantage_image}
\end{figure*}

\subsection{Ablation study}
\noindent\textbf{Ablation on the weighted advantage.}
To evaluate the impact of the weighted advantage, we perform ablation studies across six different games.
During AAT training, attack performance metrics (i.e., cumulative rewards for target policies) are assessed every 1,000 iterations to monitor optimization progress.
Fig.~\ref{fig:advantage_image} illustrates the attack performance in the Breakout environments (see Appendix~\ref{sup:ab_result} for more experimental results), where AAT-R denotes the method incorporating future reward sum learning, while AAT-A indicates the approach integrating the weighted advantage learning.
Overall, AAT-R and AAT-A demonstrate differences in attack effectiveness and convergence speed across various target policies within identical environments.
Before $10 \times 10^2$ iterations, the effects of AAT-R and AAT-A attacks closely resemble each other in the Breakout environment. 
As the number of iterations increases, the attack effectiveness of AAT-A begins to surpass that of AAT-R, eventually converging around $40 \times 10^2$.
In the Pong and Chopper Command environments, AAT-A consistently outperforms AAT-R, which highlights the effectiveness of the weighted advantage in enhancing attack performance. 

In addition, we verify the impact of the weighted advantage and the advantage calculated by the Q and V functions (i.e., ordinary advantage) on the performance of AAT, and test the attack performance of AAT with different advantage learning methods in Atari.
The experimental results (refer to Appendix~\ref{sup:ab_result}) show that the weighted advantage is generally superior to the ordinary advantage, especially achieving a comprehensive surpass in black-box scenarios.

\begin{figure}
    \centering
    \includegraphics[width=0.98\linewidth]{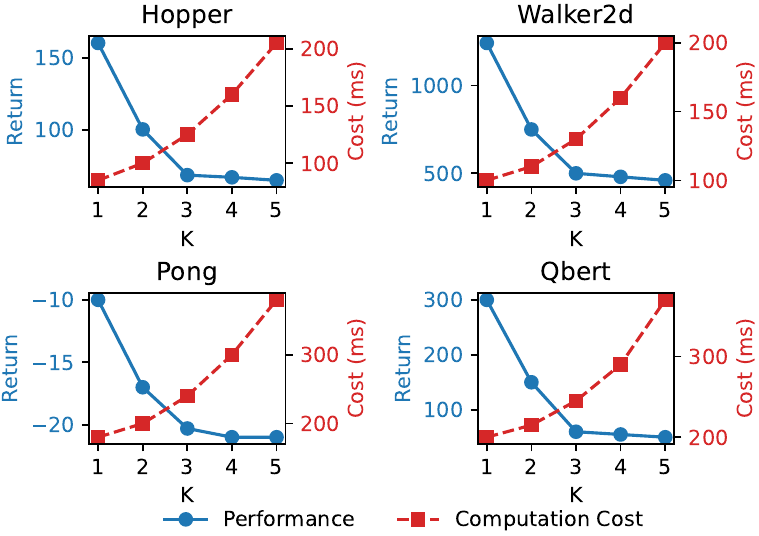}
    \caption{The trade-off relationship between model performance and computational overhead. Cost means that each output token of the A100 GPU costs approximately.}
    \label{fig:K_abotion}
\end{figure}

$\quad$
\\
\noindent \textbf{MSCSA performance}.
We compare the performance of Adversarial Attack Training (AAT) using MSCSA and traditional self-attention (SA) in the DeepMind Control Suite, aiming to analyze their effectiveness in generating adversarial perturbations. 
To evaluate their adversarial perturbation generation capabilities, we conduct experiments on D4PG and A3C agents across 10 random seeds, with detailed results summarized in Appendix~\ref{sup:ab_result}.
The experimental results demonstrate that MSCSA outperforms SA in both white-box and black-box scenarios.

$\quad$
\\
\textbf{Constraint investigation.}
We study the impact of employing different constraint normal in Eq.4 on the outcomes of the attacks.
Specifically, $L_1$, $L_2$, and $L_\infty$ are utilized to compute the difference between adversarial examples and original states, and they serve as the loss function for the AAT method.
Fig.\ref{fig:normal_image} of Appendix~\ref{sup:ab_result} depicts the attack outcomes under various L-norms.
In the current context, $L_2$ is preferable over $L_1$ and $L_\infty$ as the constraint norm for AAT.

$\quad$
\\
\noindent \textbf{Ablation on the number of scales $K$}.
To evaluate the impact of the number of temporal scales used in MSCSA, we conduct ablation experiments by varying $K \in \{1, 2, 3, 4, 5\}$, where the window length of each scale is a multiple of the previous one, and length of $L_1$ is 5 (i.e., $L_K=h^{K-1}*L_1$, where $h=2$ and $L_1=5$).  
Fig.~\ref{fig:K_abotion} shows that increasing $K$ from 1 to 3 consistently improves model performance, confirming that multi-scale modeling captures both short- and long-term dependencies.
However, further increasing K beyond 3 yields diminishing returns and introduces noticeable computational overhead due to the quadratic cost $\mathcal{O}(L_k^2 d)$ per scale, where $d$ is the dimension of the representation for each token. In particular, using $K = 4 $ or $5$ provides only marginal gains while significantly increasing latency, which may be undesirable in real-time or resource-constrained scenarios. We therefore set $K = 3$ as the default configuration, balancing efficiency and accuracy.

$\quad$
\\
\textbf{Ablation on window initialization and growth strategy.}
To assess the impact of temporal window design in MSCSA, we conduct ablations on both the growth strategy and the initial window size ($Len$).
The experimental results in Appendix~\ref{sup:ab_result} show that exponential growth with $r=2$ and $Len=5$ achieves the best average reward while maintaining moderate computational cost. 
To further examine the sensitivity of MSCSA to the initial window size, we systematically vary $Len$ from 1 to 10 with a step size of 1.
Keeping the growth policy and the scale factor K constant, we perform repeated runs with 10 different random seeds for each $Len$ and record the resulting attack performance of AAT.
As shown in Fig.~\ref{fig:len_aboty} (see Appendix~\ref{sup:ab_result}), in most environments, setting the window size to 5 achieves better attack performance in both Atari and MuJoCo environments.

\section{Related work}
\noindent \textbf{Adversarial attacks}.
In Deep Reinforcement Learning (DRL), value functions and target policies are often approximated using Deep Neural Networks (DNNs), inheriting the sensitivity of DNNs to adversarial attacks.
Attackers can exploit this weakness by injecting subtle perturbations into states of the agent, thus causing the agent to fail its intended tasks.
Gradient-based attack methods~\cite{DBLP:HuangPGDA17} first attempt to attack DRL policy by applying FGSM~\cite{DBLP:GoodfellowSS14} to the state at each time step.
On this basis, Behzadan et al.~\cite{DBLP:BehzadanM17} study the transferability of adversarial examples in different DRL models and evaluate the robustness of DRL adversarial training-time and test-time attacks.
In order to improve the transferability of adversarial examples, Pattanaik et al.~\cite{DBLP:PattanaikTLBC18} develop a method for creating adversarial examples by increasing the likelihood of choosing the worst action.
In addition, Yang et al.~\cite{DBLP:abs-1905-12282} propose universal adversarial examples, aiming to generate a universal perturbation that is added to all subsequent states to alter the agent goal.

To reduce the cost of attacks,
Jernej et al.~\cite{DBLP:KosS17} propose a critical frame attack method, where perturbations are injected into $N$ frames when their reward value exceeds a predefined threshold.
Similarly,  Chen et al.~\cite{DBLP:LinHLS0S17} use a value function to measure the variance between an agent’s best and worst actions over multiple steps, leveraging this variance to identify critical states.
Then, the attacker launches attacks on these critical states to effectively reduce the attack frequency while maintaining superior attack performance.
Jianwen et al.~\cite{DBLP:SunZX0ZCL20} introduce two techniques: (1) a critical point attack, which predicts future states and evaluates attack strategies using a damage awareness metric, and (2) an antagonist attack, where an adversarial agent optimizes its strategy by exploiting the reward function of the victim agent.
However, gradient-based attack methods only focus on using traditional techniques to attack specific states, while ignoring the ultimate goal of the entire DRL task.

To solve this issue, reward-based attack methods~\cite{gronauer2022multi} leverage RL techniques to generate adversarial perturbations aligned with sequence decision-making processes. Specifically, these methods focus on attack strategies in game environments that holistically account for both the current state and future rewards of the target policy, rather than optimizing perturbations in isolation. 
For example, Sun et al.~\cite{DBLP:SunZLH22} propose the policy adversarial actor-director (PA-AD) method, where a director and an actor collaborate to identify the optimal state perturbation.
Similarly, Zhang et al.~\cite{DBLP:ZhangCBH21} propose a framework for alternating training with learning opponents (ATLA), which follows the idea of adversarial games and uses policy gradients to train the opponent online with the agent.
Additionally, Chen et al.~\cite{DBLP:YuS22} introduce a method that utilizes GANs to generate adversarial examples, and employs the REINFORCE~\cite{DBLP:journals/ml/Williams92} algorithm to train an adversarial strategy, which can generate natural adversarial examples with high attack performance.

However, both gradient-based and reward-based methods fail to effectively capture the dependency between the current state and historical information when generating adversarial examples. As a consequence, the resulting adversarial examples lack consistent influence across decision steps, making these examples ineffective at disrupting the agent’s cumulative return in sequential decision-making. Since this deficiency stems from the adversarial examples generation mechanism itself rather than the attacker’s access to the model, the dependency issue persists in both white-box and black-box scenarios.
Unlike prior methods, AAT can effectively enhance attack performance by capturing historical information to generate time-correlated adversarial examples and leveraging weighted advantages to explore adversarial perturbations with high rewards during adversarial example generation.

$\quad$
\\
\textbf{Transformers for sequential decision-making}.
The success of Transformers in natural language processing~\cite{DBLP:VaswaniSPUJGKP17} motivates their application in other fields, including object detection~\cite{DBLP:LiDDX24,DBLP:MoonJJO24}, speech recognition~\cite{DBLP:Deschamps-Berger22, DBLP:GaoZX023}, and image classification~\cite{zhao2023hyperspectral, DBLP:MaLFW24}.
Recently, Transformers are widely applied to sequence decision-making tasks.
For example, Decision Transformer (DT)~\cite{DBLP:DT} is one of the first studies to treat the policy optimization problem as a sequence modeling task and trains the Transformer policy based on historical data and expected future returns.
However, DT performance is hardly satisfactory when there is a lack of optimal offline data.
Inspired by this limitation, Q-learning DT~\cite{DBLP:YamagataKS23} uses Q-values to refine the returns in the training data, thus giving DT the ability of Q-learning to handle sub-optimal data.
Other variants like online DT (ODT)~\cite{DBLP:ZhengZG22} unifies offline pre-training and online fine-tuning, and Multi-Game DT (MGDT)~\cite{DBLP:LeeNYLFGFXJMM22} enables cross-task generalization for Atari games without task-specific fine-tuning.

Unlike previous Transformer models, we propose a multi-scale causal self-attention mechanism that models historical segments of varying lengths and facilitates information interaction across scales to capture multi-granularity temporal features.
Moreover, we introduce a weighted advantage mechanism that quantifies the effectiveness of adversarial perturbations in the current state. 
By maximizing the entropy of this advantage distribution, AAT encourages exploration of diverse perturbation strategies, which enables the sequence model to identify optimal perturbations tailored to different trajectories during sequence generation, consequently generating adversarial examples with higher performance.

\section{Conclusion}
In this paper, we investigate the impact of historical dependencies on the performance of adversarial examples and propose a novel method AAT that is able to generate time-correlated adversarial examples with high attack performance.
Experimental results demonstrate the effectiveness of AAT in both discrete and continuous action environments.
However, AAT currently still focuses on single-agent attacks.
In the future, we plan to extend the framework structure of AAT to increase the attack effectiveness in multi-agent scenarios.

\section*{Acknowledgments}
We gratefully acknowledge support from the National Natural Science Foundation of China (No. 62076259), the Fundamental and Applicational Research Funds of Guangdong province (No. 2023A1515012946), and the Fundamental Research Funds for the Central Universities-Sun Yat-sen University. This research was supported by Meituan.




\bibliographystyle{IEEEtran}
\bibliography{cas-refs}

@inproceedings{DBLP:HuangPGDA17,
  author       = {Sandy H. Huang and
                  Nicolas Papernot and
                  Ian J. Goodfellow and
                  Yan Duan and
                  Pieter Abbeel},
  title        = {Adversarial Attacks on Neural Network Policies},
  booktitle    = {{ICLR}},
  publisher    = {OpenReview.net},
  year         = {2017}
}

@inproceedings{DBLP:LinHLS0S17,
  author       = {Yen{-}Chen Lin and
                  Zhang{-}Wei Hong and
                  Yuan{-}Hong Liao and
                  Meng{-}Li Shih and
                  Ming{-}Yu Liu and
                  Min Sun},
  title        = {Tactics of Adversarial Attack on Deep Reinforcement Learning Agents},
  booktitle    = {{ICLR}},
  publisher    = {OpenReview.net},
  year         = {2017}
}

@inproceedings{DBLP:GoodfellowSS14,
  author       = {Ian J. Goodfellow and
                  Jonathon Shlens and
                  Christian Szegedy},
  title        = {Explaining and Harnessing Adversarial Examples},
  booktitle    = {{ICLR}},
  year         = {2015}
}

@inproceedings{DBLP:SzegedyZSBEGF13,
  author       = {Christian Szegedy and
                  Wojciech Zaremba and
                  Ilya Sutskever and
                  Joan Bruna and
                  Dumitru Erhan and
                  Ian J. Goodfellow and
                  Rob Fergus},
  title        = {Intriguing properties of neural networks},
  booktitle    = {{ICLR}},
  year         = {2014}
}

@inproceedings{DBLP:SunZX0ZCL20,
  author       = {Jianwen Sun and
                  Tianwei Zhang and
                  Xiaofei Xie and
                  Lei Ma and
                  Yan Zheng and
                  Kangjie Chen and
                  Yang Liu},
  title        = {Stealthy and Efficient Adversarial Attacks against Deep Reinforcement
                  Learning},
  booktitle    = {{AAAI}},
  pages        = {5883--5891},
  publisher    = {{AAAI} Press},
  year         = {2020}
}

@inproceedings{DBLP:DT,
  author       = {Lili Chen and
                  Kevin Lu and
                  Aravind Rajeswaran and
                  Kimin Lee and
                  Aditya Grover and
                  Michael Laskin and
                  Pieter Abbeel and
                  Aravind Srinivas and
                  Igor Mordatch},
  title        = {Decision Transformer: Reinforcement Learning via Sequence Modeling},
  booktitle    = {NeurIPS},
  pages        = {15084--15097},
  year         = {2021}
}

@inproceedings{DBLP:KostrikovNL22,
  author       = {Ilya Kostrikov and
                  Ashvin Nair and
                  Sergey Levine},
  title        = {Offline Reinforcement Learning with Implicit Q-Learning},
  booktitle    = {{ICLR}
                  },
  publisher    = {OpenReview.net},
  year         = {2022}
}

@inproceedings{DBLP:VaswaniSPUJGKP17,
  author       = {Ashish Vaswani and
                  Noam Shazeer and
                  Niki Parmar and
                  Jakob Uszkoreit and
                  Llion Jones and
                  Aidan N. Gomez and
                  Lukasz Kaiser and
                  Illia Polosukhin},
  title        = {Attention is All you Need},
  booktitle    = {NeurIPS},
  pages        = {5998--6008},
  year         = {2017}
}

@inproceedings{DBLP:BehzadanM17,
  author       = {Vahid Behzadan and
                  Arslan Munir},
  title        = {Vulnerability of Deep Reinforcement Learning to Policy Induction Attacks},
  booktitle    = {{MLDM}},
  series       = {Lecture Notes in Computer Science},
  volume       = {10358},
  pages        = {262--275},
  publisher    = {Springer},
  year         = {2017}
}

@inproceedings{DBLP:PattanaikTLBC18,
  author       = {Anay Pattanaik and
                  Zhenyi Tang and
                  Shuijing Liu and
                  Gautham Bommannan and
                  Girish Chowdhary},
  title        = {Robust Deep Reinforcement Learning with Adversarial Attacks},
  booktitle    = {{AAMAS}},
  pages        = {2040--2042},
  publisher    = {International Foundation for Autonomous Agents and Multiagent Systems
                  Richland},
  year         = {2018}
}

@article{DBLP:abs-1905-12282,
  author       = {L{\'{e}}onard Hussenot and
                  Matthieu Geist and
                  Olivier Pietquin},
  title        = {Targeted Attacks on Deep Reinforcement Learning Agents through Adversarial
                  Observations},
  journal      = {CoRR},
  volume       = {abs/1905.12282},
  year         = {2019}
}

@inproceedings{DBLP:KosS17,
  author       = {Jernej Kos and
                  Dawn Song},
  title        = {Delving into adversarial attacks on deep policies},
  booktitle    = {{ICLR}},
  publisher    = {OpenReview.net},
  year         = {2017}
}

@inproceedings{DBLP:YuS22,
  author       = {Mengran Yu and
                  Shiliang Sun},
  title        = {Natural Black-Box Adversarial Examples against Deep Reinforcement
                  Learning},
  booktitle    = {{AAAI}},
  pages        = {8936--8944},
  publisher    = {{AAAI} Press},
  year         = {2022}
}

@inproceedings{DBLP:SharifBBR16,
  author       = {Mahmood Sharif and
                  Sruti Bhagavatula and
                  Lujo Bauer and
                  Michael K. Reiter},
  title        = {Accessorize to a Crime: Real and Stealthy Attacks on State-of-the-Art
                  Face Recognition},
  booktitle    = {{CCS}},
  pages        = {1528--1540},
  publisher    = {{ACM}},
  year         = {2016}
}

@article{DBLP:MnihKSGAWR13,
  author       = {Volodymyr Mnih and
                  Koray Kavukcuoglu and
                  David Silver and
                  Alex Graves and
                  Ioannis Antonoglou and
                  Daan Wierstra and
                  Martin A. Riedmiller},
  title        = {Playing Atari with Deep Reinforcement Learning},
  journal      = {CoRR},
  volume       = {abs/1312.5602},
  year         = {2013}
}

@inproceedings{DBLP:SchulmanLAJM15,
  author       = {John Schulman and
                  Sergey Levine and
                  Pieter Abbeel and
                  Michael I. Jordan and
                  Philipp Moritz},
  title        = {Trust Region Policy Optimization},
  booktitle    = {{ICML}},
  volume       = {37},
  pages        = {1889--1897},
  publisher    = {JMLR.org},
  year         = {2015}
}

@article{DBLP:SchulmanWDRK17,
  author       = {John Schulman and
                  Filip Wolski and
                  Prafulla Dhariwal and
                  Alec Radford and
                  Oleg Klimov},
  title        = {Proximal Policy Optimization Algorithms},
  journal      = {CoRR},
  volume       = {abs/1707.06347},
  year         = {2017}
}

@inproceedings{DBLP:OikarinenZMDW21,
  author       = {Tuomas P. Oikarinen and
                  Wang Zhang and
                  Alexandre Megretski and
                  Luca Daniel and
                  Tsui{-}Wei Weng},
  title        = {Robust Deep Reinforcement Learning through Adversarial Loss},
  booktitle    = {NeurIPS},
  pages        = {26156--26167},
  year         = {2021}
}

@article{DBLP:abs-2305-17342,
  author       = {Xiangyu Liu and
                  Souradip Chakraborty and
                  Yanchao Sun and
                  Furong Huang},
  title        = {Rethinking Adversarial Policies: {A} Generalized Attack Formulation
                  and Provable Defense in Multi-Agent {RL}},
  journal      = {CoRR},
  volume       = {abs/2305.17342},
  year         = {2023}
}

@inproceedings{DBLP:SunZLH22,
  author       = {Yanchao Sun and
                  Ruijie Zheng and
                  Yongyuan Liang and
                  Furong Huang},
  title        = {Who Is the Strongest Enemy? Towards Optimal and Efficient Evasion
                  Attacks in Deep {RL}},
  booktitle    = {{ICLR}},
  publisher    = {OpenReview.net},
  year         = {2022}
}

@inproceedings{DBLP:ZhangCBH21,
  author       = {Huan Zhang and
                  Hongge Chen and
                  Duane S. Boning and
                  Cho{-}Jui Hsieh},
  title        = {Robust Reinforcement Learning on State Observations with Learned Optimal
                  Adversary},
  booktitle    = {ICLR},
  publisher    = {OpenReview.net},
  year         = {2021}
}

@article{DBLP:HuangCF23,
  author       = {Jeffrey Huang and
                  Ho Jin Choi and
                  Nadia Figueroa},
  title        = {Trade-Off Between Robustness and Rewards Adversarial Training for
                  Deep Reinforcement Learning Under Large Perturbations},
  journal      = {{IEEE} Robotics Autom. Lett.},
  volume       = {8},
  number       = {12},
  pages        = {8018--8025},
  year         = {2023}
}

@article{schott2024robust,
  author={Schott, Lucas and Delas, Josephine and Hajri, Hatem and Gherbi, Elies and Yaich, Reda and Boulahia-Cuppens, Nora and Cuppens, Frederic and Lamprier, Sylvain},
  title={Robust Deep Reinforcement Learning Through Adversarial Attacks and Training: A Survey},
  journal={arXiv preprint arXiv:2403.00420},
  year={2024}
}

@inproceedings{DBLP:DosovitskiyB0WZ21,
  author       = {Alexey Dosovitskiy and
                  Lucas Beyer and
                  Alexander Kolesnikov and
                  Dirk Weissenborn and
                  Xiaohua Zhai and
                  Thomas Unterthiner and
                  Mostafa Dehghani and
                  Matthias Minderer and
                  Georg Heigold and
                  Sylvain Gelly and
                  Jakob Uszkoreit and
                  Neil Houlsby},
  title        = {An Image is Worth 16x16 Words: Transformers for Image Recognition
                  at Scale},
  booktitle    = {{ICLR}},
  publisher    = {OpenReview.net},
  year         = {2021}
}

@inproceedings{DBLP:YamagataKS23,
  author       = {Taku Yamagata and
                  Ahmed Khalil and
                  Ra{\'{u}}l Santos{-}Rodr{\'{\i}}guez},
  title        = {Q-learning Decision Transformer: Leveraging Dynamic Programming for
                  Conditional Sequence Modelling in Offline {RL}},
  booktitle    = {{ICML}},
  volume       = {202},
  pages        = {38989--39007},
  publisher    = {{PMLR}},
  year         = {2023}
}

@inproceedings{kakade2002approximately,
  author={Kakade, Sham and Langford, John},
  title={Approximately optimal approximate reinforcement learning},
  booktitle={Proceedings of the Nineteenth International Conference on Machine Learning},
  pages={267--274},
  year={2002}
}

@article{gronauer2022multi,
  author={Gronauer, Sven and Diepold, Klaus},
  title={Multi-agent deep reinforcement learning: a survey},
  journal={Artificial Intelligence Review},
  volume={55},
  number={2},
  pages={895--943},
  year={2022}
}

@inproceedings{NIPS2016_cc7e2b87,
 author = {Ho, Jonathan and Ermon, Stefano},
 title = {Generative Adversarial Imitation Learning},
 booktitle = {NeurIPS},
 publisher = {Curran Associates, Inc.},
 pages = {4565–4573},
 year = {2016}
}

@article{DBLP:journals/ml/Williams92,
  author       = {Ronald J. Williams},
  title        = {Simple Statistical Gradient-Following Algorithms for Connectionist
                  Reinforcement Learning},
  journal      = {Mach. Learn.},
  volume       = {8},
  pages        = {229--256},
  year         = {1992}
}

@article{zhao2023hyperspectral,
  author={Zhao, Chunhui and Qin, Boao and Feng, Shou and Zhu, Wenxiang and Sun, Weiwei and Li, Wei and Jia, Xiuping},
  title={Hyperspectral image classification with multi-attention transformer and adaptive superpixel segmentation-based active learning},
  journal={IEEE Transactions on Image Processing},
  publisher={IEEE},
  year={2023}
}

@inproceedings{DBLP:MaLFW24,
  author       = {Yingfan Ma and
                  Xiaoyuan Luo and
                  Kexue Fu and
                  Manning Wang},
  title        = {Transformer-Based Video-Structure Multi-Instance Learning for Whole
                  Slide Image Classification},
  booktitle    = {{AAAI}},
  pages        = {14263--14271},
  publisher    = {{AAAI} Press},
  year         = {2024}
}

@inproceedings{DBLP:GaoZX023,
  author       = {Yingxue Gao and
                  Huan Zhao and
                  Yufeng Xiao and
                  Zixing Zhang},
  title        = {GCFormer: {A} Graph Convolutional Transformer for Speech Emotion Recognition},
  booktitle    = {{ICMI}},
  pages        = {307--313},
  publisher    = {{ACM}},
  year         = {2023}
}

@inproceedings{DBLP:Deschamps-Berger22,
  author       = {Th{\'{e}}o Deschamps{-}Berger and
                  Lori Lamel and
                  Laurence Devillers},
  title        = {Investigating Transformer Encoders and Fusion Strategies for Speech
                  Emotion Recognition in Emergency Call Center Conversations},
  booktitle    = {{ICMI}},
  pages        = {144--153},
  publisher    = {{ACM}},
  year         = {2022}
}

@article{DBLP:LiDDX24,
  author       = {Jianan Li and
                  Shaocong Dong and
                  Lihe Ding and
                  Tingfa Xu},
  title        = {MsSVT++: Mixed-Scale Sparse Voxel Transformer With Center Voting for
                  3D Object Detection},
  journal      = {{IEEE} Trans. Pattern Anal. Mach. Intell.},
  volume       = {46},
  number       = {5},
  pages        = {3736--3752},
  year         = {2024}
}

@article{DBLP:MoonJJO24,
  author       = {Joonhyeok Moon and
                  Munsu Jeon and
                  Siheon Jeong and
                  Ki{-}Yong Oh},
  title        = {RoMP-transformer: Rotational bounding box with multi-level feature
                  pyramid transformer for object detection},
  journal      = {Pattern Recognit.},
  volume       = {147},
  pages        = {110067},
  year         = {2024}
}

@inproceedings{DBLP:ZhengZG22,
  author       = {Qinqing Zheng and
                  Amy Zhang and
                  Aditya Grover},
  title        = {Online Decision Transformer},
  booktitle    = {{ICML}},
  series       = {Proceedings of Machine Learning Research},
  volume       = {162},
  pages        = {27042--27059},
  publisher    = {{PMLR}},
  year         = {2022}
}

@inproceedings{DBLP:LeeNYLFGFXJMM22,
  author       = {Kuang{-}Huei Lee and
                  Ofir Nachum and
                  Mengjiao Yang and
                  Lisa Lee and
                  Daniel Freeman and
                  Sergio Guadarrama and
                  Ian Fischer and
                  Winnie Xu and
                  Eric Jang and
                  Henryk Michalewski and
                  Igor Mordatch},
  title        = {Multi-Game Decision Transformers},
  booktitle    = {NeurIPS},
  year         = {2022}
}

@article{DBLP:abs-1801-00690,
  author       = {Yuval Tassa and
                  Yotam Doron and
                  Alistair Muldal and
                  Tom Erez and
                  Yazhe Li and
                  Diego de Las Casas and
                  David Budden and
                  Abbas Abdolmaleki and
                  Josh Merel and
                  Andrew Lefrancq and
                  Timothy P. Lillicrap and
                  Martin A. Riedmiller},
  title        = {DeepMind Control Suite},
  journal      = {CoRR},
  volume       = {abs/1801.00690},
  year         = {2018}
}

@inproceedings{DBLP:Ouyang0JAWMZASR22,
  author       = {Long Ouyang and
                  Jeffrey Wu and
                  Xu Jiang and
                  Diogo Almeida and
                  Carroll L. Wainwright and
                  Pamela Mishkin and
                  Chong Zhang and
                  Sandhini Agarwal and
                  Katarina Slama and
                  Alex Ray and
                  John Schulman and
                  Jacob Hilton and
                  Fraser Kelton and
                  Luke Miller and
                  Maddie Simens and
                  Amanda Askell and
                  Peter Welinder and
                  Paul F. Christiano and
                  Jan Leike and
                  Ryan Lowe},
  title        = {Training language models to follow instructions with human feedback},
  booktitle    = {NeurIPS},
  year         = {2022}
}

@article{DBLP:LiuYFJHN23,
  author       = {Pengfei Liu and
                  Weizhe Yuan and
                  Jinlan Fu and
                  Zhengbao Jiang and
                  Hiroaki Hayashi and
                  Graham Neubig},
  title        = {Pre-train, Prompt, and Predict: {A} Systematic Survey of Prompting
                  Methods in Natural Language Processing},
  journal      = {{ACM} Comput. Surv.},
  volume       = {55},
  number       = {9},
  pages        = {195:1--195:35},
  year         = {2023}}

@inproceedings{DBLP:HoHLLWW23,
  author       = {Kuo{-}Hao Ho and
                  Ping{-}Chun Hsieh and
                  Chiu{-}Chou Lin and
                  You{-}Ren Luo and
                  Feng{-}Jian Wang and
                  I{-}Chen Wu},
  title        = {Towards Human-Like {RL:} Taming Non-Naturalistic Behavior in Deep
                  {RL} via Adaptive Behavioral Costs in 3D Games},
  booktitle    = { {ACML}},
  series       = {Proceedings of Machine Learning Research},
  volume       = {222},
  pages        = {438--453},
  publisher    = {{PMLR}},
  year         = {2023}
}

@inproceedings{DBLP:LinLSYF022,
  author       = {Zichuan Lin and
                  Junyou Li and
                  Jianing Shi and
                  Deheng Ye and
                  Qiang Fu and
                  Wei Yang},
  title        = {JueWu-MC: Playing Minecraft with Sample-efficient Hierarchical Reinforcement
                  Learning},
  booktitle    = {{IJCAI}},
  pages        = {3257--3263},
  publisher    = {ijcai.org},
  year         = {2022}
}

@article{DBLP:YangLLZFZWLS24,
  author       = {Yulong Yang and
                  Chenhao Lin and
                  Qian Li and
                  Zhengyu Zhao and
                  Haoran Fan and
                  Dawei Zhou and
                  Nannan Wang and
                  Tongliang Liu and
                  Chao Shen},
  title        = {Quantization Aware Attack: Enhancing Transferable Adversarial Attacks
                  by Model Quantization},
  journal      = {{IEEE} Trans. Inf. Forensics Secur.},
  volume       = {19},
  pages        = {3265--3278},
  year         = {2024}
}

@article{DBLP:HeWLYJLZ23,
  author       = {Shenghong He and
                  Ruxin Wang and
                  Tongliang Liu and
                  Chao Yi and
                  Xin Jin and
                  Renyang Liu and
                  Wei Zhou},
  title        = {Type-I Generative Adversarial Attack},
  journal      = {{IEEE} Trans. Dependable Secur. Comput.},
  volume       = {20},
  number       = {3},
  pages        = {2593--2606},
  year         = {2023}
}

@article{DBLP:FangS24,
  author       = {Shengbang Fang and
                  Matthew C. Stamm},
  title        = {Attacking Image Splicing Detection and Localization Algorithms Using
                  Synthetic Traces},
  journal      = {{IEEE} Trans. Inf. Forensics Secur.},
  volume       = {19},
  pages        = {2143--2156},
  year         = {2024}
}

@inproceedings{DBLP:RiceWK20,
  author       = {Leslie Rice and
                  Eric Wong and
                  J. Zico Kolter},
  title        = {Overfitting in adversarially robust deep learning},
  booktitle    = {{ICML}},
  series       = {Proceedings of Machine Learning Research},
  volume       = {119},
  pages        = {8093--8104},
  publisher    = {{PMLR}},
  year         = {2020}
}

@inproceedings{DBLP:GuoWKX21,
  author       = {Wenbo Guo and
                  Xian Wu and
                  Usmann Khan and
                  Xinyu Xing},
  title        = {{EDGE:} Explaining Deep Reinforcement Learning Policies},
  booktitle    = {NeurIPS},
  pages        = {12222--12236},
  year         = {2021}
}

@inproceedings{DBLP:Wu0WX21,
  author       = {Xian Wu and
                  Wenbo Guo and
                  Hua Wei and
                  Xinyu Xing},
  title        = {Adversarial Policy Training against Deep Reinforcement Learning},
  booktitle    = {{USENIX}},
  pages        = {1883--1900},
  publisher    = {{USENIX} Association},
  year         = {2021}
}

@article{lv2024safe,
  author={Lv, Zefang and Xiao, Liang and Chen, Yifan and Chen, Haoyu and Ji, Xiangyang},
  title={Safe multi-agent reinforcement learning for wireless applications against adversarial communications},
  journal={IEEE Transactions on Information Forensics and Security},
  year={2024},
  publisher={IEEE}
}

@article{DBLP:MaXJZS23,
  author       = {Haotian Ma and
                  Ke Xu and
                  Xinghao Jiang and
                  Zeyu Zhao and
                  Tanfeng Sun},
  title        = {Transferable Black-Box Attack Against Face Recognition With Spatial
                  Mutable Adversarial Patch},
  journal      = {{IEEE} Trans. Inf. Forensics Secur.},
  volume       = {18},
  pages        = {5636--5650},
  year         = {2023}
}

@article{DBLP:QiaobenYZSZZ24,
  author       = {You Qiaoben and
                  Chengyang Ying and
                  Xinning Zhou and
                  Hang Su and
                  Jun Zhu and
                  Bo Zhang},
  title        = {Understanding adversarial attacks on observations in deep reinforcement
                  learning},
  journal      = {Sci. China Inf. Sci.},
  volume       = {67},
  number       = {5},
  year         = {2024}
}

@inproceedings{DBLP:GaoWCKZ024,
  author       = {Chenxiao Gao and
                  Chenyang Wu and
                  Mingjun Cao and
                  Rui Kong and
                  Zongzhang Zhang and
                  Yang Yu},
  title        = {{ACT:} Empowering Decision Transformer with Dynamic Programming via
                  Advantage Conditioning},
  booktitle    = {{AAAI}},
  pages        = {12127--12135},
  publisher    = {{AAAI} Press},
  year         = {2024}
}

@incollection{barto1988neuronlike,
  title={Neuronlike adaptive elements that can solve difficult learning control problems},
  author={Barto, Andrew G and Sutton, Richard S and Anderson, Charles W},
  booktitle={Neurocomputing: foundations of research},
  pages={535--549},
  year={1988}
}

@article{wali2025explainable,
  title={Explainable AI and random forest based reliable intrusion detection system},
  author={Wali, Syed and Farrukh, Yasir Ali and Khan, Irfan},
  journal={Computers \& Security},
  pages={104542},
  year={2025},
  publisher={Elsevier}
}

@article{DBLP:TianSWXZLL24,
  author       = {Jiwei Tian and
                  Chao Shen and
                  Buhong Wang and
                  Xiaofang Xia and
                  Meng Zhang and
                  Chenhao Lin and
                  Qian Li},
  title        = {{LESSON:} Multi-Label Adversarial False Data Injection Attack for
                  Deep Learning Locational Detection},
  journal      = {{IEEE} Trans. Dependable Secur. Comput.},
  volume       = {21},
  number       = {5},
  pages        = {4418--4432},
  year         = {2024}
}

@article{DBLP:ChuG24,
  author       = {Kai{-}Fung Chu and
                  Weisi Guo},
  title        = {Multi-Agent Reinforcement Learning-Based Passenger Spoofing Attack
                  on Mobility-as-a-Service},
  journal      = {{IEEE} Trans. Dependable Secur. Comput.},
  volume       = {21},
  number       = {6},
  pages        = {5565--5581},
  year         = {2024}
}

@article{DBLP:FuLWYFWDWX25,
  author       = {Songtao Fu and
                  Qi Li and
                  Xiaoliang Wang and
                  Su Yao and
                  Xuewei Feng and
                  Ziqiang Wang and
                  Xinle Du and
                  Kao Wan and
                  Ke Xu},
  title        = {Secure Fault Localization in Path Aware Networking},
  journal      = {{IEEE} Trans. Dependable Secur. Comput.},
  volume       = {22},
  number       = {1},
  pages        = {205--220},
  year         = {2025}
}

@inproceedings{DBLP:ZhangLYYL0T23,
  author       = {Shuhai Zhang and
                  Feng Liu and
                  Jiahao Yang and
                  Yifan Yang and
                  Changsheng Li and
                  Bo Han and
                  Mingkui Tan},
  title        = {Detecting Adversarial Data by Probing Multiple Perturbations Using
                  Expected Perturbation Score},
  booktitle    = {{ICML}},
  series       = {Proceedings of Machine Learning Research},
  volume       = {202},
  pages        = {41429--41451},
  publisher    = {{PMLR}},
  year         = {2023}
}

@article{DBLP:MumcuY25,
  author       = {Furkan Mumcu and
                  Yasin Yilmaz},
  title        = {Universal and Efficient Detection of Adversarial Data through Nonuniform
                  Impact on Network Layers},
  journal      = {Trans. Mach. Learn. Res.},
  volume       = {2025},
  year         = {2025}
}

@inproceedings{DBLP:LinDZLP20,
  author       = {Jieyu Lin and
                  Kristina Dzeparoska and
                  Sai Qian Zhang and
                  Alberto Leon{-}Garcia and
                  Nicolas Papernot},
  title        = {On the Robustness of Cooperative Multi-Agent Reinforcement Learning},
  booktitle    = {IEEE {SPW}},
  pages        = {62--68},
  publisher    = {{IEEE}},
  year         = {2020}
}

@inproceedings{DBLP:SongLP024,
  author       = {Kaiyu Song and
                  Hanjiang Lai and
                  Yan Pan and
                  Jian Yin},
  title        = {MimicDiffusion: Purifying Adversarial Perturbation via Mimicking Clean
                  Diffusion Model},
  booktitle    = {
                  {CVPR}},
  pages        = {24665--24674},
  publisher    = {{IEEE}},
  year         = {2024}
}

@article{DBLP:WangBSS04,
  author       = {Zhou Wang and
                  Alan C. Bovik and
                  Hamid R. Sheikh and
                  Eero P. Simoncelli},
  title        = {Image quality assessment: from error visibility to structural similarity},
  journal      = {{IEEE} Trans. Image Process.},
  volume       = {13},
  number       = {4},
  pages        = {600--612},
  year         = {2004}
}

@inproceedings{DBLP:0001CX0LBH20,
  author       = {Huan Zhang and
                  Hongge Chen and
                  Chaowei Xiao and
                  Bo Li and
                  Mingyan Liu and
                  Duane S. Boning and
                  Cho{-}Jui Hsieh},
  title        = {Robust Deep Reinforcement Learning against Adversarial Perturbations
                  on State Observations},
  booktitle    = {NeurIPS},
  year         = {2020}
}

@inproceedings{DBLP:McMahanW0X24,
  author       = {Jeremy McMahan and
                  Young Wu and
                  Xiaojin Zhu and
                  Qiaomin Xie},
  title        = {Optimal Attack and Defense for Reinforcement Learning},
  booktitle    = {{AAAI}},
  pages        = {14332--14340},
  publisher    = {{AAAI} Press},
  year         = {2024}
}

@article{chen2023dynamics,
  author       = {Chen, Xuan and Tao, Guanhong and Zhang, Xiangyu},
  title        = {Dynamics Model Based Adversarial Training For Competitive Reinforcement Learning},
  journal      = {CoRR},
  year         = {2023},
}

@article{yamabe2024robust,
  title={Robust Deep Reinforcement Learning against Adversarial Behavior Manipulation},
  author={Yamabe, Shojiro and Fukuchi, Kazuto and Sakuma, Jun},
  journal={arXiv preprint arXiv:2406.03862},
  year={2024}
}


\section{Biography Section}

\vspace{-33pt}

\begin{IEEEbiographynophoto}{Shenghong He}
is currently a PhD student under the supervision of Prof. Chao Yu in the School of Computer Science and Engineering, Sun Yat-sen University, China. He has published a few works at IEEE TDSC and TKDE, and is now working on the problem of offline learning and security with deep reinforcement learning.
\end{IEEEbiographynophoto}
\vspace{-20pt}
\begin{IEEEbiographynophoto}{Chao Yu}
received the Ph.D. degree in computer science from the University of Wollongong, Australia, in 2014. He is currently a professor at the School of Computer Science and Engineering, Sun Yat-sen University, China. He has published more than 100 papers in prestigious conferences and journals, such as ICML, AAAI, NeurIPS, IEEE TCYB, IEEE TVT, and IEEE TKDE. His research interests include multi-agent systems, reinforcement learning, and their wide applications in autonomous driving, smart grid, robotic control, and intelligent healthcare.
\end{IEEEbiographynophoto}
\vspace{-20pt}
\begin{IEEEbiographynophoto}{Danying Mo}
is currently a PhD student under the supervision of Prof. Chao Yu in the School of Computer Science and Engineering, Sun Yat-sen University, China. She is now working on the problem of Agent.
\end{IEEEbiographynophoto}
\vspace{-20pt}
\begin{IEEEbiographynophoto}{Yucong Zhang}
is currently a PhD student under the supervision of Prof. Chao Yu in the School of Computer Science and Engineering, Sun Yat-sen University, China. He has published a few works at AAMAS and is now working on the problem of deep reinforcement learning.
\end{IEEEbiographynophoto}
\vspace{-20pt}
\begin{IEEEbiographynophoto}{Yinqi Wei}
received the B.Sc. degree in engineering from Sun Yat-Sen University, Shenzhen, Guangdong, China, in 2024, and the M.Sc. degree in electronic from Nanyang Technological University, Singapore, in 2025, respectively.
\end{IEEEbiographynophoto}

\vfill

\clearpage
\appendix

\renewcommand{\theequation}{\Alph{section}.\arabic{equation}}  
\setcounter{equation}{0}  
\renewcommand{\thetheorem}{A\arabic{theorem}} 
\setcounter{theorem}{0} 
\setcounter{proof}{0} 
\renewcommand{\thetable}{\Alph{section}.\arabic{table}}
\renewcommand{\thefigure}{\Alph{section}.\arabic{figure}}

\setcounter{table}{0}
\setcounter{figure}{0}

\subsection{Proof of theorem 1}

\label{sup:a}
$\quad$
\\
    According to the Lemma 1, for any advantage estimation method, the lower bound can be expressed as: 
    \begin{equation}
        \label{eq:DE_1}\Delta_C=\mathbb{E}_{\tau\sim\pi}\left[\sum_{t=0}^\infty\gamma^t\hat{A}^\beta(s_t,a_t)\right] - \frac{\epsilon}{1-\gamma},
    \end{equation}
    while the lower bound obtained by the weighted advantage function is: 
    \begin{equation}
        \label{eq:DE_2}
        \Delta_C^{\text{new}}=\mathbb{E}_{\tau\sim\pi}\left[\sum_{t=0}^\infty\gamma^t\tilde{A}^\beta(s_t,a_t)\right] - \frac{\epsilon_{\text{new}}}{1-\gamma}.
    \end{equation}
    Based on Eqs.~(\ref{eq:DE_1}) and~(\ref{eq:DE_2}), we can calculate the lower bound difference:
    \begin{equation}
        \begin{split}
            \Delta_C^{\text{new}}-\Delta_C & = \underbrace{\mathbb{E}_{\tau\sim\pi}\left[\sum_{t=0}^\infty\gamma^t\left(\tilde{A}^\beta(s_t,a_t)-\hat{A}^\beta(s_t,a_t)\right)\right]}_{I}\\& + \frac{\epsilon-\epsilon_{\text{new}}}{1-\gamma}.
        \end{split}
    \end{equation}
    For any $x$, the normalized mapping of weighted advantages satisfies $\Lambda(x)=x-\frac{\lambda|x|}{1+\lambda|x|}$. Therefore, we can obtain $\tilde{A}^\beta(s,a) \le \hat{A}^\beta(s,a)$. When $\hat{A}^\beta(s,a)\ge 0 $ (a symmetric inequality that holds for the negative case), we get $C(s,a)=\hat{A}^\beta(s,a)-\tilde{A}^\beta(s,a)\ge0$. Then, we obtain $I = -\mathbb{E}_{\tau\sim\pi}\left[\sum_{t=0}^\infty\gamma^t C(s_t,a_t)\right]$. According to condition 2), there is:
    \begin{equation}
        \mathbb{E}_{\tau\sim\pi}\left[\sum_{t=0}^\infty\gamma^t C(s_t,a_t)\right]\le \delta.
    \end{equation}
    Hence, $I\ge -\delta$. Substituting this inequality into the lower bounded difference, we derive:
    \begin{equation}
        \Delta_C^{\text{new}}-\Delta_c \ge -\delta + \frac{\epsilon-\epsilon_{\text{new}}}{1-\gamma}.
    \end{equation}
    Based on condition 2) where $\delta\le\frac{\epsilon-\epsilon_{\text{new}}}{1-\gamma}$, we have $-\delta + \frac{\epsilon-\epsilon_{\text{new}}}{1-\gamma} \ge 0$.
    Thus, $\Delta_C^{\text{new}}-\Delta_c \ge 0$, which $\Delta_C^{\text{new}} \ge \Delta_c$.

\section{Additional experimental setup information}

\subsection{Experimental Platform}
All experiments are carried out on a high-performance computing platform with the following core hardware components: CPU is an Intel Xeon E5-2620 v4 (Broadwell architecture), which features an 8-core, 16-thread processing capability, a base frequency of 2.1 GHz, and supports Turbo Boost up to 3.0 GHz.
The memory subsystem is configured with quad-channel DDR4-2400 ECC memory, with a total capacity of 128GB.
For graphics processing, two NVIDIA RTX 3090 GPUs are used, each featuring 24GB of GDDR6X memory and 10,496 CUDA cores.

The software environment is deployed on the Ubuntu 20.04 LTS operating system. The programming environment utilizes the Python 3.8.12 interpreter, integrated with the CUDA 11.8 toolkit and the cuDNN 8.6 acceleration library to build the deep learning framework.
The primary dependencies include PyTorch 2.0.1, TorchVision 0.15.2, and the NVIDIA Apex mixed-precision training toolkit. 

\subsection{Environments}
\label{sup:env}
\textbf{Atari} is a platform that uses Atari 2600 games for testing and training RL algorithms.
\begin{itemize}
    \item Pong simulates a two-player ping-pong game in which each player controls a paddle that moves vertically. The objective is to hit the ball with the paddle to send it past the opponent's paddle. 

    \item Breakout is a classic arcade game where players control a paddle to bounce a ball and destroy bricks, aiming to destroy as many as possible for a higher score. Each action taken by the agent changes the game state, which then results in either positive or negative rewards based on the outcome.

    \item Chopper Command is a classic side-scrolling shooter game in which players control a military helicopter to protect a convoy of trucks on the ground. The game ends when the player loses all their lives or reaches a score of 999,999 points.

    \item Seaquest is a classic side-scrolling shooter game in which players control a submarine to shoot at enemies and rescue divers. For example, enemies include sharks and submarines that fire missiles at the player's submarine. 

    \item Qbert is a classic arcade game where the player's objective is to control the character Qbert as he jumps on cubes to change each cube in the pyramid to the target color, while avoiding obstacles and enemies. 

    \item Space Invaders is an arcade game where the player controls a laser cannon to eliminate descending aliens. In RL, the agent simulates the player's actions to earn points by moving the laser cannon left and right and firing lasers to eliminate the aliens.

\end{itemize}

$\quad$
\\
\noindent \textbf{DeepMind Control Suite} is a collection of RL environments developed by DeepMind, which includes a series of continuous control tasks with standardized structure and interpretable rewards.
We primarily test the following environments:
\begin{itemize}
    \item HalfCheetah is a 2-dimensional robot consisting of 9 links and 8 joints, including two paws.  Its goal is to exert torque on the joints to move forward (to the right) as quickly as possible, receiving positive rewards for forward movement and negative rewards for backward movement. 

    \item Hopper is a two-dimensional one-legged figure that consist of four main body parts - the torso at the top, the thigh in the middle, the leg in the bottom, and a single foot on which the entire body rests. The objective is to execute hops in the forward (right) direction by applying torques to the three hinges that connect the four body parts.

    \item Walker is a two-dimensional two-legged figure consisting of four main body parts: a single torso at the top, two thighs below the torso, two legs below the thighs, and two feet attached to the legs, supporting the entire body. The goal is to coordinate the movement of both sets of feet, legs, and thighs to move forward (to the right) by applying torques to the six hinges that connect the body parts.

    \item Ant is a four-legged creature with a torso and four limbs, each with two joints. The goal of the Ant is typically to move forward as quickly as possible without falling over. Its reward function encourages forward progress while penalizing high control efforts and contact with the ground.

    \item Humanoid environment is designed to simulate human movement. It has a torso (abdomen) with a pair of legs and arms. Each leg consists of two links, and each arm represents the knees and elbows, respectively.

    \item Pendulum involves a cart that can move linearly, with a pole fixed on it at one end and the other end free. The cart can be pushed left or right, and the goal is to balance the pole on the top of the cart by applying forces on the cart.

\end{itemize}

$\quad$
\\
\noindent \textbf{Gfootball} environment is an RL environment based on the open-source game Gameplay Football, which is created by the Google Brain team for research purposes.
The core of the Gfootball environment is an advanced football simulator called Football Engine, which simulates a football match based on the input actions of two opposing teams, including goals, fouls, corner kicks, penalty kicks, and offside situations.
Simply put, Gfootball is a physics-based 3D simulator of a soccer game environment that simulates a complete soccer game according to standard rules, featuring 11 players per team.
The length of the game is measured in terms of the number of frames, and the default duration of a full game is 3000 (10 frames per second for 5 minutes). 

\begin{table*}[]
\centering
\caption{Hyperparameter}
\label{tab:aat_hyperparameter}
\begin{tabular}{llll}
\hline
\multicolumn{4}{c}{Hyperparameter}                                                                                                                \\ \hline
\multicolumn{2}{c|}{GPT}                                                                                     & \multicolumn{2}{c}{Perturbation network} \\ \hline
Name                 & \multicolumn{1}{l|}{value}                                                            & Name                 & value       \\ \hline
Number of layers     & \multicolumn{1}{l|}{6}                                                                & Linear layer         & 128, 3136   \\ \hline
Attention heads      & \multicolumn{1}{l|}{8}                                                                & Aactivite function   & Rlue        \\ \hline
Embedding dimension  & \multicolumn{1}{l|}{128}                                                              & ConvTranspose2d      & 64,32,4     \\ \hline
Batch size           & \multicolumn{1}{l|}{\begin{tabular}[c]{@{}l@{}}Atari:128\\ DeepMind:256\end{tabular}} & kernel\_size         & 3,4,8       \\ \hline
Nonlinearity         & \multicolumn{1}{l|}{LayerNorm}                                                        &                      &             \\ \hline
Encoder channels     & \multicolumn{1}{l|}{32,64,128}                                                        &                      &             \\ \hline
Encoder filter sizes & \multicolumn{1}{l|}{8,4,3,1}                                                          &                      &             \\ \hline
Decoder channels     & \multicolumn{1}{l|}{128,128}                                                          &                      &             \\ \hline
Dncoder filter sizes & \multicolumn{1}{l|}{1}                                                                &                      &             \\ \hline
Optimizer            & \multicolumn{1}{l|}{Adam}                                                             &                      &             \\ \hline
Learning rate        & \multicolumn{1}{l|}{$1*10^{-3}$}                                         &                      &             \\ \hline
Dropout              & \multicolumn{1}{l|}{0.2}                                                              &                      &             \\ \hline
\end{tabular}
\end{table*}

\subsection{Train dataset}
\label{sec:train_data}
In white-box scenarios, where the internal information of the target policy is accessible, we employ diverse perturbation methods (FGSM, EDGE, Skip, S-T, and random noise\footnote{Random perturbations enhance exploration by injecting noise into the state of policy.}) to interact with the policy and generate learning trajectories. The resulting dataset comprises 40,000 trajectories, with each method contributing 20$\%$ of the data.  
In black-box scenarios, since the target policy information is not known, we use substitution policies A2C, ACER and ACKTR as the target policies.
Then, following the same proportion as in white-box scenarios, we still use FGSM, EDGE, Skip, S-T, and random perturbation methods to interact with substitution policies and collect 40,000 historical trajectories.

\subsection{AAT structure and hyperparameters}
\label{sup:struct}
AAT uses the architecture of minGPT (\url{https://github.com/karpathy/minGPT}) to implement the generation of adversarial examples in Atari, DeepMind Control Suite and Gfootball games, and AAT follows most of the default hyperparameters in its character-level GPT example for most of the default hyperparameters.
Furthermore, we employ a processing method similar to DT by reducing the batch size, block size, number of layers and embedding dimensions to speed up the training process.
To generate adversarial perturbations of the same size as the input, we added a perturbation layer to decode the encoded vector, which includes two linear layers and three deconvolutional layers.
For the advantage function, we inject it after the minGPT encoder, where a linear layer network is used to predict the advantage of the current input.
Table~\ref{tab:aat_hyperparameter} provides the hyperparameter settings of the AAT network.
Moreover, the perturbation threshold $\varepsilon$ is set to 1.5 in the Atari and DeepMind Control Suite environments and to 2.0 in Gfootball.
These configurations can ensure that the perturbations generated by AAT under the $L_2$ norm remain imperceptible to the human eye.

\subsection{Target policies}
\label{sup:target_policy}
\noindent \textbf{Atari}. We employ four DRL algorithms as target policies: DQN, A3C, TRPO, and PPO.
The policies employ the network architecture delineated in, which consists of a convolutional layer with 16 filters of size 8×8 and a stride of 4, followed by another convolutional layer with 32 filters of size 4×4 and a stride of 2.
The final layer consists of a fully-connected layer with 256 hidden units, and 
each hidden layer is followed by a ReLU activation function.
The input of the policy is a concatenation of the last 4 images, which are converted from RGB to luminance (Y) and resized to 84 × 84. 
The output of the policy is a distribution over possible actions.
\\
\\
\textbf{DeepMind control suite}. 
Following Yuval et al., we use A3C and D4PG as the target policies, where A3C and D4PG are augmented with a 2-layer convolutional network (ConvNet) for feature extraction. 
Both layers use 3 × 3 kernels with 32 channels and ELU activation; the first layer applies a stride of 2, and the second layer applies a stride size of 1.
The ConvNet output is fed into a fully connected layer with 50 neurons, incorporating layer normalization and tanh activation. 
The target policy processes input as a stack of four consecutive 84 × 84 RGB images.
\\
\\
\textbf{Gfootball}.
We adopt the settings from work by Karol et al. and use DQN and PPO as target policies.
These policies utilize the network architecture described in, which includes a convolutional layer with a 3 × 3 filter and a stride of 1, followed by a pooling layer with a size of 3 × 3.
In addition, two residual blocks are added, with each block comprising two 3 × 3 convolutions and a ReLU activation.
The final layer consists of a fully-connected layer with 256 hidden units. 
The input of the target policy consists of stacked raw pixels of size 72 x 96 x 4.

\section{Additional experimental results} 
\subsection{Additional comparison results}
\label{sup:discrete_reult}
Tables~\ref{tab:pong_w}, ~\ref{tab:chopper_w},~\ref{tab:seaquest},~\ref{tab:qbert} and~\ref{tab:space_invader} provide the attack results of different attack methods on Pong, Chopper Command, Seaquest, Qbert, and Space Invaders. In the black-box scenario, we utilize A2C, ACER, and ACKTR, with network architectures differing from the target policy, to produce adversarial examples. 
In particular, we use A2C to collect historical data for the Pong game, ACER for Chopper Command and Seaquest, and ACKTR for Qbert.
In addition, Fig.~\ref{fig:sup_cons_a3c} shows the attack results of AAT and other attack methods in a continuous environment.

\begin{table*}[]
\centering
\caption{Comparisons between the ATT attack and other methods in the Pong game. $*$ denotes the cumulative reward of the target policy under no attack. The bold indicates the best attack performance.}
\label{tab:pong_w}
\begin{tabular}{lcccccccc}
\hline
\multicolumn{9}{c}{Pong}                                                                                                                                                                                                                                                                                                                                                                                                                                                                                                                                                                                                                                                                                                                                                                                                                                                                                                                                                                                                                                                                                                                                                                                                                                                                                                                                                                                                                                                                                                                                                                                               \\ \hline
\multicolumn{1}{l|}{}                                                                                                            & \multicolumn{4}{c|}{White-box}                                                                                                                                                                                                                                                                                                                                                                                                                                                                                                                                                                                                                                                                                                                              & \multicolumn{4}{c}{Black-box}                                                                                                                                                                                                                                                                                                                                                                                                                                                                                                                                                                                                                                                                                         \\
\multicolumn{1}{l|}{Method}                                                                                                      & DQN                                                                                                                                                                              & A3C                                                                                                                                                                             & TRPO                                                                                                                                                                           & \multicolumn{1}{c|}{PPO}                                                                                                                                                                              & DQN                                                                                                                                                                         & A3C                                                                                                                                                                       & TRPO                                                                                                                                                                         & PPO                                                                                                                                                                          \\
\multicolumn{1}{l|}{\begin{tabular}[c]{@{}l@{}}*\\ FGSM\\ Skip\\ S-T\\ EDGE\\ PA-AD\\ AdvRL-GAN\\ TSGE\\ PIA\\ AAT\end{tabular}} & \begin{tabular}[c]{@{}c@{}}19.83$\pm$1.31\\ -20.31$\pm$0.48\\ -18.35$\pm$0.74\\ -17.86$\pm$1.23\\ -19.36$\pm$0.98\\ -\textbf{21.00}$\pm$\textbf{0.00}\\ -12.44$\pm$0.22\\ -15.34$\pm$2.37\\ -17.23$\pm$1.95\\ -20.96$\pm$0.04\end{tabular} & \begin{tabular}[c]{@{}c@{}}21.00$\pm$0.00\\ -20.35$\pm$0.25\\ -19.36$\pm$1.03\\ -10.84$\pm$3.41\\ -14.33$\pm$2.54\\ -19.8$\pm$0.21\\ -11.18$\pm$1.78\\ -12.47$\pm$2.44\\ -18.34$\pm$0.58\\ -\textbf{20.61}$\pm$\textbf{0.33}\end{tabular} & \begin{tabular}[c]{@{}c@{}}20.64$\pm$0.32\\ -\textbf{20.48}$\pm$\textbf{0.15}\\ -18.79$\pm$0.27\\ -12.77$\pm$2.14\\ -14.89$\pm$1.95\\ -9.98$\pm$2.13\\ -18.4$\pm$1.21\\ -19.76$\pm$0.34\\ -17.41$\pm$1.43\\ -19.97$\pm$1.24\end{tabular} & \multicolumn{1}{c|}{\begin{tabular}[c]{@{}c@{}}21.00$\pm$0.00\\ -19.56$\pm$0.24\\ -15.42$\pm$2.84\\ -14.66$\pm$2.31\\ -15.13$\pm$3.41\\ -10.08$\pm$3.42\\ -\textbf{20.12}$\pm$\textbf{0.24}\\ -16.38$\pm$2.18\\ -17.73$\pm$0.31\\ -19.87$\pm$0.27\end{tabular}} & \begin{tabular}[c]{@{}c@{}}19.83$\pm$1.31\\ 18.38$\pm$0.54\\ 18.10$\pm$0.21\\ 18.12$\pm$0.06\\ -9.22$\pm$3.45\\ -10.31$\pm$2.78\\ -13.56$\pm$1.44\\ -11.53$\pm$0.97\\ -9.87$\pm$0.12\\ -\textbf{18.34}$\pm$\textbf{0.43}\end{tabular} & \begin{tabular}[c]{@{}c@{}}21.00$\pm$0.00\\ 20.25$\pm$0.14\\ 20.21$\pm$0.34\\ 19.65$\pm$0.46\\ -6.54$\pm$2.37\\ -9.31$\pm$3.56\\ -11.36$\pm$2.77\\ -8.24$\pm$3.07\\ -9.21$\pm$2.89\\ -\textbf{18.45}$\pm$\textbf{1.94}\end{tabular} & \begin{tabular}[c]{@{}c@{}}20.64$\pm$0.32\\ 19.88$\pm$0.23\\ 18.02$\pm$0.71\\ 19.88$\pm$1.02\\ -8.98$\pm$3.07\\ -13.86$\pm$2.94\\ -\textbf{16.36}$\pm$\textbf{0.97}\\ -13.36$\pm$1.41\\ -11.49$\pm$2.51\\ -16.06$\pm$1.35\end{tabular} & \begin{tabular}[c]{@{}c@{}}21.00$\pm$0.00\\ 20.26$\pm$0.34\\ 19.18$\pm$0.81\\ 20.01$\pm$0.43\\ -5.98$\pm$2.34\\ -15.67$\pm$1.95\\ -10.46$\pm$2.47\\ -10.73$\pm$2.85\\ -13.02$\pm$3.14\\ -\textbf{18.98}$\pm$\textbf{1.07}\end{tabular} \\ \hline
\end{tabular}
\end{table*}

\begin{table*}[]
\centering
\caption{Comparisons between the ATT attack and other methods in the Chopper Command game. $*$ denotes the cumulative reward of the target policy under no attack. The bold indicates the best attack performance.}
\label{tab:chopper_w}
\setlength{\tabcolsep}{0.8mm} 
\begin{tabular}{lcccccccc}
\hline
\multicolumn{9}{c}{Chopper Command}                                                                                                                                                                                                                                                                                                                                                                                                                                                                                                                                                                                                                                                                                                                                                                                                                                                                                                                                                                                                                                                                                                                                                                                                                                                                                                                                                                                                                                                                                                                                                                                                                                                                                                                                                      \\ \hline
\multicolumn{1}{l|}{}                                                                                                            & \multicolumn{4}{c|}{White-box}                                                                                                                                                                                                                                                                                                                                                                                                                                                                                                                                                                                                                                                                                                                                                                                              & \multicolumn{4}{c}{Black-box}                                                                                                                                                                                                                                                                                                                                                                                                                                                                                                                                                                                                                                                                                                                                                                                           \\
\multicolumn{1}{l|}{Method}                                                                                                      & DQN                                                                                                                                                                                            & A3C                                                                                                                                                                                              & TRPO                                                                                                                                                                                            & \multicolumn{1}{c|}{PPO}                                                                                                                                                                                              & DQN                                                                                                                                                                                                 & A3C                                                                                                                                                                                                  & TRPO                                                                                                                                                                                                 & PPO                                                                                                                                                                                                 \\
\multicolumn{1}{l|}{\begin{tabular}[c]{@{}l@{}}*\\ FGSM\\ Skip\\ S-T\\ EDGE\\ PA-AD\\ AdvRL-GAN\\ TSGE\\ PIA\\ AAT\end{tabular}} & \begin{tabular}[c]{@{}c@{}}3555.83$\pm$23.54\\ \textbf{401.32}$\pm$\textbf{14.53}\\ 770.21$\pm$15.39\\ 1180.65$\pm$17.41\\ 879.36$\pm$21.06\\ 789.35$\pm$24.68\\ 1461.37$\pm$19.46\\ 876.34$\pm$23.44\\ 854.37$\pm$19.81\\ 408.68$\pm$19.74\end{tabular} & \begin{tabular}[c]{@{}c@{}}3864.12$\pm$19.87\\ 468.36$\pm$17.84\\ 778.33$\pm$21.33\\ 1098.46$\pm$17.49\\ 906.24$\pm$20.19\\ 1335.46$\pm$23.54\\ 1389.56$\pm$23.78\\ 1031.27$\pm$21.94\\ 986.54$\pm$20.84\\ \textbf{410.21}$\pm$\textbf{21.07}\end{tabular} & \begin{tabular}[c]{@{}c@{}}3615.44$\pm$26.41\\ 470.32$\pm$19.97\\ 789.34$\pm$21.42\\ 1206.26$\pm$25.46\\ 896.63$\pm$21.53\\ 784.36$\pm$25.19\\ 1280.62$\pm$22.97\\ 987.31$\pm$22.43\\ 1052.48$\pm$21.68\\ \textbf{420.98}$\pm$\textbf{20.97}\end{tabular} & \multicolumn{1}{c|}{\begin{tabular}[c]{@{}c@{}}3842.52$\pm$28.33\\ 469.36$\pm$20.89\\ 780.56$\pm$21.41\\ 1240.36$\pm$23.01\\ 900.44$\pm$20.89\\ 578.69$\pm$24.81\\ 1350.43$\pm$23.07\\ 1003.42$\pm$22.74\\ 1203.47$\pm$20.08\\ \textbf{418.36}$\pm$\textbf{19.35}\end{tabular}} & \begin{tabular}[c]{@{}c@{}}3555.83$\pm$23.54\\ 2015.46$\pm$29.35\\ 2898.36$\pm$31.04\\ 2754.38$\pm$28.74\\ 2012.46$\pm$31.14\\ 893.72$\pm$30.82\\ 1864.32$\pm$31.79\\ 1562.49$\pm$34.03\\ 1671.39$\pm$29.04\\ \textbf{568.56}$\pm$\textbf{27.89}\end{tabular} & \begin{tabular}[c]{@{}c@{}}3864.12$\pm$19.87\\ 2212.42$\pm$31.81\\ 2934.98$\pm$29.87\\ 2856.42$\pm$30.47\\ 2984.56$\pm$33.04\\ 1893.67$\pm$31.78\\ 1906.48$\pm$32.46\\ 1935.47$\pm$32.75\\ 2085.81$\pm$30.84\\ \textbf{628.54}$\pm$\textbf{30.07}\end{tabular} & \begin{tabular}[c]{@{}c@{}}3615.44$\pm$26.41\\ 2189.38$\pm$31.25\\ 2856.94$\pm$32.18\\ 2789.35$\pm$31.45\\ 2365.38$\pm$32.85\\ 1352.71$\pm$30.94\\ 1832.91$\pm$29.67\\ 1489.26$\pm$30.85\\ 1793.91$\pm$32.42\\ \textbf{609.88}$\pm$\textbf{30.33}\end{tabular} & \begin{tabular}[c]{@{}c@{}}3842.52$\pm$28.33\\ 2306.85$\pm$30.84\\ 2890.52$\pm$29.47\\ 2745.35$\pm$33.42\\ 2689.36$\pm$32.81\\ 987.62$\pm$31.74\\ 1896.54$\pm$32.83\\ 1328.46$\pm$29.84\\ 1573.23$\pm$33.45\\ \textbf{635.86}$\pm$\textbf{29.04}\end{tabular} \\ \hline
\end{tabular}
\end{table*}

\begin{table*}[]
\centering
\caption{Comparisons between the ATT attack and other methods in the Seaquest game. The expected cumulative rewards achieved by DQN, A3C, TRPO and PPO target policies are 2385.32, 2856.89, 1856.89 and 2108.38 respectively. The bold indicates the best attack performance.}
\label{tab:seaquest}
\setlength{\tabcolsep}{0.8mm} 
\begin{tabular}{lcccccccc}
\hline
\multicolumn{9}{c}{Seaquest}                                                                                                                                                                                                                                                                                                                                                                                                                                                                                                                                                                                                                                                                                                                                                                                                                                                                                                                                                                                                                                                                                                                                                                                                                                                                                                                                                                                                                                                                                                                                                                                                \\ \hline
\multicolumn{1}{l|}{}                                                                                                            & \multicolumn{4}{c|}{White-box}                                                                                                                                                                                                                                                                                                                                                                                                                                                                                                                                                                                                                                                                                                                     & \multicolumn{4}{c}{Black-box}                                                                                                                                                                                                                                                                                                                                                                                                                                                                                                                                                                                                                                                                                                       \\ \hline
\multicolumn{1}{l|}{Method}                                                                                                      & DQN                                                                                                                                                                          & A3C                                                                                                                                                                           & TRPO                                                                                                                                                                          & \multicolumn{1}{c|}{PPO}                                                                                                                                                                            & DQN                                                                                                                                                                                & A3C                                                                                                                                                                           & TRPO                                                                                                                                                                          & PPO                                                                                                                                                                            \\ \hline
\multicolumn{1}{l|}{\begin{tabular}[c]{@{}l@{}}*\\ FGSM\\ Skip\\ S-T\\ EDGE\\ PA-AD\\ AdvRL-GAN\\ TSGE\\ PIA\\ AAT\end{tabular}} & \begin{tabular}[c]{@{}c@{}}2385.32$\pm$15.43\\ \textbf{17.35}$\pm$\textbf{3.56}\\ 89.42$\pm$3.02\\ 150.35$\pm$2.97\\ 75.46$\pm$4.03\\ 18.97$\pm$0.98\\ 108.23$\pm$4.56\\ 19.32$\pm$2.78\\ 23.71$\pm$3.54\\ 18.34$\pm$1.07\end{tabular} & \begin{tabular}[c]{@{}c@{}}2856.96$\pm$16.84\\ 65.42$\pm$3.07\\ 158.34$\pm$4.13\\ 221.84$\pm$3.27\\ 98.55$\pm$3.08\\ \textbf{17.63}$\pm$\textbf{2.41}\\ 129.33$\pm$2.83\\ 35.43$\pm$1.76\\ 48.29$\pm$2.81\\ 20.22$\pm$1.23\end{tabular} & \begin{tabular}[c]{@{}c@{}}1856.89$\pm$14.87\\ 77.54$\pm$3.21\\ 198.33$\pm$3.44\\ 210.04$\pm$2.97\\ 98.45$\pm$3.12\\ 23.54$\pm$2.87\\ 119.83$\pm$3.04\\ 54.82$\pm$2.14\\ 49.41$\pm$2.52\\ \textbf{19.84}$\pm$\textbf{2.03}\end{tabular} & \multicolumn{1}{c|}{\begin{tabular}[c]{@{}c@{}}2108.38$\pm$16.53\\ 74.32$\pm$2.17\\ 178.26$\pm$3.82\\ 256.83$\pm$3.04\\ 100.82$\pm$3.51\\ 24.81$\pm$3.71\\ 132.42$\pm$2.87\\ 54.39$\pm$3.17\\ 53.28$\pm$2.89\\ \textbf{21.42}$\pm$\textbf{3.07}\end{tabular}} & \begin{tabular}[c]{@{}c@{}}2385.32$\pm$15.43\\ 290.32$\pm$5.74\\ 300.48$\pm$6.52\\ 328.18$\pm$4.31\\ 298.43$\pm$7.98\\ 178.46$\pm$5.04\\ 245.44$\pm$6.15\\ 167.89$\pm$6.71\\ 216.74$\pm$7.16\\ \textbf{80.45}$\pm$\textbf{5.34}\end{tabular} & \begin{tabular}[c]{@{}c@{}}2856.96$\pm$16.84\\ 65.42$\pm$7.32\\ 158.34$\pm$6.89\\ 221.84$\pm$7.18\\ 98.55$\pm$7.24\\ \textbf{17.63}$\pm$\textbf{5.37}\\ 129.33$\pm$7.05\\ 35.43$\pm$6.05\\ 48.29$\pm$6.19\\ 20.22$\pm$5.23\end{tabular} & \begin{tabular}[c]{@{}c@{}}1856.89$\pm$14.87\\ 77.54$\pm$6.83\\ 198.33$\pm$8.47\\ 210.04$\pm$5.24\\ 98.45$\pm$6.53\\ 23.54$\pm$4.32\\ 119.83$\pm$3.56\\ 54.82$\pm$5.41\\ 49.41$\pm$4.32\\ \textbf{19.84}$\pm$\textbf{3.17}\end{tabular} & \begin{tabular}[c]{@{}c@{}}2108.38$\pm$16.53\\ 74.32$\pm$6.44\\ 178.26$\pm$7.59\\ 256.83$\pm$6.37\\ 100.82$\pm$6.04\\ 24.81$\pm$3.21\\ 132.42$\pm$4.68\\ 54.39$\pm$5.24\\ 53.28$\pm$4.52\\ \textbf{21.42}$\pm$\textbf{4.37}\end{tabular} \\ \hline
\end{tabular}
\end{table*}

\begin{table*}[]
\centering
\caption{Comparisons between the ATT attack and other methods in the Qbert game. $*$ denotes the cumulative reward of the target policy under no attack. The bold indicates the best attack performance.}
\label{tab:qbert}
\setlength{\tabcolsep}{0.8mm} 
\begin{tabular}{lcccccccc}
\hline
\multicolumn{9}{c}{Qbert}                                                                                                                                                                                                                                                                                                                                                                                                                                                                                                                                                                                                                                                                                                                                                                                                                                                                                                                                                                                                                                                                                                                                                                                                                                                                                                                                                                                                                                                                                                                                                                                                                           \\ \hline
\multicolumn{1}{l|}{}                                                                                                            & \multicolumn{4}{c|}{White-box}                                                                                                                                                                                                                                                                                                                                                                                                                                                                                                                                                                                                                                                                                                                                    & \multicolumn{4}{c}{Black-box}                                                                                                                                                                                                                                                                                                                                                                                                                                                                                                                                                                                                                                                                                                                \\ \hline
\multicolumn{1}{l|}{Method}                                                                                                      & DQN                                                                                                                                                                              & A3C                                                                                                                                                                               & TRPO                                                                                                                                                                              & \multicolumn{1}{c|}{PPO}                                                                                                                                                                               & DQN                                                                                                                                                                              & A3C                                                                                                                                                                               & TRPO                                                                                                                                                                              & PPO                                                                                                                                                                               \\ \hline
\multicolumn{1}{l|}{\begin{tabular}[c]{@{}l@{}}*\\ FGSM\\ Skip\\ S-T\\ EDGE\\ PA-AD\\ AdvRL-GAN\\ TSGE\\ PIA\\ AAT\end{tabular}} & \begin{tabular}[c]{@{}c@{}}17890.93$\pm$2.35\\ 165.43$\pm$2.49\\ 678.35$\pm$3.13\\ 543.44$\pm$2.87\\ 235.68$\pm$3.04\\ 152.34$\pm$2.89\\ 124.82$\pm$1.87\\ \textbf{48.39}$\pm$\textbf{4.46}\\ 98.93$\pm$3.51\\ 50.56$\pm$1.74\end{tabular} & \begin{tabular}[c]{@{}c@{}}21840.35$\pm$3.41\\ 189.34$\pm$2.93\\ 543.27$\pm$3.97\\ 569.38$\pm$4.08\\ 253.96$\pm$2.96\\ 148.92$\pm$3.18\\ 139.84$\pm$2.47\\ 83.98$\pm$3.21\\ 129.46$\pm$4.32\\ \textbf{60.22}$\pm$\textbf{2.19}\end{tabular} & \begin{tabular}[c]{@{}c@{}}22840.56$\pm$1.52\\ 205.33$\pm$3.01\\ 667.32$\pm$2.95\\ 510.35$\pm$3.17\\ 199.84$\pm$3.46\\ 175.62$\pm$3.58\\ 129.83$\pm$4.02\\ \textbf{58.01}$\pm$\textbf{3.96}\\ 109.49$\pm$2.34\\ 58.42$\pm$2.78\end{tabular} & \multicolumn{1}{c|}{\begin{tabular}[c]{@{}c@{}}2458.68$\pm$2.87\\ 210.32$\pm$3.73\\ 680.34$\pm$3.15\\ 568.23$\pm$4.01\\ 1246.21$\pm$2.77\\ 101.23$\pm$3.28\\ 130.58$\pm$3.92\\ 94.35$\pm$2.81\\ 117.43$\pm$3.07\\ \textbf{54.32}$\pm$\textbf{2.81}\end{tabular}} & \begin{tabular}[c]{@{}c@{}}17890.93$\pm$2.35\\ 165.43$\pm$5.46\\ 678.35$\pm$8.32\\ 543.44$\pm$3.45\\ 235.68$\pm$7.01\\ 152.34$\pm$5.32\\ 124.82$\pm$4.98\\ \textbf{48.39}$\pm$\textbf{5.46}\\ 98.93$\pm$4.92\\ 50.56$\pm$3.41\end{tabular} & \begin{tabular}[c]{@{}c@{}}21840.35$\pm$3.41\\ 189.34$\pm$6.42\\ 543.27$\pm$6.51\\ 569.38$\pm$6.73\\ 253.96$\pm$5.82\\ 148.92$\pm$7.03\\ 139.84$\pm$6.54\\ 83.98$\pm$4.98\\ 129.46$\pm$5.94\\ \textbf{60.22}$\pm$\textbf{5.33}\end{tabular} & \begin{tabular}[c]{@{}c@{}}22840.56$\pm$1.52\\ 205.33$\pm$5.93\\ 667.32$\pm$6.48\\ 510.35$\pm$6.82\\ 199.84$\pm$7.03\\ 175.62$\pm$6.35\\ 129.83$\pm$5.98\\ \textbf{58.01}$\pm$\textbf{7.13}\\ 109.49$\pm$6.32\\ 58.42$\pm$5.18\end{tabular} & \begin{tabular}[c]{@{}c@{}}2458.68$\pm$2.87\\ 210.32$\pm$6.25\\ 680.34$\pm$5.48\\ 568.23$\pm$6.95\\ 1246.21$\pm$5.39\\ 101.23$\pm$6.33\\ 130.58$\pm$6.43\\ 94.35$\pm$7.36\\ 117.43$\pm$5.83\\ \textbf{54.32}$\pm$\textbf{4.73}\end{tabular} \\ \hline
\end{tabular}
\end{table*}

\begin{table*}[]
\centering
\caption{Comparisons between the ATT attack and other methods in the Space Invaders game. $*$ denotes the cumulative reward of the target policy under no attack. The bold indicates the best attack performance.}
\label{tab:space_invader}
\setlength{\tabcolsep}{0.8mm}
\begin{tabular}{lcccccccc}
\hline
\multicolumn{9}{c}{Space Invaders}                                                                                                                                                                                                                                                                                                                                                                                                                                                                                                                                                                                                                                                                                                                                                                                                                                                                                                                                                                                                                                                                                                                                                                                                                                                                                                                                                                                                                                                                                                                                                                                                        \\ \hline
\multicolumn{1}{l|}{}                                                                                                            & \multicolumn{4}{c|}{White-box}                                                                                                                                                                                                                                                                                                                                                                                                                                                                                                                                                                                                                                                                                                                               & \multicolumn{4}{c}{Black-box}                                                                                                                                                                                                                                                                                                                                                                                                                                                                                                                                                                                                                                                                                                           \\ \hline
\multicolumn{1}{l|}{Method}                                                                                                      & DQN                                                                                                                                                                               & A3C                                                                                                                                                                             & TRPO                                                                                                                                                                             & \multicolumn{1}{c|}{PPO}                                                                                                                                                                            & DQN                                                                                                                                                                               & A3C                                                                                                                                                                             & TRPO                                                                                                                                                                             & PPO                                                                                                                                                                            \\ \hline
\multicolumn{1}{l|}{\begin{tabular}[c]{@{}l@{}}*\\ FGSM\\ Skip\\ S-T\\ EDGE\\ PA-AD\\ AdvRL-GAN\\ TSGE\\ PIA\\ AAT\end{tabular}} & \begin{tabular}[c]{@{}c@{}}1005.38$\pm$2.41\\ 123.50$\pm$3.42\\ 169.80$\pm$4.57\\ 210.32$\pm$2.78\\ 156.33$\pm$4.02\\ 151.27$\pm$2.54\\ 160.32$\pm$3.87\\ 103.42$\pm$2.74\\ 123.57$\pm$1.98\\ \textbf{59.59}$\pm$\textbf{2.03}\end{tabular} & \begin{tabular}[c]{@{}c@{}}1185.42$\pm$1.98\\ 136.34$\pm$4.07\\ 145.83$\pm$4.26\\ 189.56$\pm$3.52\\ 189.32$\pm$3.49\\ 163.52$\pm$2.97\\ 176.32$\pm$3.44\\ 86.43$\pm$3.14\\ 98.47$\pm$2.18\\ \textbf{51.26}$\pm$\textbf{2.16}\end{tabular} & \begin{tabular}[c]{@{}c@{}}1456.26$\pm$2.74\\ 138.23$\pm$3.78\\ 141.56$\pm$4.01\\ 208.93$\pm$3.42\\ 172.34$\pm$2.97\\ 132.84$\pm$3.15\\ 166.26$\pm$3.28\\ 79.98$\pm$3.61\\ 104.72$\pm$2.73\\ \textbf{60.35}$\pm$\textbf{2.32}\end{tabular} & \multicolumn{1}{c|}{\begin{tabular}[c]{@{}c@{}}1653.48$\pm$2.43\\ 149.23$\pm$3.52\\ 185.39$\pm$3.93\\ 219.27$\pm$4.01\\ 188.93$\pm$3.58\\ 89.73$\pm$3.82\\ 170.32$\pm$3.75\\ \textbf{59.87}$\pm$\textbf{3.93}\\ 93.42$\pm$3.07\\ 61.38$\pm$2.54\end{tabular}} & \begin{tabular}[c]{@{}c@{}}1005.38$\pm$2.41\\ 123.50$\pm$5.43\\ 169.80$\pm$7.02\\ 210.32$\pm$6.54\\ 156.33$\pm$5.93\\ 151.27$\pm$6.28\\ 160.32$\pm$4.35\\ 103.42$\pm$5.28\\ 123.57$\pm$6.53\\ \textbf{68.59}$\pm$\textbf{4.02}\end{tabular} & \begin{tabular}[c]{@{}c@{}}1185.42$\pm$1.98\\ 136.34$\pm$6.03\\ 145.83$\pm$6.43\\ 189.56$\pm$5.86\\ 189.32$\pm$6.87\\ 163.52$\pm$7.13\\ 176.32$\pm$4.56\\ 86.43$\pm$5.23\\ 98.47$\pm$6.33\\ \textbf{51.26}$\pm$\textbf{5.27}\end{tabular} & \begin{tabular}[c]{@{}c@{}}1456.26$\pm$2.74\\ 138.23$\pm$5.91\\ 141.56$\pm$6.23\\ 208.93$\pm$5.97\\ 172.34$\pm$6.37\\ 132.84$\pm$6.84\\ 166.26$\pm$5.36\\ 79.98$\pm$6.38\\ 104.72$\pm$5.86\\ \textbf{60.35}$\pm$\textbf{4.89}\end{tabular} & \begin{tabular}[c]{@{}c@{}}1653.48$\pm$2.43\\ 149.23$\pm$6.32\\ 185.39$\pm$6.74\\ 219.27$\pm$5.86\\ 188.93$\pm$7.03\\ 89.73$\pm$6.74\\ 170.32$\pm$5.84\\ \textbf{59.87}$\pm$\textbf{5.98}\\ 93.42$\pm$6.07\\ 61.38$\pm$5.07\end{tabular} \\ \hline
\end{tabular}
\end{table*}

\begin{figure*}[t]
  \centering
  \subfigure{\includegraphics[width=0.44\textwidth]{hopper_result.pdf}}
  \subfigure{\includegraphics[width=0.44\textwidth]{walker_result.pdf}}
  \subfigure{\includegraphics[width=0.44\textwidth]{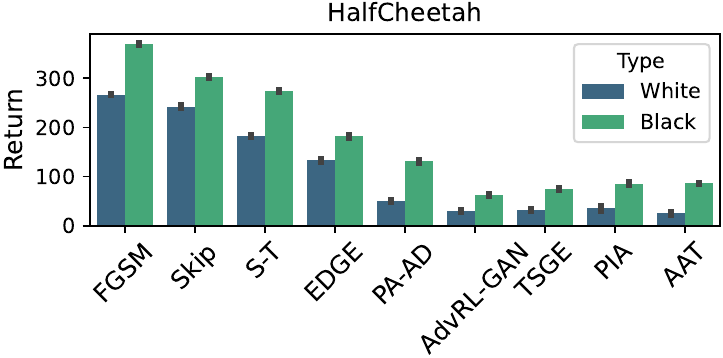}}
  \subfigure{\includegraphics[width=0.44\textwidth]{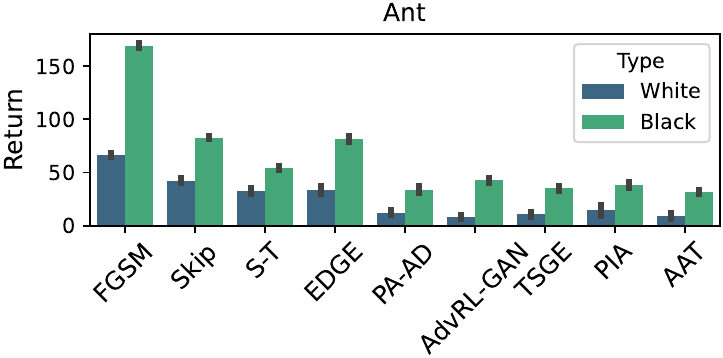}}
  \caption{All experimental results are the average of 10 experiments. In the black-box experimental results, we generate adversarial examples using A3C and D4PG as substitution strategies, respectively.}
  \label{fig:sup_cons_a3c}
\end{figure*}

\subsection{Defense evaluation}
\label{sup:defense_eval}
To further verify the defense capability of existing methods against AAT, we employ two classical strategies: (1) applying spatial smoothing (i.e., median filtering, MF) to mitigate input perturbations, and (2) reconstructing the input states using MimicDiffusion~\cite{DBLP:SongLP024} to remove adversarial noise.
Table~\ref{sup:defance_rec} presents the performance comparison of four fundamental reinforcement learning methods (DQN, A3C, TRPO, and PPO) in the Pong environment, after being processed by two defense methods (MF and MimicDiffusion), under eight different adversarial attacks (FGSM, Skip, S-T, EDGE, PA-AD, AdvRL-GAN, TSGE, PIA, and AAT).
The performance metric is the average score achieved by each method in the Pong environment, where lower scores indicate that the defense method is less capable of defending against adversarial examples.

\begin{table*}[]
\centering
\caption{Defense results of smoothing and input reconstruction against adversarial examples. $*$ denotes the cumulative reward obtained by the policy in the absence of an attack.}
\begin{tabular}{lcccccccc}
\hline
\multicolumn{9}{c}{Pong}                                                                                                                                       \\ \hline
          & \multicolumn{4}{c|}{MF}                                                            & \multicolumn{4}{c}{MimicDiffusion}                            \\
Method    & DQN           & A3C           & TRPO          & \multicolumn{1}{c|}{PPO}           & DQN           & A3C           & TRPO          & PPO           \\
*         & 18.18         & 19.13         & 18.72         & \multicolumn{1}{c|}{20.02}         & 19.48         & 20.22         & 19.84         & 20.46         \\
FGSM      & 17.65         & 18.32         & 18.04         & \multicolumn{1}{c|}{18.73}         & 18.74         & 18.97         & 19.12         & 19.34         \\
Skip      & 17.94         & 18.78         & 18.13         & \multicolumn{1}{c|}{17.94}         & 18.97         & 19.34         & 18.74         & 19.64         \\
S-T       & 17.44         & 17.99         & 18.14         & \multicolumn{1}{c|}{18.24}         & 18.41         & 18.95         & 19.13         & 18.41         \\
EDGE      & 16.7          & 17.41         & 16.78         & \multicolumn{1}{c|}{15.43}         & 16.98         & 17.42         & 17.34         & 17.48         \\
PA-AD     & 9.43          & 10.23         & 11.27         & \multicolumn{1}{c|}{8.97}          & 13.57         & 14.36         & 13.87         & 14.03         \\
AdvRL-GAN & 3.41          & 4.27          & 4.71          & \multicolumn{1}{c|}{\textbf{3.98}} & \textbf{4.54} & 6.73          & 7.51          & \textbf{6.79} \\
TSGE      & 7.84          & 6.73          & 5.97          & \multicolumn{1}{c|}{6.77}          & 10.98         & 13.57         & 12.73         & 13.21         \\
PIA       & 4.56          & 5.41          & \textbf{3.97} & \multicolumn{1}{c|}{4.57}          & 9.83          & 9.14          & 8.73          & 9.42          \\
AAT       & \textbf{2.97} & \textbf{3.54} & 4.15          & \multicolumn{1}{c|}{4.27}          & 5.43          & \textbf{4.54} & \textbf{5.44} & 7.12          \\ \hline
\end{tabular}
\label{sup:defance_rec}
\end{table*}

Both defense mechanisms can effectively mitigate performance degradation when confronted with attack methods such as FGSM, Skip, and S-T.
Under $\text{PA-AD}$ and $\text{EDGE}$ attacks, the effectiveness of the $\text{MF}$ defense begins to significantly decline, with the performance of all strategies dropping below $\text{11.27}$.
In contrast, the policy under the MimicDiffusion defense can still maintain performance in the range of 13 to 14 points.
However, the defense performance of MF and MimicDiffusion is severely challenged when confronted with attacks such as $\text{AdvRL-GAN}$, $\text{PIA}$, and $\text{AAT}$. 
For example, under the $\text{MF}$ defense, the performance scores of the $\text{DQN}$ and $\text{A3C}$ policies in the $\text{AAT}$ attack drop sharply to $\text{2.97}$ and $\text{3.54}$, respectively.

\subsection{Adversarial perturbations concealment}
\label{sup:adver_noise_conceal}
Adversarial examples aim to mislead the policy into making incorrect decisions by introducing imperceptible perturbations that minimally alter the original input to evade detection.
To analyze these perturbations, we visually contrast adversarial examples with their original examples and quantify the perturbation intensity using mean squared error (MSE) and SSIM. 
Fig.~\ref{fig:vis_noise} shows the original states, adversarial examples, adversarial perturbations, and MSE of different attack perturbations in the games Pong and Chopper Command.

\begin{figure}[]
    \centering
  \subfigure[Pong]{\includegraphics[width=0.45\textwidth]{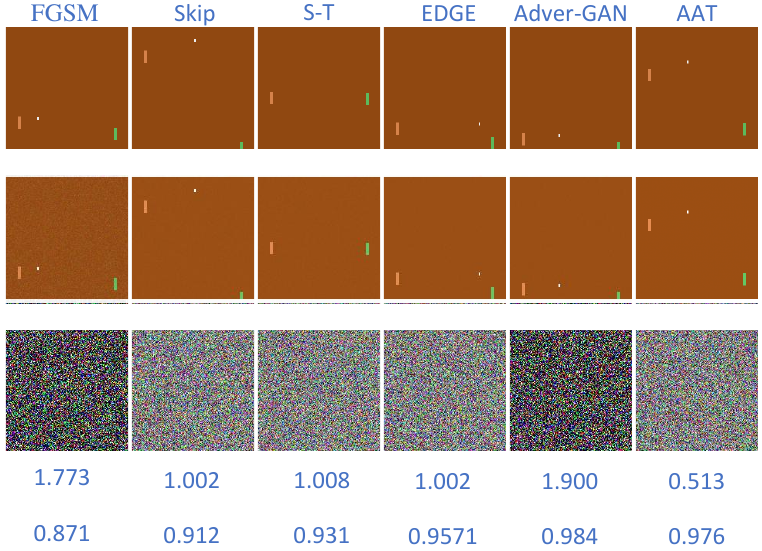}}
  \subfigure[Chopper Command]{\includegraphics[width=0.45\textwidth]{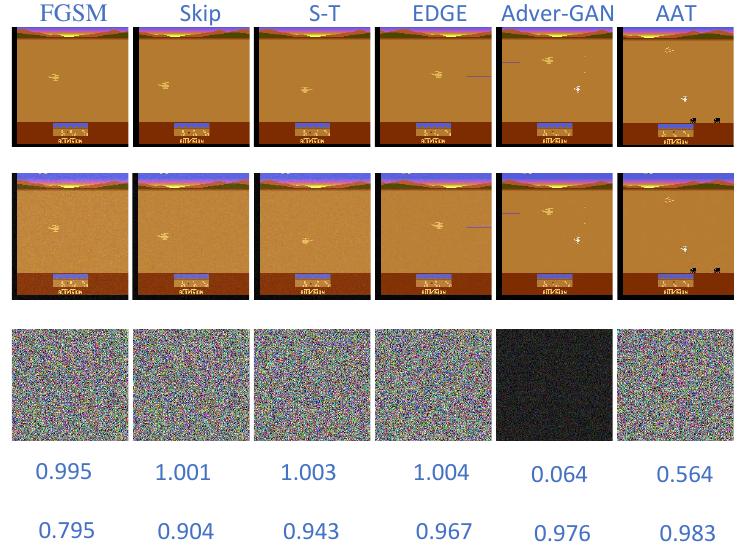}}
  \caption{Adversarial example visualization. In each subfigure, the first row is the original state, the second row is the adversarial example, the third row is the adversarial perturbation that is visualizable, the fourth row is the MSE between the original state and the adversarial example and five row is the SSIM metric. The six columns correspond to FGSM, Skip, S-T, EDGE, advRL-GAN, and AAT respectively.}
  \label{fig:vis_noise}
\end{figure}

We can observe that AAT generates adversarial examples with minimal perturbations, while other attack methods generate adversarial examples that exhibit significant differences from the original states.
Specifically, the FGSM-based attack generates adversarial examples with significant noise traces.
Although reducing the perturbation value can better conceal these noises, it also leads to a decline in attack performance.
By using the C$\&$W method, EDGE is able to effectively mask adversarial perturbations even with a significantly higher MSE value.
AdvRL-GAN employs a generative network to generate adversarial examples and can generate natural perturbations (i.e., higher SSIM values) even with a significant MSE gap.
However, some studies indicate that the stability of adversarial examples generated by generative networks cannot be guaranteed (e.g., generating garbled noise).
In contrast, the AAT method employs a loss optimization approach that considers both the concealment of adversarial examples and their attack performance during the generation process.

\noindent \textbf{Adversarial example detection}.
To further validate the stealthiness of the adversarial examples generated by AAT, we employ two detection methods: the Expected Perturbation Score-based adversarial detection (EPS-AD)~\cite{DBLP:ZhangLYYL0T23} and Layer Regression (LR)~\cite{DBLP:MumcuY25}. 
EPS-AD quantifies the difference between test samples and natural samples using the maximum mean discrepancy of the expected perturbation score.
LR identifies adversarial examples by analyzing how attacks affect different DNN layers to varying degrees.
In our experiments, EPS-AD and LR detect 1,000 adversarial states generated by each attack method. 
We use the detection rate, defined as $r = \frac{n}{m}$, where $n$ is the number of successfully detected adversarial examples and $m$ is the total number tested.
A low detection rate shows that the attack has strong evasion capability, indicating that the generated adversarial examples have better stealthiness.

\begin{table*}[]
\centering
\caption{The results of adversarial detection.}
\begin{tabular}{llccccccccc}
\hline
Detection method & Environment & FGSM & Skip & S-T  & EDGE & PA-AD & AdvRL-GAN     & TSGE & PIA           & AAT           \\ \hline
                 & Pong        & 0.83 & 0.74 & 0.69 & 0.63 & 0.52  & \textbf{0.43} & 0.53      & 0.44          & 0.45          \\
EPS-AD           & Squest      & 0.87 & 0.65 & 0.74 & 0.64 & 0.66  & 0.65          & 0.57      & 0.49          & \textbf{0.43} \\
                 & Qbert       & 0.79 & 0.67 & 0.73 & 0.61 & 0.59  & 0.56          & 0.48      & \textbf{0.44} & 0.47          \\ \hline
                 & Pong        & 0.77 & 0.54 & 0.68 & 0.71 & 0.64  & \textbf{0.34} & 0.39      & 0.41          & 0.35          \\
LR               & Squest      & 0.81 & 0.61 & 0.79 & 0.68 & 0.45  & 0.45          & 0.54      & 0.42          & \textbf{0.39} \\
                 & Qbert       & 0.80 & 0.69 & 0.82 & 0.72 & 0.47  & \textbf{0.35} & 0.48      & 0.39          & \textbf{0.35} \\ \hline
\end{tabular}
\label{tab:sup_detection_results}
\end{table*}

Table~\ref{tab:sup_detection_results} presents the detection performance of two detection methods (EPS-AD and LR) on states generated by various adversarial attack models across three Atari environments (Pong, Seaquest, and Qbert).
Overall, FGSM and Skip attacks are generally easily detected by two types of detection methods, with their detection rates maintained at a high level (in most cases $>$ 0.70).
In comparison, the AAT and generative model-based AdvRL-GAN attacks demonstrate greater stealthiness, exhibiting significantly lower detection rates.
For example, in the Pong environment, the detection rates of EPS-AD for AdvRL-GAN and AAT are 0.43 and 0.45, respectively, while those of LR further decrease to 0.34 and 0.35.
This result indicates that AAT can effectively evade adversarial detection mechanisms.
Furthermore, performance differences also exist between different environments. 
Taking the Seaquest and Qbert environments as examples, the detection rates of the two detection methods under the same attacks are slightly improved compared to the Pong environment. 
This finding indicates that environmental complexity may influence the detectability of adversarial perturbations, suggesting that future research should consider the impact of task attributes and differences in observation spaces on adversarial detection.

\subsection{Substitution policy study}
\label{sup:substitu_policy}
In this subsection, we verify that AAT learns to generate adversarial examples using multiple substitution policies in black-box scenarios.
The training data is collected using three schemes: 1) A2C is employed as the target policy to collect 40,000 trajectories (i.e., $D_1$); 2) A2C and ACER are used as target policies to collect 40,000 trajectories (i.e., $D_2$), with each policy contributing 20,000; 3) A2C, ACER, and ACKTR are used as target policies to collect trajectories (i.e., $D_3$), where A2C and ACER contribute 15,000 each, and ACKTR contributes 10,000.
Fig.~\ref{fig:alter_policy_image} shows the attack results of AAT using a single substitution policy and multiple substitution policies against the target policy in Pong and Qbert environments.
Our experiments indicate that the performance of AAT in black-box scenarios is largely unaffected by the specific choice of substitution policies.
For example, in the Pong and Qbert environments, data collected by either a single substitution  policy or multiple substitution policies can enable AAT to achieve similar attack performance.
However, multiple substitution policies can lead to AAT requiring a longer time to achieve stable attack performance.

\begin{figure*}[h]
  \centering
  \subfigure{\includegraphics[width=0.24\textwidth]{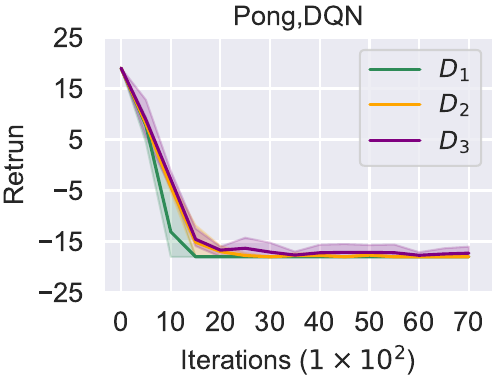}}
   \subfigure{\includegraphics[width=0.24\textwidth]{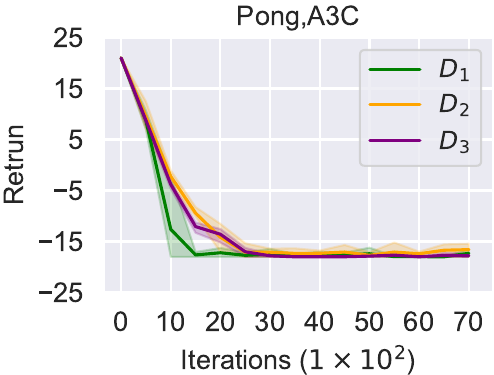}}
   \subfigure{\includegraphics[width=0.24\textwidth]{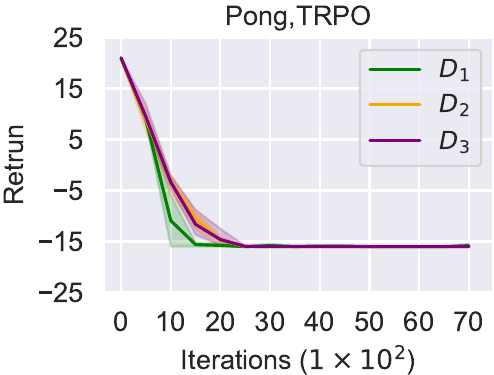}}
  \subfigure{\includegraphics[width=0.24\textwidth]{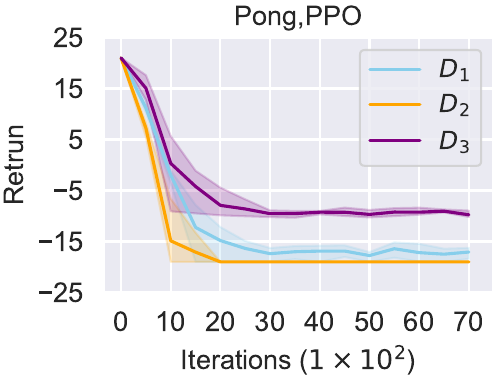}}
  \subfigure{\includegraphics[width=0.24\textwidth]{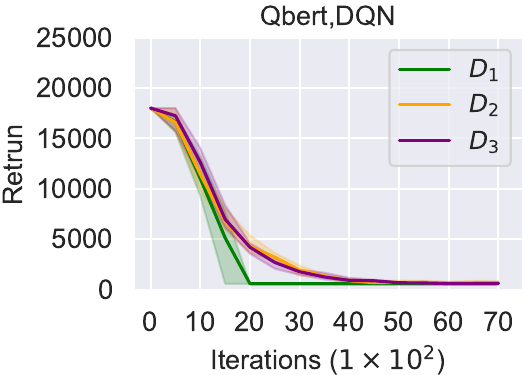}}
   \subfigure{\includegraphics[width=0.24\textwidth]{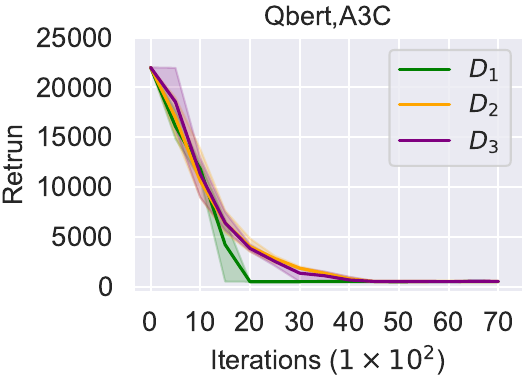}}
   \subfigure{\includegraphics[width=0.24\textwidth]{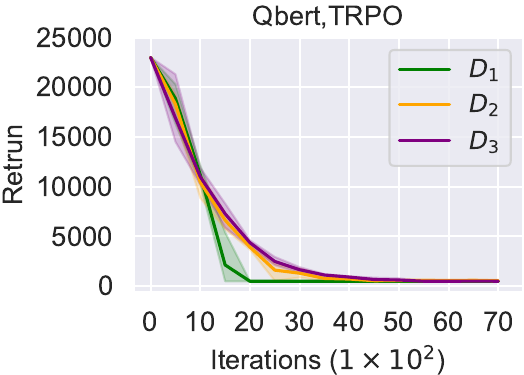}}
  \subfigure{\includegraphics[width=0.24\textwidth]{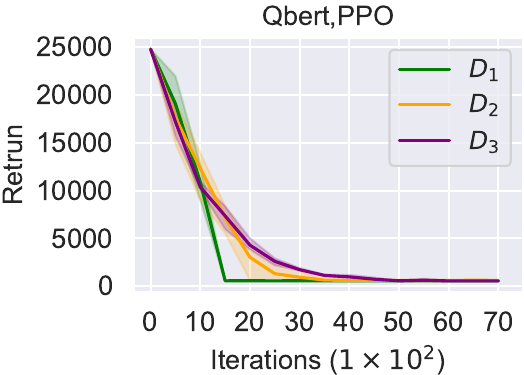}}
  \caption{The impact of single and multiple substitution policies in black-box scenarios.}
  \label{fig:alter_policy_image}
\end{figure*}

\subsection{Training data analysis}
\label{sup:training_data}
\begin{figure}[h]
  \centering
  \subfigure{\includegraphics[width=0.22\textwidth]{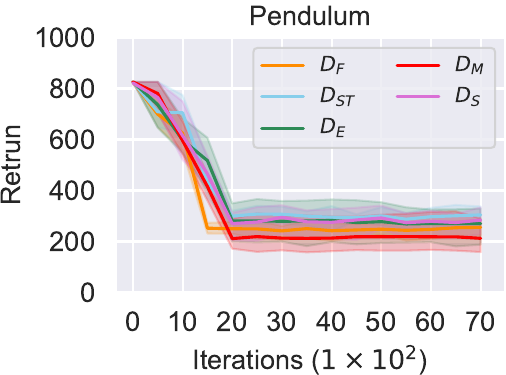}}
  \subfigure{\includegraphics[width=0.22\textwidth]{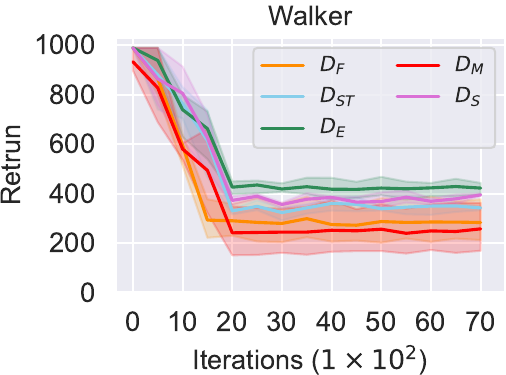}}
  \caption{The impact of data diversity on AAT performance. The y-axis represents the cumulative rewards of agent, while the x-axis is the training epochs of AAT.}
  \label{fig:data_type_picture}
\end{figure}

\noindent\textbf{Training data diversity}.
Data collected from multiple attack strategies can enhance AAT’s performance compared to data from a single strategy, as diverse adversarial patterns improve attack generalization. 
To verify this assumption, we use D4PG as the target policy, and evaluate AAT trained on datasets collected in different ways in the Pendulum and Walker environments.
AAT learns adversarial perturbations using datasets from five sources: FGSM, Skip, EDGE, S-T, and a balanced mixture of all four strategies, where each type of data contains 10,000 trajectories to ensure statistical reliability.
Fig.~\ref{fig:data_type_picture} presents the evaluation results of AAT learning using 10 random seeds on datasets collected from various attack strategies.
Here, $D_F$, $D_S$, $D_{ST}$, and $D_E$ denote datasets from FGSM, Skip, S-T, and EDGE, respectively, while $D_M$ contains an equal mixture (25$\%$ each) to balance diversity and volume. 
Notably, $D_M$ achieves the highest performance, which indicates that adding diverse perturbations during the training process can endow AAT with stronger attack capabilities.

$\quad$
\\
\noindent\textbf{Training data collection method}.
In order to explore the impact of different data collection methods on AAT, we study ablation experiments using the Pong and Chopper Command environments.
Specifically, AAT learns to generate adversarial examples using data collected from four attack strategies: FGSM, Skip, EDGE and AdvRL-GAN, with each attack strategy collecting 20,000 historical trajectories.
Table~\ref{tab:data_type} shows the average performance over 10 runs of AAT, where $D_{FGSM}$ denotes data collected using FGSM, $D_{Skip}$ efers to data obtained through Skip, $D_{EDGE}$ represents data gathered via EDGE, and $D_{AGAN}$ corresponds to data collected by AdvRL-GAN.
Experimental results indicate that the data collected by most methods has no obvious impact on the attack performance of AAT.

$\quad$
\\
\noindent\textbf{Historical data learning advantages}.
In this section, we use experiments to illustrate the advantages of AAT in scenarios where real-time interaction is limited.
Suppose there is a target model (e.g., PPO) that restricts each user to at most 1,000 epochs per day due to cost constraints, where each epoch corresponds to a trajectory.
To overcome the constraints of real-time interactive learning, the RL-based attack method (i.e., AdvRL-GAN) leverages a continual learning approach to learn the generation of adversarial perturbations.
Specifically, AdvRL-GAN interacts with the environment for 1,000 epochs to optimize the generation of adversarial perturbations and then saves the model parameters.
Subsequently, AdvRL-GAN reinitializes in the environment, reloads the saved parameters, and interacts for an additional 1,000 epochs to learn the generation of adversarial perturbations.
This learning process is repeated until the number of interactions reaches 10,000 trajectories.
On the contrary, AAT and BCQ (i.e., an offline RL method) randomly select 10,000 trajectories from medium data to learn adversarial perturbations.

\begin{figure}[]
  \centering
  \subfigure{\includegraphics[width=0.22\textwidth]{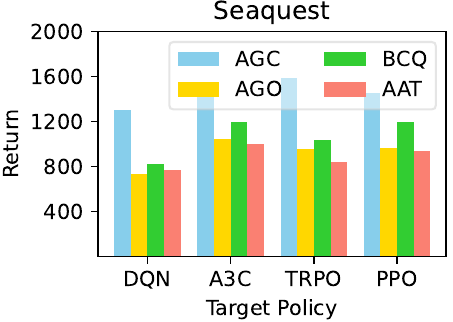}}
  \subfigure{\includegraphics[width=0.22\textwidth]{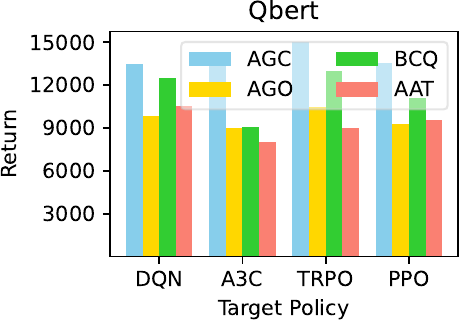}}
  \caption{The performance of the target policy after being attacked by different attack methods.
 The y-axis represents the cumulative rewards of agent, while the x-axis is the training epochs of AAT.}
  \label{fig:clearning}
\end{figure}

Fig.~\ref{fig:clearning} shows the comparison results between AAT, BCQ and AdvRL-GAN on Seaquest and Qbert environments. 
Here, AGC denotes AdvRL-GAN learning adversarial perturbations using a continual learning approach, while AGO represents AdvRL-GAN learning adversarial perturbations by interacting with the environment for 10,000 trajectories at once.
Our experiments indicate that AAT achieves superior attack performance compared to AGC under the same number of interactions and even surpasses AGO in overall attack effectiveness.
Although the BCQ algorithm outperforms AGC in learning from historical data, its attack performance remains poor because it merely imitates the adversarial perturbations in the data.
Additionally, an interesting phenomenon is that the attack performance of AGC is consistently lower than that of AGO.
A potential explanation for this phenomenon is the contemporary preference for one-shot learning paradigms in neural network optimization.

$\quad$
\\
\noindent \textbf{Random perturbations}.
In this section, we explore the impact of random perturbations on AAT performance in mixed data.
Specifically, we train AAT in Chopper Command and Space Invaders environments using two equal number of historical datasets (i.e., $D_1$ and $D_2$), where $D_1$ does not contain randomly perturbed data and $D_2$ contains 20$\%$ randomly perturbations data. 
Note that random perturbation refers to adding random perturbations to the state of the target policy to disrupt its action choices.
Fig.~\ref{fig:data_random_image} shows the results of AAT attacking DQN, A3C, TRPT and PPO during training.
Despite exhibiting slower convergence and higher variance when trained on data with random perturbations, AAT demonstrates superior performance.

\begin{figure*}[h]
  \centering
  \subfigure{\includegraphics[width=0.24\textwidth]{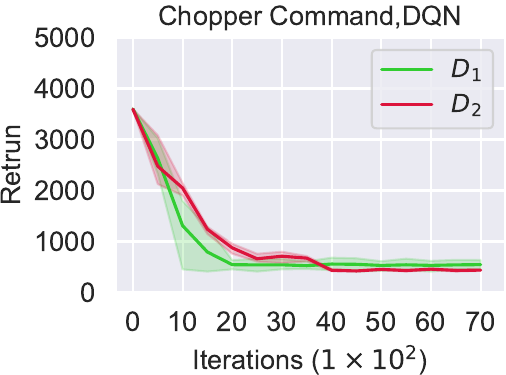}}
   \subfigure{\includegraphics[width=0.24\textwidth]{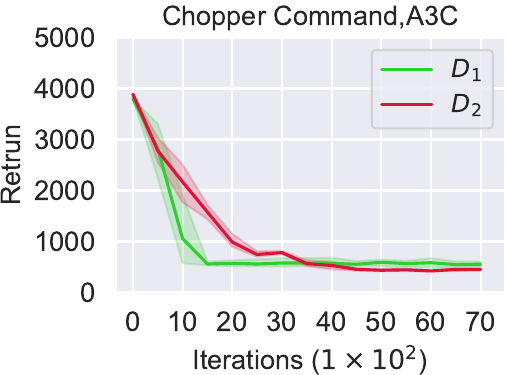}}
   \subfigure{\includegraphics[width=0.24\textwidth]{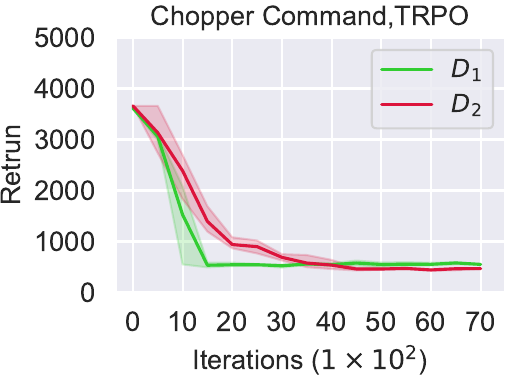}}
  \subfigure{\includegraphics[width=0.24\textwidth]{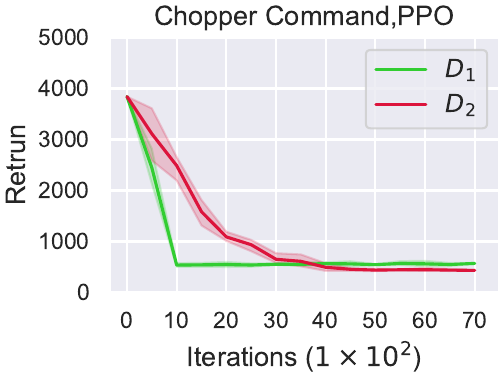}}
  \subfigure{\includegraphics[width=0.24\textwidth]{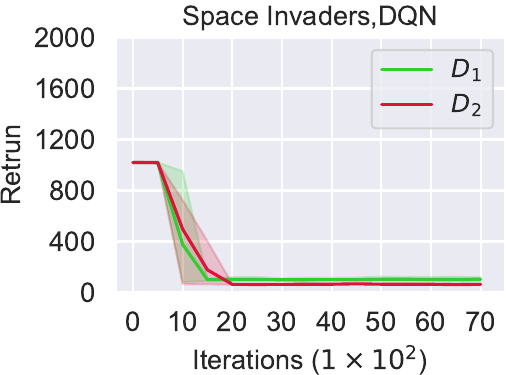}}
   \subfigure{\includegraphics[width=0.24\textwidth]{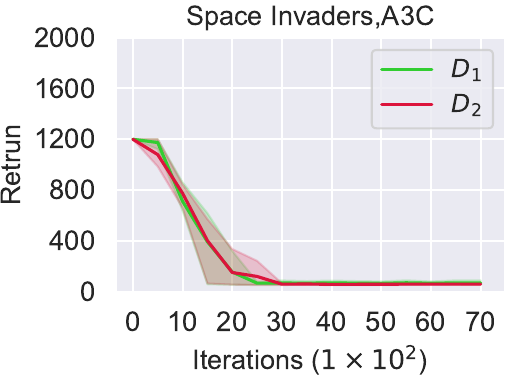}}
   \subfigure{\includegraphics[width=0.24\textwidth]{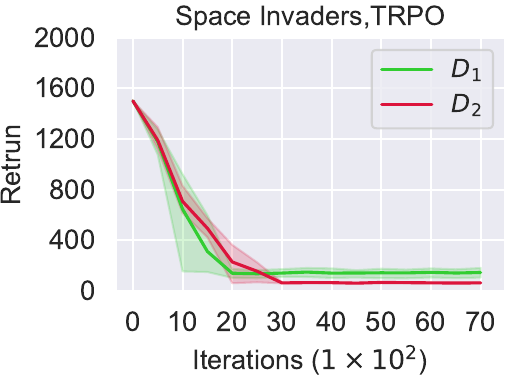}}
  \subfigure{\includegraphics[width=0.24\textwidth]{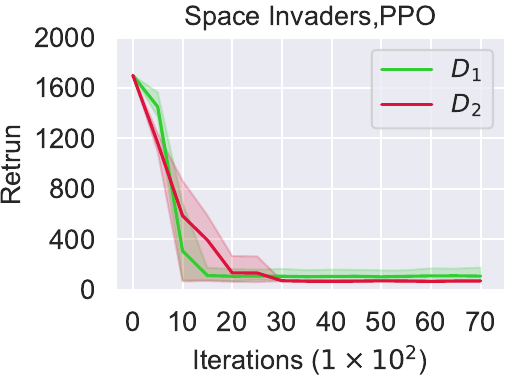}}
  \caption{The impact of random perturbation on AAT attack performance.}
  \label{fig:data_random_image}
\end{figure*}

\begin{table}[]
\centering
\caption{The impact of different data collection methods on AAT performance.}
\scalebox{0.83}{
\begin{tabular}{l|l|cccc}
\hline
Environments                                               & Data type                                                        & DQN                                                                           & A3C                                                                           & TRPO                                                                          & PPO                                                                           \\ \hline
Pong                                                       & \begin{tabular}[c]{@{}l@{}}$D_{FGSM}$\\ $D_{SKip}$\\ $D_{EDGE}$\\ $D_{AGAN}$\end{tabular} & \begin{tabular}[c]{@{}c@{}}-15.18\\ -9.36\\ -12.33\\ -9.67\end{tabular}       & \begin{tabular}[c]{@{}c@{}}-13.27\\ -11.59\\ -10.64\\ -12.91\end{tabular}     & \begin{tabular}[c]{@{}c@{}}-12.63\\ -10.84\\ -13.01\\ -10.79\end{tabular}     & \begin{tabular}[c]{@{}c@{}}-14.14\\ -12.56\\ -10.01\\ -9.95\end{tabular}      \\ \hline
\begin{tabular}[c]{@{}l@{}}Chopper \\ Command\end{tabular} & \begin{tabular}[c]{@{}l@{}}$D_{FGSM}$\\ $D_{SKip}$\\ $D_{EDGE}$\\ $D_{AGAN}$\end{tabular} & \begin{tabular}[c]{@{}c@{}}1457.34\\ 1678.43\\ 1832.56\\ 1467.43\end{tabular} & \begin{tabular}[c]{@{}c@{}}1356.98\\ 1834.14\\ 1754.36\\ 1548.21\end{tabular} & \begin{tabular}[c]{@{}c@{}}1543.55\\ 1672.81\\ 1854.66\\ 1456.32\end{tabular} & \begin{tabular}[c]{@{}c@{}}1437.63\\ 1783.29\\ 1659.26\\ 1499.57\end{tabular} \\ \hline
\end{tabular}
}
\label{tab:data_type}
\end{table}

\subsection{Efficiency exploration}
\label{sup:effic_explor}
We explore the time efficiency of generating adversarial examples for FGSM, Skip, S-T, EDGE, AdvRL-GAN, and AAT.
Table~\ref{tab:time_sample} provides the time required for these methods to generate an adversarial example in the Breakout, Pong, Seaquest, Humanoid, Hopper, and Walker environments. 
This time is calculated as the total generation time divided by the number of adversarial examples.
We can see that FGSM calculates adversarial perturbations based on gradients multiple steps, hence it is comparatively time-consuming.
Skip and S-T methods are slower because of critical frame calculations and threshold comparisons. 
EDGE efficiently provides key trajectories through linear Gaussian methods, but the C$\&$W method is time-consuming due to multiple optimization processes. 
AdvRL-GAN generates examples via a RL method but takes a long time due to the need to rebuild examples using a GAN.
Compared to the above methods, AAT directly outputs adversarial perturbations based on current input information without a complex computational process, thus it can quickly generate adversarial examples.

\begin{table}[]
\caption{The average time to generate an adversarial examples. Note that the time unit is milliseconds.}
\label{tab:time_sample}
\centering
\scalebox{0.80}{ 
\begin{tabular}{lccccll}
\hline
Method & Pong & Breakout & Seaquest & Humanoid & Hopper & Walker \\ \hline
\begin{tabular}[c]{@{}l@{}}FGSM\\ Skip\\ S-T\\ EDGE\\ AdvRL-GAN\\ AAT\end{tabular} & 
\begin{tabular}[c]{@{}c@{}}5.3\\ 8.5\\ 13.4\\ 12.8\\ 14.2\\ \textbf{1.56}\end{tabular} & 
\begin{tabular}[c]{@{}c@{}}8.4\\ 13.7\\ 9.8\\ 16.6\\ 18.8\\ \textbf{2.2}\end{tabular} & 
\begin{tabular}[c]{@{}c@{}}4.3\\ 7.2\\ 5.5\\ 9.4\\ 9.83\\ \textbf{2.4}\end{tabular} & 
\begin{tabular}[c]{@{}c@{}}6.3\\ 8.4\\ 8.3\\ 4.1\\ 10.58\\ \textbf{1.5}\end{tabular} & 
\begin{tabular}[c]{@{}l@{}}9.6\\ 13.2\\ 14.1\\ 13.5\\ 15.4\\ \textbf{2.1}\end{tabular} & 
\begin{tabular}[c]{@{}l@{}}7.8\\ 12.9\\ 15.6\\ 16.4\\ 16.8\\ \textbf{2.6}\end{tabular} \\ \hline
\end{tabular}
}
\end{table}

\begin{figure*}[h]
  \centering
  \subfigure{\includegraphics[width=0.24\textwidth]{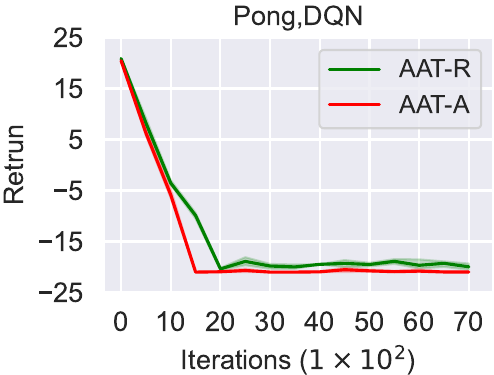}}
  \subfigure{\includegraphics[width=0.24\textwidth]{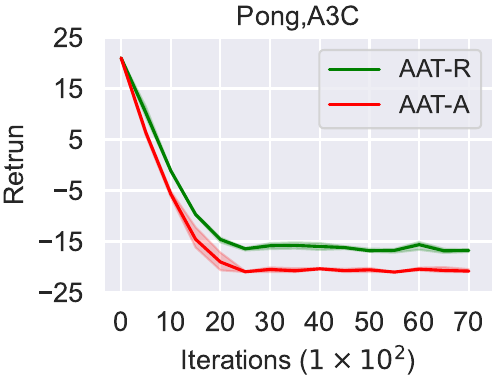}}
    \subfigure{\includegraphics[width=0.24\textwidth]{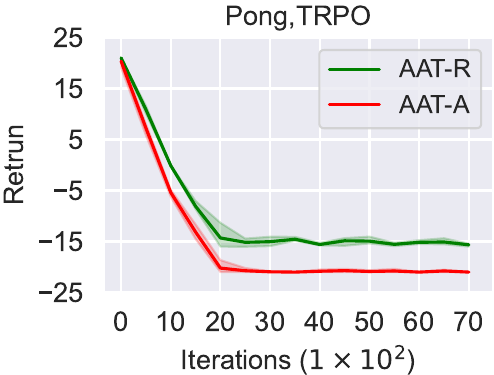}}
    \subfigure{\includegraphics[width=0.24\textwidth]{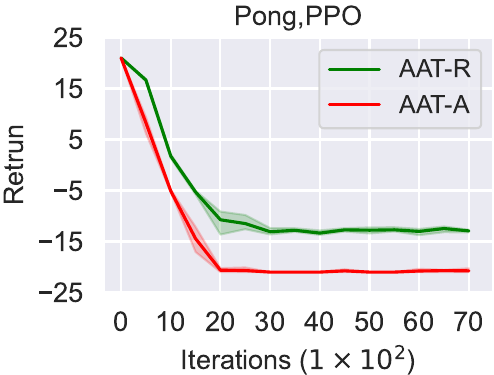}}
  \subfigure{\includegraphics[width=0.24\textwidth]{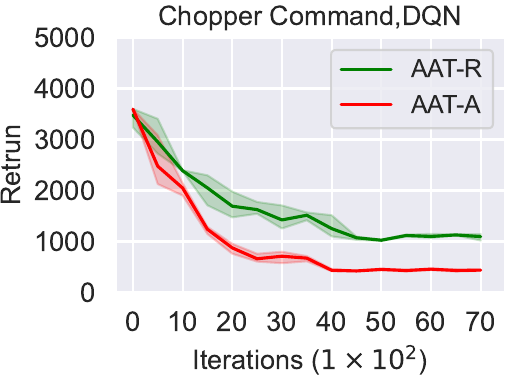}}
   \subfigure{\includegraphics[width=0.24\textwidth]{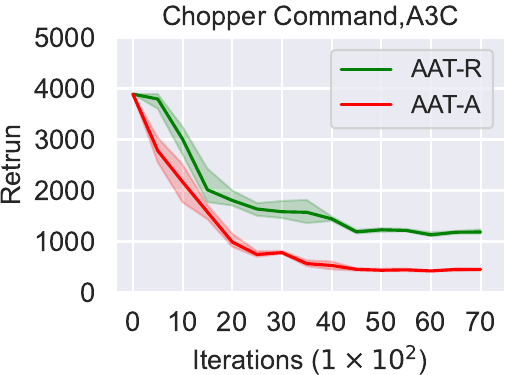}}
   \subfigure{\includegraphics[width=0.24\textwidth]{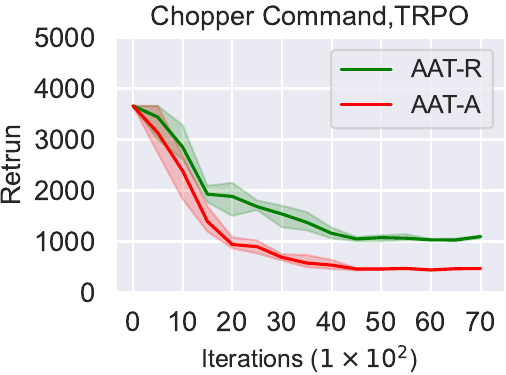}}
  \subfigure{\includegraphics[width=0.24\textwidth]{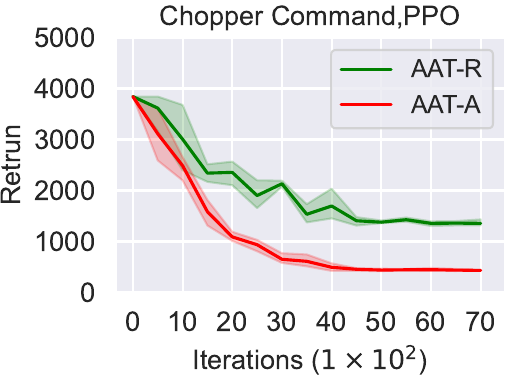}}
   \subfigure{\includegraphics[width=0.24\textwidth]{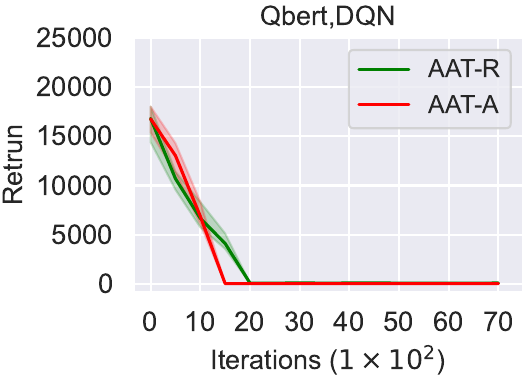}}
  \subfigure{\includegraphics[width=0.24\textwidth]{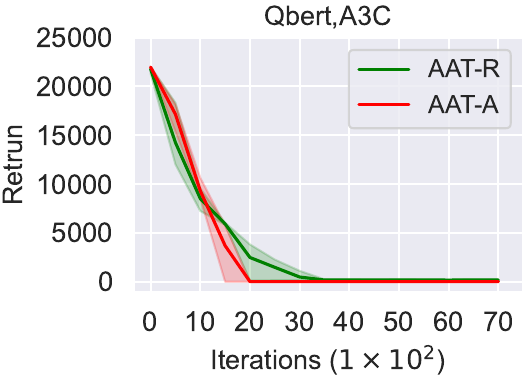}}
  \subfigure{\includegraphics[width=0.24\textwidth]{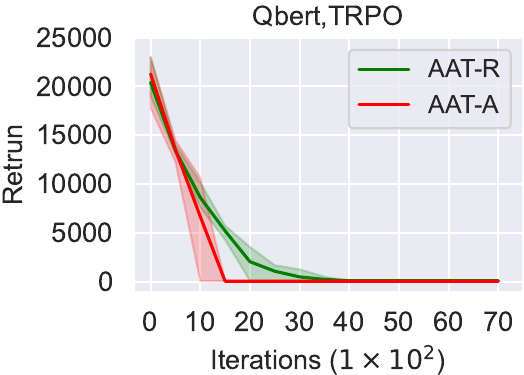}}
  \subfigure{\includegraphics[width=0.24\textwidth]{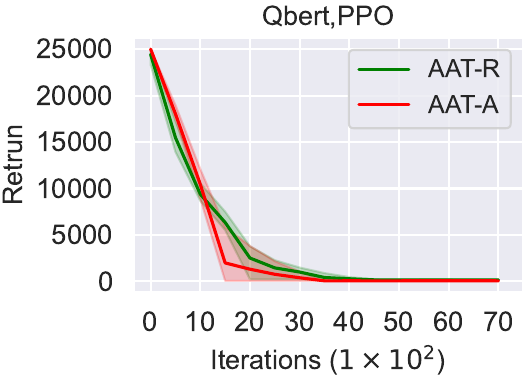}}
  \caption{AAT employs varied learning approaches for attacking different target policies across different environments. The y-axis represents the cumulative rewards of agent, while the x-axis is the training epochs of AAT.}
  \label{fig:advantage_image_sup}
\end{figure*}

\begin{table}[t]
\centering
\caption{The performance of the MSCSA}
\begin{tabular}{llcccc}
\hline
            &                                                    & \multicolumn{2}{c}{White-box}                                                                                 & \multicolumn{2}{c}{Black-box}                                                                                     \\ \hline
Environment & Method                                             & D4PG                                                  & A3C                                                   & D4PG                                                    & A3C                                                     \\ \hline
HalfCheetah & \begin{tabular}[c]{@{}l@{}}SA\\ MSCSA\end{tabular} & \begin{tabular}[c]{@{}c@{}}51.25\\ \textbf{25.56}\end{tabular} & \begin{tabular}[c]{@{}c@{}}42.34\\ \textbf{25.17}\end{tabular} & \begin{tabular}[c]{@{}c@{}}52.23\\ \textbf{46.59}\end{tabular}   & \begin{tabular}[c]{@{}c@{}}56.37\\ \textbf{40.27}\end{tabular}   \\ \hline
Hoper       & \begin{tabular}[c]{@{}l@{}}SA\\ MSCSA\end{tabular} & \begin{tabular}[c]{@{}c@{}}190.31\\ \textbf{118.61}\end{tabular} & \begin{tabular}[c]{@{}c@{}}74.42\\ \textbf{68.72}\end{tabular} & \begin{tabular}[c]{@{}c@{}}154.43\\ \textbf{138.31}\end{tabular} & \begin{tabular}[c]{@{}c@{}}\textbf{101.45}\\ 108.52\end{tabular} \\ \hline
Humanoid    & \begin{tabular}[c]{@{}l@{}}SA\\ MSCSA\end{tabular} & \begin{tabular}[c]{@{}c@{}}26.43\\ \textbf{5.65}\end{tabular}  & \begin{tabular}[c]{@{}c@{}}13.47\\ \textbf{4.65}\end{tabular}  & \begin{tabular}[c]{@{}c@{}}37.82\\ \textbf{28.31}\end{tabular}   & \begin{tabular}[c]{@{}c@{}}41.37\\ \textbf{30.64}\end{tabular}   \\ \hline
Pendulum    & \begin{tabular}[c]{@{}l@{}}SA\\ MSCSA\end{tabular} & \begin{tabular}[c]{@{}c@{}}40.62\\ \textbf{36.51}\end{tabular} & \begin{tabular}[c]{@{}c@{}}45.34\\ \textbf{30.26}\end{tabular} & \begin{tabular}[c]{@{}c@{}}80.61\\ \textbf{78.34}\end{tabular}   & \begin{tabular}[c]{@{}c@{}}82.42\\ \textbf{68.35}\end{tabular}   \\ \hline
Walker      & \begin{tabular}[c]{@{}l@{}}SA\\ MSCSA\end{tabular} & \begin{tabular}[c]{@{}c@{}}85.34\\ \textbf{60.61}\end{tabular} & \begin{tabular}[c]{@{}c@{}}35.36\\ \textbf{20.42}\end{tabular} & \begin{tabular}[c]{@{}c@{}}175.64\\ \textbf{128.43}\end{tabular} & \begin{tabular}[c]{@{}c@{}}\textbf{105.52}\\ 108.36\end{tabular} \\ \hline
\end{tabular}
\label{tab:mscas_per}
\end{table}

\subsection{Additional ablation results}
\label{sup:ab_result}
\noindent \textbf{MSCSA performance}.
In this section, we compare the performance of Adversarial Attack Training (AAT) using MSCSA and traditional self-attention (SA) in the DeepMind Control Suite, aiming to analyze their effectiveness in generating adversarial perturbations. 
To evaluate their adversarial perturbation generation capabilities, we conduct experiments on D4PG and A3C agents across 10 random seeds, with detailed results summarized in Table~\ref{tab:mscas_per}.
The experimental results demonstrate that MSCSA outperforms SA in both white-box and black-box scenarios.
This discrepancy can be attributed to fundamental differences in attention mechanisms: while SA primarily captures global dependencies across temporal sequences, Multi-scale architecture of MSCSA enables simultaneous processing of both short-term and long-term context. 
Consequently, SA-based perturbations tend to overemphasize historical patterns at the expense of current state characteristics, leading to suboptimal attack efficacy at critical decision points.

\begin{table*}[t]
\centering
\caption{Ablation on different window growth strategies and initial window sizes $Len$. The experimental results are the average of 10 runs using D4PG as the target algorithm in the Hopper environment.}
\label{tab:ablation_window}
\begin{tabular}{lccccp{5.3cm}}
\toprule
\textbf{Strategy} & $\boldsymbol{Len}$ & \textbf{Window Seq.} & \textbf{Reward} & \textbf{Cost} (ms) & \textbf{Notes} \\
\midrule
Exponential       & 5      & [5, 10, 20]     & 118.3  & 125       & Best trade-off of local and global modeling \\
Exponential       & 8      & [8, 16, 32]    & 108.7  & 356         & High cost with little gain \\
Linear            & 5      & [5, 10, 15]     & 235.6  & 119       & Less effective in modeling long-term context \\
Linear            & 8      & [8, 12, 16]    & 209.3  & 298         & Overhead increases without performance gain \\
Fixed             & 5      & [5, 5, 5]      & 583.1  & 89          & Poor modeling capacity, lacks multi-scale \\
Exponential       & 2      & [2, 4, 8]      & 629.8  & 90   & Efficient but may underutilize long-term info \\
\bottomrule
\end{tabular}
\end{table*}

\begin{figure*}[t]
  \centering
  \subfigure{\includegraphics[width=0.24\textwidth]{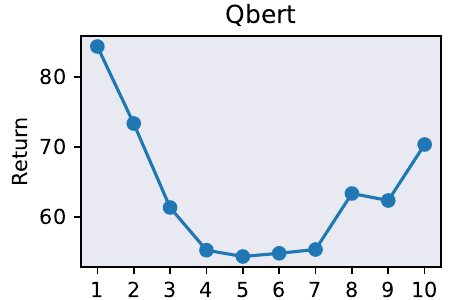}}
  \subfigure{\includegraphics[width=0.24\textwidth]{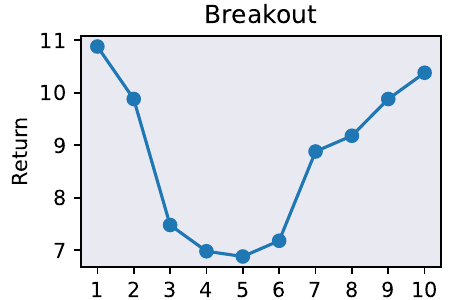}}
  \subfigure{\includegraphics[width=0.24\textwidth]{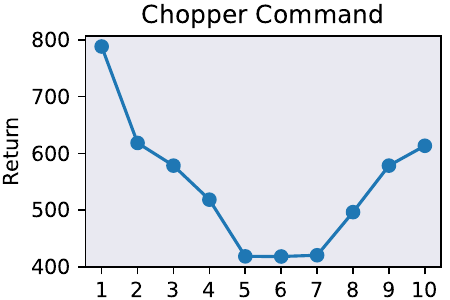}}
  \subfigure{\includegraphics[width=0.24\textwidth]{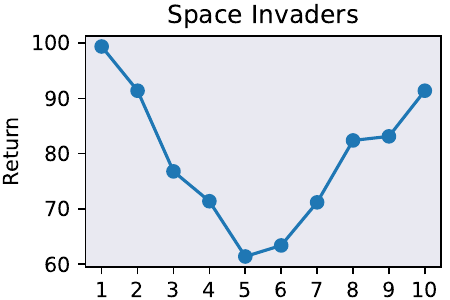}}
  \subfigure{\includegraphics[width=0.24\textwidth]{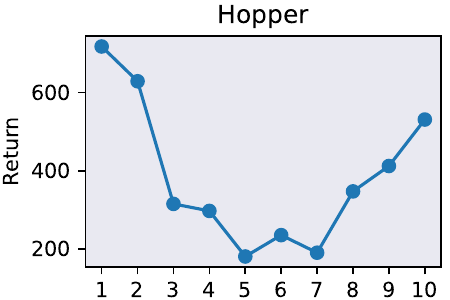}}
  \subfigure{\includegraphics[width=0.24\textwidth]{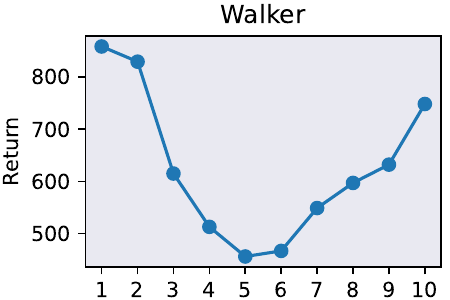}}
   \subfigure{\includegraphics[width=0.24\textwidth]{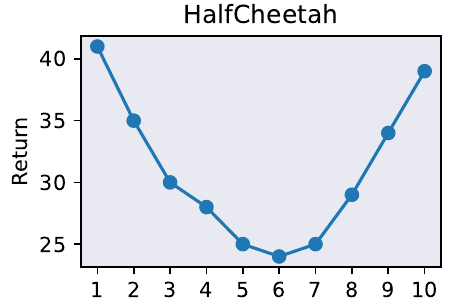}}
    \subfigure{\includegraphics[width=0.24\textwidth]{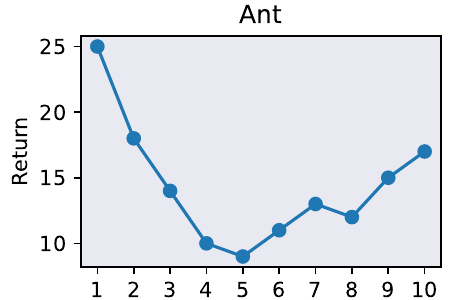}}
  \caption{The impact of different window lengths on AAT performance. We adopt PPO as the target policy in the Atari and employ D4PG as the target policy in MuJoCo.}
  \label{fig:len_aboty}
\end{figure*}

\begin{figure*}[t]
  \centering
  \subfigure{\includegraphics[width=0.24\textwidth]{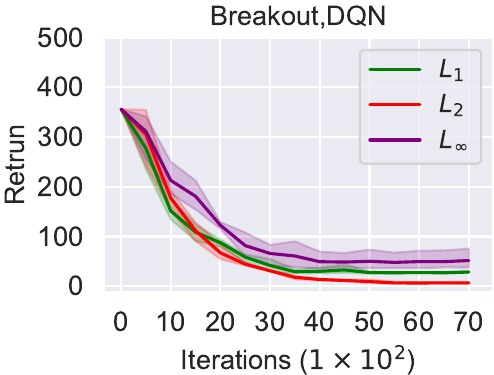}}
  \subfigure{\includegraphics[width=0.24\textwidth]{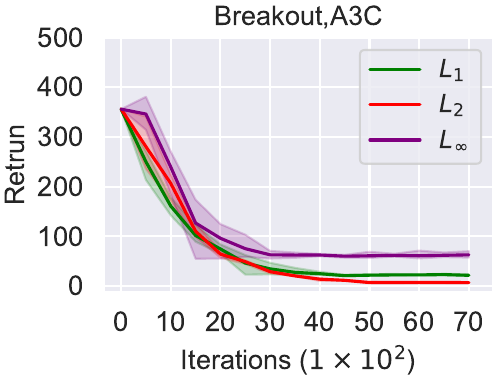}}
  \subfigure{\includegraphics[width=0.24\textwidth]{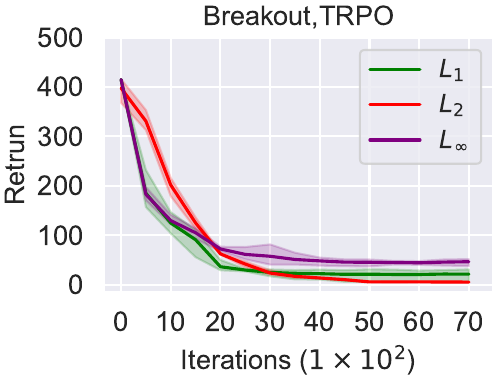}}
  \subfigure{\includegraphics[width=0.24\textwidth]{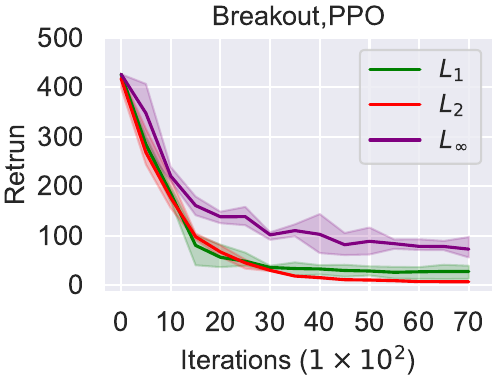}}
  \subfigure{\includegraphics[width=0.24\textwidth]{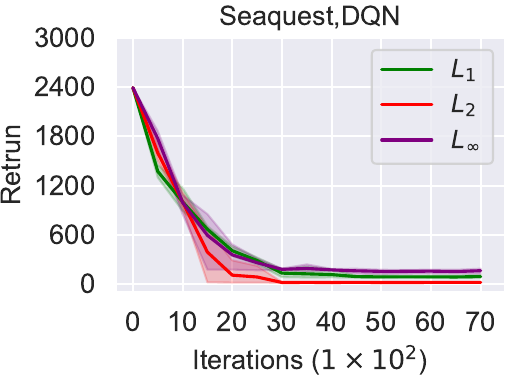}}
  \subfigure{\includegraphics[width=0.24\textwidth]{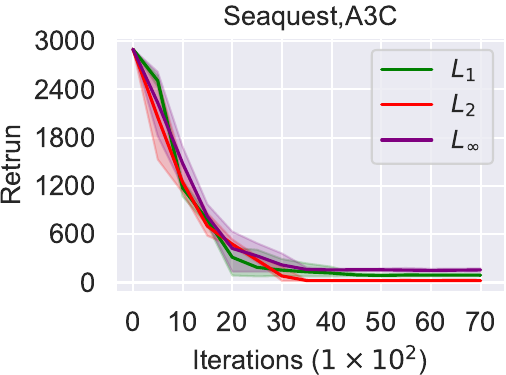}}
   \subfigure{\includegraphics[width=0.24\textwidth]{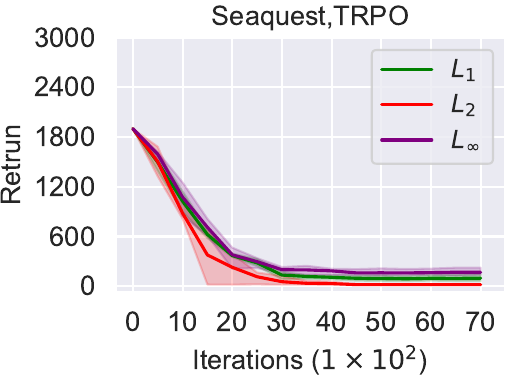}}
    \subfigure{\includegraphics[width=0.24\textwidth]{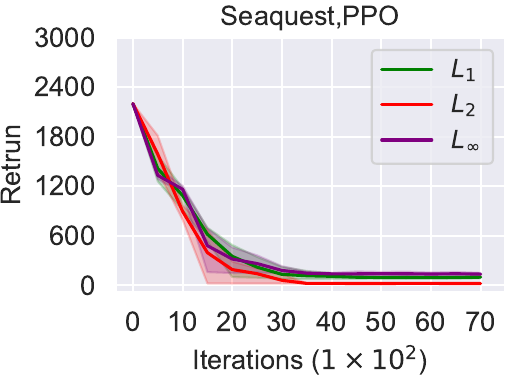}}
  \subfigure{\includegraphics[width=0.24\textwidth]{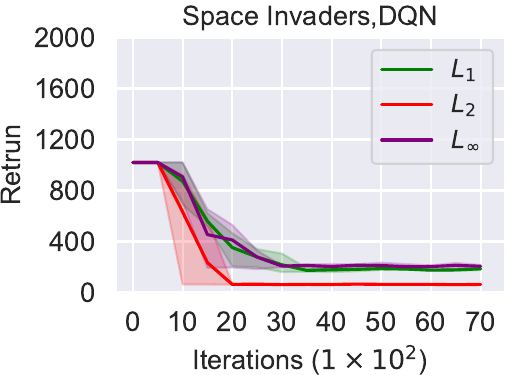}}
   \subfigure{\includegraphics[width=0.24\textwidth]{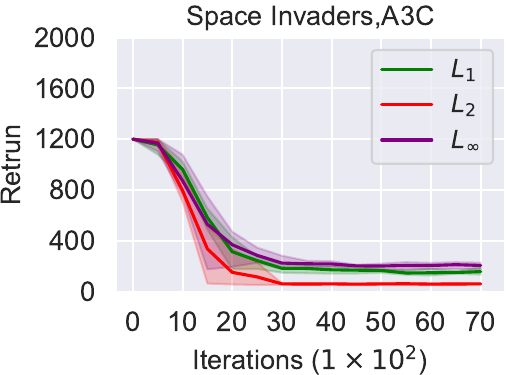}}
   \subfigure{\includegraphics[width=0.24\textwidth]{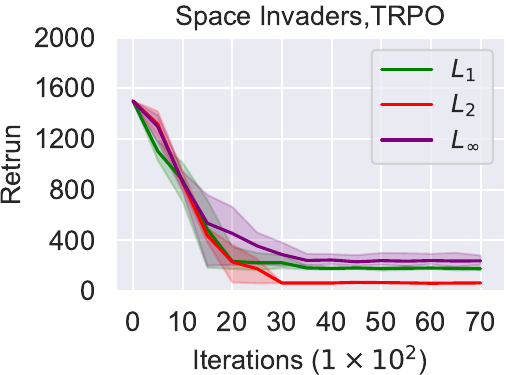}}
  \subfigure{\includegraphics[width=0.24\textwidth]{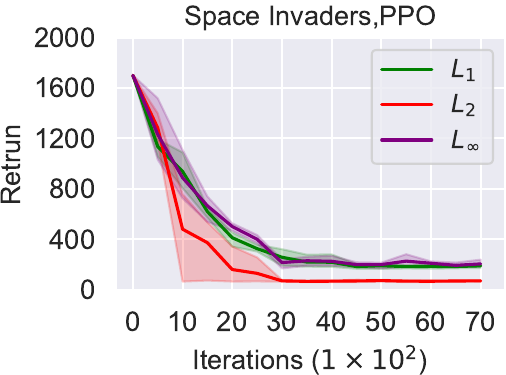}}
  \caption{AAT uses $L_1$, $L_2$, and $L_\infty$ losses to affect different target policies in different environments during the learning process.}
  \label{fig:normal_image}
\end{figure*}

\begin{table*}[h]
\centering
\caption{Advantageous ablation results.}
\begin{tabular}{ll|llll|llll}
\hline
                &                                                         & \multicolumn{4}{c|}{White-box}                                                                                                                                                                                                                                                                                         & \multicolumn{4}{c}{Black-box}                                                                                                                                                                                                                                                                                         \\ \hline
Environment     & Method                                                  & \multicolumn{1}{c}{DQN}                                                     & \multicolumn{1}{c}{A3C}                                                     & \multicolumn{1}{c}{TRPO}                                                    & \multicolumn{1}{c|}{PPO}                                                     & \multicolumn{1}{c}{DQN}                                                     & \multicolumn{1}{c}{A3C}                                                     & \multicolumn{1}{c}{TRPO}                                                    & \multicolumn{1}{c}{PPO}                                                     \\ \hline
Pong            & \begin{tabular}[c]{@{}l@{}}AAT-GA\\ AAT-WA\end{tabular} & \multicolumn{1}{c}{\begin{tabular}[c]{@{}c@{}}-19.83\\ -\textbf{20.96}\end{tabular}} & \multicolumn{1}{c}{\begin{tabular}[c]{@{}c@{}}-18.18\\ -\textbf{20.61}\end{tabular}} & \multicolumn{1}{c}{\begin{tabular}[c]{@{}c@{}}-\textbf{19.98}\\ -19.97\end{tabular}} & \multicolumn{1}{c|}{\begin{tabular}[c]{@{}c@{}}-19.25\\ -\textbf{19.87}\end{tabular}} & \multicolumn{1}{c}{\begin{tabular}[c]{@{}c@{}}-17.26\\ -\textbf{18.34}\end{tabular}} & \multicolumn{1}{c}{\begin{tabular}[c]{@{}c@{}}-17.31\\ -\textbf{18.45}\end{tabular}} & \multicolumn{1}{c}{\begin{tabular}[c]{@{}c@{}}-15.92\\ -\textbf{16.06}\end{tabular}} & \multicolumn{1}{c}{\begin{tabular}[c]{@{}c@{}}-17.61\\ -\textbf{18.98}\end{tabular}} \\ \hline
Chopper Command & \begin{tabular}[c]{@{}l@{}}AAT-GA\\ AAT-WA\end{tabular} & \begin{tabular}[c]{@{}l@{}}\textbf{400.31}\\ 408.68\end{tabular}                     & \begin{tabular}[c]{@{}l@{}}567.33\\ \textbf{410.21}\end{tabular}                     & \begin{tabular}[c]{@{}l@{}}435.65\\ \textbf{420.98}\end{tabular}                     & \begin{tabular}[c]{@{}l@{}}420.68\\ \textbf{418.36}\end{tabular}                      & \begin{tabular}[c]{@{}l@{}}693.27\\ \textbf{568.56}\end{tabular}                     & \begin{tabular}[c]{@{}l@{}}784.56\\ \textbf{628.54}\end{tabular}                     & \begin{tabular}[c]{@{}l@{}}610.32\\ \textbf{609.88}\end{tabular}                     & \begin{tabular}[c]{@{}l@{}}642.55\\ \textbf{635.36}\end{tabular}                     \\ \hline
Seaquest        & \begin{tabular}[c]{@{}l@{}}AAT-GA\\ AAT-WA\end{tabular} & \begin{tabular}[c]{@{}l@{}}19.32\\ \textbf{18.34}\end{tabular}                       & \begin{tabular}[c]{@{}l@{}}\textbf{20.11}\\ 20.22\end{tabular}                       & \begin{tabular}[c]{@{}l@{}}21.52\\ \textbf{19.84}\end{tabular}                       & \begin{tabular}[c]{@{}l@{}}23.01\\ \textbf{21.42}\end{tabular}                        & \begin{tabular}[c]{@{}l@{}}90.23\\ \textbf{80.45}\end{tabular}                       & \begin{tabular}[c]{@{}l@{}}84.51\\ \textbf{70.89}\end{tabular}                       & \begin{tabular}[c]{@{}l@{}}79.46\\ \textbf{60.42}\end{tabular}                       & \begin{tabular}[c]{@{}l@{}}79.87\\ \textbf{73.58}\end{tabular}                       \\ \hline
Qbert           & \begin{tabular}[c]{@{}l@{}}AAT-GA\\ AAT-WA\end{tabular} & \begin{tabular}[c]{@{}l@{}}69.35\\ \textbf{50.56}\end{tabular}                       & \begin{tabular}[c]{@{}l@{}}\textbf{60.03}\\ 60.22\end{tabular}                       & \begin{tabular}[c]{@{}l@{}}65.46\\ \textbf{58.42}\end{tabular}                       & \begin{tabular}[c]{@{}l@{}}71.21\\ \textbf{54.32}\end{tabular}                        & \begin{tabular}[c]{@{}l@{}}635.67\\ \textbf{598.55}\end{tabular}                     & \begin{tabular}[c]{@{}l@{}}535.34\\ \textbf{478.35}\end{tabular}                     & \begin{tabular}[c]{@{}l@{}}635.78\\ \textbf{483.21}\end{tabular}                     & \begin{tabular}[c]{@{}l@{}}638.42\\ \textbf{556.34}\end{tabular}                     \\ \hline
Space Invaders  & \begin{tabular}[c]{@{}l@{}}AAT-GA\\ AAT-WA\end{tabular} & \begin{tabular}[c]{@{}l@{}}70.23\\ \textbf{59.59}\end{tabular}                       & \begin{tabular}[c]{@{}l@{}}63.46\\ \textbf{51.26}\end{tabular}                       & \begin{tabular}[c]{@{}l@{}}68.35\\ \textbf{60.35}\end{tabular}                       & \begin{tabular}[c]{@{}l@{}}73.21\\ \textbf{61.38}\end{tabular}                        & \begin{tabular}[c]{@{}l@{}}110.42\\ \textbf{103.66}\end{tabular}                     & \begin{tabular}[c]{@{}l@{}}134.58\\ \textbf{126.31}\end{tabular}                     & \begin{tabular}[c]{@{}l@{}}124.68\\ \textbf{109.84}\end{tabular}                     & \begin{tabular}[c]{@{}l@{}}136.27\\ \textbf{118.28}\end{tabular}                     \\ \hline
Breakout        & \begin{tabular}[c]{@{}l@{}}AAT-GA\\ AAT-WA\end{tabular} & \begin{tabular}[c]{@{}l@{}}8.97\\ \textbf{6.56}\end{tabular}                         & \begin{tabular}[c]{@{}l@{}}8.79\\ \textbf{8.21}\end{tabular}                         & \begin{tabular}[c]{@{}l@{}}6.53\\ \textbf{5.98}\end{tabular}                         & \begin{tabular}[c]{@{}l@{}}9.32\\ \textbf{6.88}\end{tabular}                          & \begin{tabular}[c]{@{}l@{}}35.32\\ \textbf{28.32}\end{tabular}                       & \begin{tabular}[c]{@{}l@{}}25.67\\ \textbf{20.36}\end{tabular}                       & \begin{tabular}[c]{@{}l@{}}36.77\\ \textbf{22.22}\end{tabular}                       & \begin{tabular}[c]{@{}l@{}}33.67\\ \textbf{20.21}\end{tabular}                       \\ \hline
\end{tabular}
\label{tab:advantage_ady}
\end{table*}

$\quad$
\\
\textbf{Ablation on window initialization and growth strategy.}
To assess the impact of temporal window design in MSCSA, we conduct ablations over both the growth strategy and the initial window size $Len$. As shown in Table~\ref{tab:ablation_window}, exponential growth with $r=2$ and $Len=5$ achieves the best average reward while maintaining moderate computational cost. Using larger initial windows (e.g., $Len=8$) increases inference cost without notable performance gains, while smaller values (e.g., $Len=2$) are efficient but may underrepresent longer dependencies. Linear growth performs competitively but is slightly inferior in both performance and scale diversity. Fixed-length windows degrade significantly, highlighting the importance of multi-scale temporal abstraction. These results validate the choice of geometric scaling from a moderate base window as a robust design for balancing short- and long-term temporal modeling.

To evaluate the sensitivity of MSCSA to the initial window size $Len$, we linearly scan $Len$ from 1 to 10 (with a step size of 1).
Keeping the growth policy and the scale factor K constant, we perform repeated runs with 10 different random seeds for each $Len$ and record the impact on the performance of AAT.
As shown in Fig.~\ref{fig:len_aboty}, in most environments, setting the window size to 5 achieves better attack performance in both Atari and MuJoCo environments.
However, the window size can be set to a larger length to achieve higher attack performance in some environments, such as in the Chopper Command and HalfCheetah environments.

$\quad$
\\
\noindent\textbf{Advantage improvement.}To validate the efficacy of using the advantage function, we carry out ablation studies in six different settings.
During the training of AAT, we evaluate the attack outcomes every 1,000 iterations.
Fig.~\ref{fig:advantage_image_sup} illustrates the attack performance of the AAT method in the Pong,  Chopper Command and environments.
Here, AAT-R refers to the method incorporating future reward sum learning, while AAT-A represents the approach integrating advantage function learning.
Overall, AAT-R and AAT-A demonstrate differences in attack effectiveness and convergence speed across various target policies within identical environments.
Before $10 \times 10^2$ iterations, the effects of AAT-R and AAT-A attacks closely resemble each other in the Breakout environment. This similarity can be primarily attributed to the model being in its early learning stages.
As the number of iterations increases, the attack effectiveness of AAT-A begins to surpass that of AAT-R, eventually converging around $40 \times 10^2$.
In the Pong and Chopper Command environments, AAT-A outperforms AAT-R in terms of overall attack effectiveness, which demonstrates the feasibility of using advantage functions to achieve higher attack effectiveness.

$\quad$
\\
\textbf{Constraint investigation.}
we study the impact of employing different constraint normal in Eq.4 on the outcomes of the attacks.
Specifically, $L_1$, $L_2$, and $L_\infty$ are utilized to compute the difference between adversarial examples and original states, and they serve as the loss function for the AAT method.
Fig.\ref{fig:normal_image} depicts the attack outcomes under various L-norms.
In the current context, $L_2$ is preferable over $L_1$ and $L_\infty$ as the constraint norm for AAT.

$\quad$
\\
\noindent\textbf{Weighted advantage performance.} In order to verify the impact of the weighted advantage and the advantage calculated by the Q and V functions (i.e., ordinary advantage) on the performance of AAT, we test the attack performance of AAT with different advantage learning methods in Atari.
Table 3 provides the comparison results in white-box and black-box scenarios, where AAT- GA indicates that AAT uses ordinary advantage and AAT-WA indicates that AAT uses weighted advantage.
The experimental results show that the weighted advantage is generally superior to the ordinary advantage, especially achieving a comprehensive surpass in black-box scenarios.
This phenomenon occurs because weighted advantages can confine the advantages of unseen states within a certain range, which ensures that the performance will not significantly degrade due to overestimation of the state advantages.

\subsection{AAT generalization}
\label{sup:genera_multi}
To verify the generalization capability of AAT, we conduct experiments in cooperative multi-agent environments based on the experimental setup described in FGSMLW~\cite{DBLP:LinDZLP20}.       
Our evaluation employs QMIX as the target policy and uses the 2s3z map in the StarCraft Multi-Agent Challenge (SMAC).
In this scenario, each cooperative multi-agent team comprises five units: two Stalkers and three Zealots.
Within this framework, the attacker targets a single Stalker as the victim agent, perturbing its observations with the objective of degrading the team's overall cooperative performance.

\begin{figure*}[t]
  \centering
  \subfigure{\includegraphics[width=0.45\textwidth]{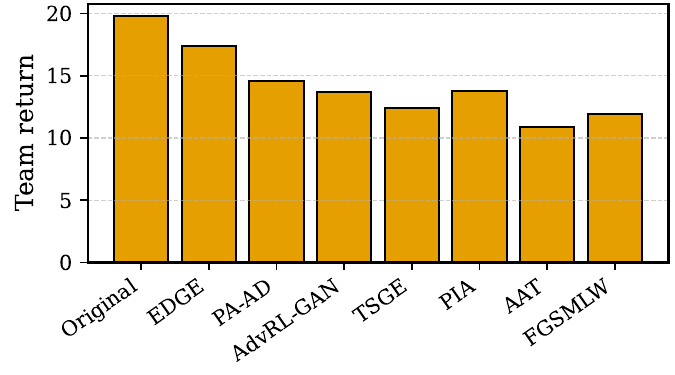}}
  \subfigure{\includegraphics[width=0.45\textwidth]{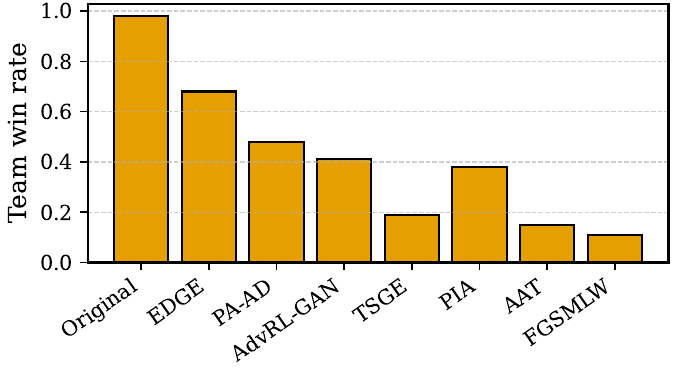}}
  \caption{The attack outcome of AAT in a multi-agent setting. The experimental results are the average of 10 random seeds.}
  \label{fig:multi_agent}
\end{figure*}

Fig.~\ref{fig:multi_agent} illustrates the attack results of different methods, where Original indicates that QMIX is not under attack. 
The results show that AAT significantly reduces both the team reward and win rate more effectively than a range of baseline methods, including EDGE, PA-AD, AdvRL-GAN, TSGE, and PIA.
Furthermore, in terms of reward reduction, AAT even surpasses the FGSMLW method, which specializes in attacking c-MARL algorithms.

\end{document}